\pgfplotsset{compat=1.15}   
\newcommand{\spara}[1]{\smallskip\noindent\textbf{#1}}
\newcommand{\epara}[1]{\smallskip\noindent\emph{#1}}
\newenvironment{squishlist}
{\begin{list}{$\bullet$}
 {\setlength{\itemsep}{0pt}
     \setlength{\parsep}{3pt}
     \setlength{\topsep}{3pt}
     \setlength{\partopsep}{0pt}
     \setlength{\leftmargin}{1.5em}
     \setlength{\labelwidth}{1em}
     \setlength{\labelsep}{0.5em} } }
{\end{list}}
\definecolor{mypurple}{RGB}{254, 68, 218}
\definecolor{myred}{HTML}{E13D66}
\definecolor{mycyan}{HTML}{70D7D0}
\definecolor{mylightblue}{HTML}{2274A5}
\definecolor{mydarkblue}{HTML}{0C0A3E}
\newcommand{\myred}[1]{\textcolor{myred}{#1}}
\newcommand{\mycyan}[1]{\textcolor{mycyan}{#1}}
\newcommand{\mylightblue}[1]{\textcolor{mylightblue}{#1}}
\newcommand{\mydarkblue}[1]{\textcolor{mydarkblue}{#1}}
\newenvironment{restated}[2]{%
  \par\medskip
  \noindent\textbf{#1~\ref{#2} (restated).}\itshape\ %
}{%
  \par\medskip
}
\crefname{theorem}{Thm.}{Thms.}
\crefname{proposition}{Prop.}{Props.}
\crefname{lemma}{lem.}{lems.}
\crefname{corollary}{Cor.}{Cors.}
\crefname{definition}{Def.}{Defs.}
\crefname{section}{Sec.}{Secs.}
\crefname{figure}{Fig.}{Figs.}
\crefname{problem}{Prob.}{Probs.}
\crefname{appendix}{App.}{Apps.}
\crefname{equation}{Eq.}{Eqs.}
\crefname{algorithm}{Alg.}{Algs.}
\crefname{table}{Tab.}{Tabs.}
\newcommand{\nb}{~}
\DeclareMathOperator*{\argmin}{arg\,min}
\newcommand{\at}[2][]{#1|_{#2}}
\newcommand{\cc}[3]{\ensuremath{cc_{#1}\left(#2,#3\right)}\xspace}
\newcommand{\frob}[1]{\ensuremath{\norm{#1}_{\mathrm{F}}}\xspace}
\newcommand{\measure}{\ensuremath{\chi}\xspace}
\newcommand{\myvec}[1]{\ensuremath{\mathrm{vec}\left(#1\right)}\xspace}
\newcommand{\prox}{\ensuremath{\mathrm{prox}}\xspace}
\newcommand{\spectral}{\textsc{spectral}\xspace}
\newcommand{\mcalsep}{\textsc{mixture-calsep}\xspace}
\newcommand{\ntrials}{\textsf{\small{ntrials}}\xspace}
\newcommand{\im}[1]{\ensuremath{\mathrm{Im}\left(#1\right)}\xspace}
\newcommand{\spn}[1]{\ensuremath{\mathrm{span}\left(#1\right)}\xspace}
\newcommand{\supp}{\ensuremath{\mathrm{supp}}\xspace}
\newcommand{\Wtwo}{\ensuremath{\mathrm{W}_2}\xspace}
\newcommand{\MPW}{\ensuremath{\mathrm{MPW}_2}\xspace}
\newcommand{\CSprob}{\ensuremath{\mathsf{CSprob}}\xspace}
\newcommand{\faceincidenceposet}{\ensuremath{(\Pi,\trianglelefteq)}\xspace}
\newcommand{\Index}{\ensuremath{\mathsf{Ind}}\xspace}
\newcommand{\Prob}{\ensuremath{\mathsf{Prob}}\xspace}
\newcommand{\SCMcat}{\ensuremath{\mathsf{SCM}(\Index,\Prob)}\xspace}
\newcommand{\MCMcat}{\ensuremath{\mathsf{MCM}(\Index,\CSprob)}\xspace}
\newcommand{\catidentity}[1]{\ensuremath{\mathrm{id}_{#1}}\xspace}
\newcommand{\Poset}[1]{\ensuremath{({#1}, \leqslant)}\xspace}
\newcommand{\pd}{\ensuremath{\mathcal{S}_{++}}\xspace}
\newcommand{\reall}{\ensuremath{\mathbb{R}}\xspace}
\newcommand{\scmcollection}{\ensuremath{\mathcal{K}}\xspace}
\newcommand{\mcmcollection}{\ensuremath{\mathcal{K}}\xspace}
\newcommand{\starr}[1]{\ensuremath{\star\left({#1}\right)}\xspace}
\newcommand{\cochainspace}[3]{\ensuremath{\mathcal{C}^{#1}\left({#2}; {#3}\right)}\xspace}
\newcommand{\gsspace}[2]{\ensuremath{\Gamma\left({#1}; {#2}\right)}\xspace}
\newcommand{\edgeset}{\ensuremath{\mathcal{E}}\xspace}
\newcommand{\graph}{\ensuremath{\mathcal{G}}\xspace}
\newcommand{\parents}{\ensuremath{\mathcal{P}}\xspace}
\newcommand{\vertexset}{\ensuremath{\mathcal{V}}\xspace}
\newcommand{\stiefel}[2]{\ensuremath{\mathrm{St}({#1},{#2})}\xspace}
\newcommand{\simplex}[1]{\ensuremath{\Delta_{#1}}\xspace}
\newcommand{\ones}{\ensuremath{\boldsymbol{1}}\xspace}
\newcommand{\zeros}{\ensuremath{\boldsymbol{0}}\xspace}
\newcommand{\veca}{\ensuremath{\mathbf{a}}\xspace}
\newcommand{\vecb}{\ensuremath{\mathbf{b}}\xspace}
\newcommand{\vecc}{\ensuremath{\mathbf{c}}\xspace}
\newcommand{\x}{\ensuremath{\mathbf{x}}\xspace}
\newcommand{\y}{\ensuremath{\mathbf{y}}\xspace}
\newcommand{\w}{\ensuremath{\mathbf{w}}\xspace}
\newcommand{\vecu}{\ensuremath{\mathbf{u}}\xspace}
\newcommand{\vecv}{\ensuremath{\mathbf{v}}\xspace}
\newcommand{\A}{\ensuremath{\mathbf{A}}\xspace}
\newcommand{\B}{\ensuremath{\mathbf{B}}\xspace}
\newcommand{\C}{\ensuremath{\mathbf{C}}\xspace}
\newcommand{\identity}{\ensuremath{\mathbf{I}}\xspace}
\newcommand{\eye}[1]{\ensuremath{\identity_{#1}}\xspace}
\newcommand{\M}{\ensuremath{\mathbf{M}}\xspace}
\newcommand{\myP}{\ensuremath{\mathbf{P}}\xspace}
\newcommand{\R}{\ensuremath{\mathbf{R}}\xspace}
\newcommand{\U}{\ensuremath{\mathbf{U}}\xspace}
\newcommand{\V}{\ensuremath{\mathbf{V}}\xspace}
\newcommand{\Vhat}{\ensuremath{\widehat{\mathbf{V}}}\xspace}
\newcommand{\X}{\ensuremath{\mathbf{X}}\xspace}
\newcommand{\Y}{\ensuremath{\mathbf{Y}}\xspace}
\newcommand{\Z}{\ensuremath{\mathbf{Z}}\xspace}
\newcommand{\blockM}{\ensuremath{\mathbb{M}}\xspace}
\newcommand{\blockMl}{\ensuremath{\mathbb{M}^\ell}\xspace}
\newcommand{\BPsi}{\ensuremath{\boldsymbol{\Psi}}\xspace}
\newcommand{\BUpsilon}{\ensuremath{\boldsymbol{\Upsilon}}\xspace}
\newcommand{\covM}{\ensuremath{\boldsymbol{\Sigma}}\xspace}
\newcommand{\myendogenous}{\ensuremath{\mathcal{X}}\xspace}
\newcommand{\myendogenousvals}{\ensuremath{\mathcal{V}}\xspace}
\newcommand{\myexogenous}{\ensuremath{\mathcal{Z}}\xspace}
\newcommand{\myexogenousvals}{\ensuremath{\mathcal{U}}\xspace}
\newcommand{\myfunctional}{\ensuremath{\mathcal{F}}\xspace}
\newcommand{\scm}[1]{\ensuremath{\mathsf{M}^{#1}}\xspace}
\newcommand{\mcm}[1]{\ensuremath{\mathsf{MCM}^{#1}}\xspace}
\newcommand{\abst}{\ensuremath{\boldsymbol{\alpha}}\xspace}
\newcommand{\covhigh}{\ensuremath{\boldsymbol{\Sigma}^{h}}\xspace}
\newcommand{\covlow}{\ensuremath{\boldsymbol{\Sigma}^{\ell}}\xspace}
\newcommand{\measurehigh}{\ensuremath{\chi^{h}}\xspace}
\newcommand{\measurelow}{\ensuremath{\chi^{\ell}}\xspace}
\newcommand{\scmhigh}{\ensuremath{\mathsf{M}^h}\xspace}
\newcommand{\scmlow}{\ensuremath{\mathsf{M}^\ell}\xspace}
\newcommand{\CANA}{\ensuremath{\mathbb{A}}\xspace}
\newcommand{\CANB}{\ensuremath{\mathbb{B}}\xspace}
\newcommand{\CAND}{\ensuremath{\mathbb{D}}\xspace}
\newcommand{\CANL}{\ensuremath{\mathbb{L}}\xspace}
\newcommand{\CAN}{\ensuremath{\mathbb{G}}\xspace}
\newcommand{\CANAl}{\ensuremath{\mathbb{A}^\ell}\xspace}
\newcommand{\CANDl}{\ensuremath{\mathbb{D}^\ell}\xspace}
\newcommand{\CANLl}{\ensuremath{\mathbb{L}^\ell}\xspace}
\newcommand{\GMM}[2]{\ensuremath{\mathrm{GM}_{#1}\left(\reall^{#2}\right)}\xspace}
\begin{document}

\title{Networks of Causal Abstractions:\\A Sheaf-theoretic Framework}

\author{\name Gabriele D'Acunto \email gabriele.dacunto@uniroma1.it \\
\addr  DIET\\ 
Sapienza University of Rome\\
00184 Rome, Italy
\AND
\name Paolo Di Lorenzo \email paolo.dilorenzo@uniroma1.it \\
\addr  DIET\\ 
Sapienza University of Rome\\
00184 Rome, Italy
\AND
\name Sergio Barbarossa \email sergio.barbarossa@uniroma1.it \\
\addr  DIET\\ 
Sapienza University of Rome\\
00184 Rome, Italy}

\editor{My (mean) editor}

\maketitle

\begin{abstract}
A core challenge in causal artificial intelligence is the principled coordination of multiple, imperfect, and subjective causal perspectives arising from distributed agents with limited and heterogeneous access to the environment. 
This problem has received little formal treatment, as the existing framework assumes a single shared global causal model. 
This work introduces the causal abstraction network (CAN), a general sheaf-theoretic framework for representing, learning, and reasoning across collections of mixture of causal models (MCMs)--a class that unifies several existing models of context-dependent causal mechanisms. 
Sheaf theory provides a natural foundation for this task, offering a rigorous framework to coherently align distributed causal knowledge without requiring explicit causal graphs, functional mechanisms, interventional data, or jointly sampled observations.

At the theoretical level, we provide a categorical formulation of MCMs and characterize key properties of CANs, including consistency and smoothness. 
Under consistency, we establish necessary and sufficient conditions: \emph{(i)} for the existence of global sections, linked to spectral properties of an associated connection Laplacian; and \emph{(ii)} for the convergence of causal knowledge diffusion over the CAN to the space of global sections. 
At the methodological level, we exploit the compositionality of causal abstractions to decompose the learning of consistent CANs into local problems on network edges, extending our prior work on Gaussian variables to Gaussian mixtures via the proposed \mcalsep algorithm. 
We validate the framework on synthetic data and through a financial application involving a multi-agent trading system, demonstrating CAN recovery, CAN-based portfolio optimization, and counterfactual reasoning.
\end{abstract}

\begin{keywords}
  Causal abstraction, mixture causal models, network sheaves, sheaf Laplacian, structural causal models.
\end{keywords}

\clearpage

\section{Introduction}\label{sec:introduction}
\emph{Causal AI} \citep{b2025causal} is an emerging paradigm aiming at explainability, trustworthiness, and robustness to domain and distribution shifts.
Unlike predictive AI--which makes decisions based solely on observed associations--causal AI aims to intervene, reason, and simulate alternative scenarios based on cause and effect. 
For instance, a healthcare AI recommending treatments can go beyond \enquote{patients with symptoms X often respond well to drug Y} \emph{(predictive)} and instead evaluate \enquote{if we administer drug Y to this specific patient, what is the expected causal effect on recovery, and what if we had chosen drug Z instead?} \emph{(causal)}. 
At the heart of causal AI lies the Structural Causal Modeling (SCM,\nb\citealp{pearl2009causality}) framework, in which causal variables are described in terms of \emph{causal mechanisms}, the incoming causal relations from other variables.
SCMs enable the three levels of Pearl's Causal Hierarchy (PCH, \citealp{pearl2018book}). 
At the first level (L1: \emph{seeing}), an AI agent--regarded as any natural or artificial entity that can interact with an environment--passively observes data and learns statistical associations, without true causal reasoning. 
For example, it may learn that \enquote{smokers are more likely to develop lung disease,} but without distinguishing correlation from causation. 
At the second level (L2: \emph{doing}), the agent actively intervenes on the SCM, much like a child learning through trial and error, to uncover cause-effect relations. 
For instance, it may simulate \enquote{what happens if I ban smoking?} and observe downstream changes. 
Finally, the third level (L3: \emph{imagining}), is counterfactual: the agent can retrospectively ask \enquote{what if I had acted differently?}--for example, \enquote{if this patient had received a different treatment, would they have recovered faster?} 
An SCM thus induces three families of distributions, i.e. observational at L1, interventional at L2, and counterfactual at L3, which we will collectively refer to as its \emph{causal knowledge} (CK).

In the canonical setting in causal AI, the environment is governed by a single ground-truth SCM \scm{*} \citep{b2025causal}. 
Each agent interacts with this environment at different levels of the PCH--by observing, intervening, and imagining through \scm{*}. 
In this framework, all other agents are irrelevant to this interaction, thus exerting no external influence. 
However, as humans we know that causality is not only about uncovering an objective truth, but--arguably mainly--about constructing subjective models of how the world works--models that are continuously shaped by social interactions. 
For example, individuals may refuse to participate in a randomized clinical trial not because of statistical evidence, but because relatives or friends experienced negative outcomes from similar treatments (spillover effect, \citealp{chernozhukov2024applied}). 
Also in application domains, such as signal processing, this classical assumption is unrealistic.
In this setting, agents are distributed computational entities equipped with local processing and communication capabilities, interacting over a network topology. Depending on the application, these agents may correspond to autonomous vehicles, robots, or wireless devices embedded in the infrastructure.
Each agent acquires measurements from the same system through its sensors--thus from its own perspective and at the relevant resolution,--processes signals locally, and exchanges information with other agents over the network.
These examples--and many others--motivate the investigation of principled way to compare, integrate, or abstract \emph{local} SCMs.

In this pursuit, and taking into account the \emph{collaborative} dimension central to agentic AI, \citet{d2025relativity} postulated \emph{\enquote{SCMs as subjective and imperfect representations of the world, that cannot be severed from the network of relations they are immersed in}}.
Such a shift opens to a novel \emph{relative framework of causal knowledge} where, unlike the canonical setting, agents cannot interact with the environment while detaching neither from their subjective SCM (the agent \emph{perspective}) nor from their relations represented as a network. 
Subjective CK can thus be altered not only by the environment but also through interactions with other agents. 
Ideally, this interplay benefits both the individual agent and the entire AI system. 
Additionally, in this relative framework, asking for a global ground-truth \scm{*} is ill-posed: causality is inherently relative, and global CK emerges only through the interaction and alignment of subjective CK across the network.
The central challenge then becomes how to compare, integrate, and abstract CK across agents in a principled manner, while preserving the causal semantics encoded in their local models.
Sheaf-theoretic approaches--namely network sheaves and cosheaves--offer a natural formal answer to this challenge. 
Intuitively, they can be thought as \emph{formal tools for gluing local pieces of information into a global picture}. 
Here, subjective CK corresponds to the local pieces, while consistency across overlaps ensures that a coherent global CK can emerge.

This novel family of network sheaves and cosheaves potentially opens to AI systems where \emph{global CK} emerges \emph{inductively} from the interaction of causal AI agents, while local CK at the agent level can, in turn, be \emph{deductively} shaped by the system.
Our work moves in this direction, investigating the theoretical, learning, and applied aspects of a specific instance of network sheaves and cosheaves of CK, termed \emph{causal abstraction network} (CAN).
Recently,\nb\citet{richens2024robust} and\nb\citet{richens2025general} showed that different agent skills, such as task and domain generalization, provably require different degrees of CK.
From this stance, CAN is particularly relevant not only to the local dimension of causal models, but also because the degree of CK can be modulated according to the agent’s goals within the causal abstraction framework.
Additionally, CAN is relevant to the field of \emph{mechanistic interpretability}, as recent evidence suggests that the internal states of LLMs may be better explained by combinations of related causal models\nb\citep{pislar25a}.

\spara{Roadmap.} 
\Cref{sec:related_works} situates our work within the literature by reviewing the main related works.
\Cref{sec:contributions} gives an overview of our contribution and
\Cref{sec:background} provides the essential background on the SCM framework, causal abstraction, network sheaves and cosheaves of CK, and Gaussian mixtures.
Next, \Cref{sec:categorical-CMM,sec:CAN} presents our theoretical contributions. 
Specifically, \Cref{sec:categorical-CMM} unifies the main existing frameworks of mixtures of causal models and provides a category-theoretic treatment showing that their associated category constitutes a natural extension of the category of SCMs introduced by \cite{d2025relativity}.
Then, \Cref{sec:CAN} starts with the category-theoretic formalization of CANs (\Cref{subsec:ct_formalization}) and its algebraic descriptors (\Cref{subsec:algebraic_descriptors}), then analyzes the sheaf-theoretic representation and operators for the CK transfer (\Cref{subsec:CAN_as_ns}), and finally investigates the theoretical properties of consistent CANs (\Cref{subsec:CAN_consistency,subsec:global_L,subsec:hyperparams,subsec:smoothness}).
\Cref{sec:learning_CAN} presents our methodological contribution. 
It first formalizes the problem of learning consistent CANs from a collection of probability distributions assumed to be smooth with respect to the latter, and then details the mathematical derivation of our solver (\Cref{subsec:search-proc,subsec:local-prob}).
Additionally, \Cref{sec:empirical_assessment} reports results on synthetic data for both CLCA and CAN learning (\Cref{subsec:clca-learning,subsec:can-learning}).
\Cref{sec:finappl} showcases our applied contribution in the financial domain, starting with the construction of a CAN-based trading system and then addressing CAN learning (\Cref{subsec:net_sheaf_learning_appl}) and a reasoning task (\Cref{subsec:counterfactual_reasoning_appl}).
\Cref{sec:discussion} discusses the salient features of the proposed framework as well as the limitations of our work.
Finally, \Cref{sec:conclusions} draws some conclusions and outlines future research directions.
\section{Related Works}\label{sec:related_works}

We build upon the general language of \emph{category theory} \citep{mac2013categories}, which is axiomatically centered on consistent mappings among objects of a given category; in our case, causal models.
We work in the context of \citep{d2025causal}, specifically without assuming: 
\emph{(i)} complete specification of the SCMs;
\emph{(ii)} knowledge of their causal graphs;
\emph{(iii)} their functional forms;
\emph{(iv)} availability of interventional data; or
\emph{(v)} jointly sampled observational data.
Accordingly, we extend the category-theoretic treatment of SCMs by \citet{d2025causal,d2025relativity}, a functorial representation that separates \emph{syntax} (i.e., structure) from \emph{semantics} (i.e., CK) of the SCM.
Alternative category-theoretic formalizations, inspired by the seminal work on categorical probability theory by \citet{fong2013causal}, leverage copy-discard and Markov categories to provide a string diagrammatic description of causal models \citep{rischel2020category,jacobs2021causal,fritz2023d,lorenz2023causal}.
These works are more tailored to deterministic causal models involving discrete distributions. Conversely, our work focuses on probabilistic causal models with continuous probability distributions.
More recently, \citet{mahadevan2025universal} introduced the topos causal model, which generalizes prior categorical approaches as well as the SCM \citep{pearl2009causality} and the potential outcome frameworks \citep{imbens2015causal}.

As in \citet{d2025relativity}, the causal abstraction framework is key to determine mappings between SCMs.
More in detail, we adopt the \emph{functional} \abst-abstraction framework \citep{rischel2020category,rischel2021compositional}, focusing on \emph{constructive linear causal abstractions} (CLCAs, \citealp{massidda2024learning}) and leverage the \emph{semantic embedding principle} (SEP, \citealp{d2025causal}).
Our choice is made according to the five \enquote{non-assumptions} above, characterizing our setting.
An alternative functional framework is the $(\tau,\omega)$-abstraction \citep{rubenstein2017causal,beckers2019abstracting,massidda2023causal}, which relies on \emph{(i)} a single function $\tau$ mapping the values of the low-level model into those of the high-level and \emph{(ii)} a surjective and order-preserving map $\omega$ between interventions on the two models.
At the \emph{graphical} level instead we find the \emph{cluster DAG} framework \citep{anand2023causal,xia2024neural}, which is based on clustering the low-level causal variables.
The equivalence among the \abst, $(\tau,\omega)$, and cluster DAG frameworks has been recently investigated by \citet{schooltink2025aligning}.
Other approaches investigate causal abstractions from a category-theoretic perspective, modeling them as \emph{natural transformations} \citep{otsuka2022equivalence,lorenz2026causal}.
These approaches build upon the aforementioned string-diagrammatic formalization of causal models, while our work is based upon Def.\nb6 in \citep{d2025relativity}.

The CAN framework synthesizes network sheaves and cosheaves of CK introduced by \citet{d2025relativity} when \emph{(i)} subjective causal models induce observational Gaussian mixtures and, for each node-edge incidence relation, \emph{(ii)} the restriction map is a permutation matrix if the model associated with the node is the finest among the two models at the endpoints of the edge, and otherwise the transpose of a CLCA (cf. \Cref{subsec:CAN_as_ns}).
The former point links the subjective models within a CAN with the recently developed mixtures of causal models (MCMs, \citealp{geiger1996knowledge,thiesson1998learning,kumar2024learning,varambally24a,mamechecausal}), where the generating process is a mixture of different causal functional assignments. 
Notably, in our framework, the ability to handle subjective models that are themselves MCMs not only adds flexibility, making the framework more broadly applicable, but is also a natural requirement, since the diffusion operators defined for the CAN produce Gaussian mixtures, which cannot be captured by canonical SCMs (cf. Rem.~\ref{rem:why-gmm}).

From the theoretical perspective, our work relates to \citet{ghrist2022cellular}, which develops cellular sheaves valued in the category of lattices, a non-Abelian setting akin to convex spaces of probability measures \citep{fritz2009convex}.

At the methodological level, we leverage the work of \citet{cai2022distances} on distances between probability distributions with different dimensions to formulate the problem of learning a consistent CAN from a collection of Gaussian mixtures (cf. \ref{eq:globalProblemGMM}).
Additionally, the alternating optimization approach in our algorithmic solution relates to that of \citet{salmona2024gromovwassersteinlike} which is based on the mixture embedded Wasserstein distance for learning optimal transportation plans between Gaussian mixture models. 

Finally, this article extends our previous work~\citep{dacunto2026learningconsistentcausalabstraction} along theoretical, methodological, and experimental directions. 
While the main contribution of the latter was a computationally efficient learning procedure for CANs in the Gaussian setting, the present work substantially broadens the scope of that approach. 
In particular, as detailed in \cref{sec:contributions}, we 
\emph{(i)} further develop the theoretical apparatus underlying CANs and establish results that do not, in fact, rely on Gaussianity; 
\emph{(ii)} introduce \mcalsep, which is able to handle the practically relevant case of mixtures of Gaussians (cf.\ Rem.~\nb\ref{rem:why-gmm}); 
and \emph{(iii)} provide a real-world case study that is of broader interest to the sheaf-theoretic approach.

\section{Contributions}\label{sec:contributions}
Our contribution is threefold.
From the \emph{theoretical} standpoint, we first unify the main existing definitions of MCMs and provide a categorical formulation thereof.
Subsequently, we introduce CAN, again within the general language of category theory, and investigate its properties.
Specifically,
\emph{(i)} the algebraic descriptors induced by the combinatorial network,
\emph{(ii)} the boundary, coboundary and Laplacian operator determining the diffusion of CK, 
\emph{(iii)} the consistency of CAN, its relation to SEP, and the spectral properties of the connection Laplacian of a consistent CAN,
\emph{(iv)} the global sections of a consistent CAN--informally, consistent assignments of CK to each node of the network which lead to a perfect alignment among the subjective MCMs--tying their existence to the kernel dimensionality of the connection Laplacian, characterizing their statistical structure, and establishing the convergence of the diffusion of CK over the CAN to the space of global sections, 
\emph{(v)} the smoothness of a collection of probability distributions with respect to a consistent CAN, quantified through an information-theoretic metric or $\phi$-divergence depending on the application setting.   

From the \emph{methodological} standpoint, we formulate the problem of learning a consistent CAN given a collection of Gaussian mixtures assumed to be smooth with respect to the latter.
Our formulation is a combinatorial problem over the edge set with edge-specific alignment constraints,  amounting to local CLCA learning tasks.
For each of them, given as input a low-level mixture (from a detailed MCM) and a high-level one (from an abstract MCM) related by a CLCA, we propose the \mcalsep algorithm that jointly learns \emph{(i)} the coupling between the components of the input mixtures and \emph{(ii)} the CLCA between them, based on structural prior knowledge. 
Hence, as a byproduct, our work also contributes to the existing literature on learning causal abstractions from data.
The proposed solution for the CAN learning problem is an efficient search procedure based on  Cor.\nb\ref{cor:spectral_interlacing_gmm}, and the compositionality of CLCAs.
This procedure employs \mcalsep to solve the edge-specific problems.
We tested our methods on a series of experiments on synthetic data, also comparing with the baselines in \cite{d2025causal,dacunto2026learningconsistentcausalabstraction} on the CLCA learning task with varying degrees of abstraction. 
Overall, the proposed methods show competitive performance and good recovery capabilities.

Last but not least, from the \emph{application} standpoint, we show a use-case in finance in which we build a simple trading AI-system with \num{5} agents using the CAN framework. Starting from real industry-portfolio returns at two levels of aggregation, we explicitly construct a ground-truth CAN encoding CLCA relationships across agents.
Then, we show that the CAN can be recovered with high-precision from the induced global section using our learning procedure. 
Additionally, we assess the impact of using learned (rather than prescribed) CLCA maps on a downstream mean-variance portfolio allocation task.
Finally, we consider a counterfactual reasoning task in which, given portfolio allocations observed from a set of investors operating on the same investment universes as the CAN-based trading system, we infer the causal knowledge of the coarsest agent that induces a counterfactual global section under which the observed allocations are optimal in a mean-variance sense. 
Within this setting, we study how the resulting counterfactual global section varies with the risk-aversion parameter of the trading system, and quantify how far the system’s own optimal allocation departs from the investors’ observed allocation as the risk-aversion changes.

\section{Background}\label{sec:background}
This section reviews the main tools underlying our work.
Rather than focusing on technical details, we emphasize the role played by each framework in our work.
Formal definitions, theoretical results, and proofs are deferred to the corresponding cited references.

\spara{Category theory.}
Category theory \citep{mac2013categories} is a branch of pure mathematics investigating relationships among abstract structures by means of objects and morphisms.
Being axiomatically centered on relations, the general language of this theory eases the shift from the single causal model-centric to the relative setting introduced in \Cref{sec:introduction}.
Intuitively, as nicely discussed by \citet[Ch.1]{perrone2024starting}, we can think about a category as structured objects---e.g., vector spaces, measurable spaces---and composable arrows (morphisms) between them preserving their structure---e.g., linear maps, measurable maps---and satisfying certain axioms (unitality and associativity).
In this work, SCMs, their intervened states and abstractions, are treated as objects of a category connected by structure-preserving maps, ensuring that causal semantics is maintained when models are compared, composed, or integrated.
For our work, a key advantage of the categorical viewpoint is compositionality: complex causal systems can be built and analyzed by composing simpler components in a principled way.
This property is essential for reasoning about networks of causal AI agents, where local causal knowledge must be related consistently across agents and scales.

\spara{Structural causal models.}
An SCM \scm{} \citep{pearl2009causality} consists of endogenous variables, exogenous variables capturing latent influences, and structural assignments forming an acyclic system of equations describing how each endogenous variable is generated from its parents and exogenous variable.
SCMs are commonly represented as directed acyclic graphs (DAGs), where nodes correspond to variables and edges encode direct causal relationships.

A formal category-theoretic reformulation of SCMs, as well as of interventions and abstractions, is developed in \citep{d2025relativity,d2025causal}.
Essentially, the category-theoretic SCM is an arrow--formally a \emph{functor}--from a syntactic to a semantic category.
The former is made of two objects and one arrow between them, the latter of the probability spaces of exogenous and endogenous variables related by a a collection mixing measurable maps. 
Then, the category-theoretic SCM maps objects of the syntactic category to the probability spaces and the only arrow to the collection of mixing measurable maps.
This reformulation is also general enough to encompass both Markovian and semi-Markovian SCMs and will be extended to the MCMs case in \Cref{sec:categorical-CMM}, providing the essential categorical language underlying the sheaf-theoretic constructions in \cref{sec:CAN}.

\spara{Causal abstraction.}
In many applications, the same phenomenon can be modeled at multiple levels of granularity.
\emph{Causal abstraction} (CA) formalizes principled mappings between such models, allowing one to relate low-level and high-level SCMs while preserving causal meaning \citep{rubenstein2017causal,beckers2019abstracting,rischel2020category}.
In our work, we leverage the \abst-abstraction framework \citep{rischel2020category}.
 
Intuitively, an \abst-abstraction specifies how relevant endogenous variables in the low-level are aggregated at the high-level via a surjective structural map, and how their values correspond to those of the high-level via modular functions.
\emph{Interventional consistency} (IC) is a common requirement for CA. 
In words, we obtain equivalent outcomes \emph{(i)} either by intervening on the low-level model and then abstracting or, \emph{(ii)} by abstracting to the high-level model and then intervening in an equivalent fashion.
An important class of IC abstractions are the \emph{constructive abstractions}, implying a clustering of the low-level variables into the high-level and a map between exogenous variables as well \citep{beckers2019abstracting}.
In our work, we focus on \emph{constructive linear causal abstractions} (CLCAs, \citealp{massidda2024learning,d2025causal}), where 
\emph{(i)} all low-level variables are relevant, 
\emph{(ii)} $\B^\top \in \{0,1\}^{h,\ell}$ encodes a structural disjoint partitioning of $\ell$ low-level variables $\myendogenous^\ell$ into $h$ high-level $\myendogenous^h$, 
and \emph{(iii)} $\V^\top \in \reall^{h \times \ell}$ specifies mappings between outcomes, whose nonzero entries are determined by ones in $\B^\top$.

Recently, \cite{d2025causal} introduced the \emph{semantic embedding principle} (SEP). 
Intuitively, SEP implies that going from the high-level model \scmhigh to the low-level model \scmlow and then abstracting back to \scmhigh allows for perfect reconstruction of CK. 
Importantly, for CLCA, a principled way to satisfy SEP is via the geometry of the Stiefel manifold:
\begin{equation}\label{eq:stiefel}
    \stiefel{\ell}{h} \coloneqq \{ \V \in \reall^{\ell \times h} \, \mid \, \V^\top\V = \identity_h \}\,.
\end{equation}
SEP induces a necessary condition for the existence of CLCA between two Gaussian distributions \citep{d2025causal}, an application of the Ostrowski's theorem \citep{higham1998modifying}.
Let $\measurelow \sim N(\zeros_\ell, \covlow)$, $\measurehigh \sim N(\zeros_h, \covhigh)$, where $\covlow \in \pd^\ell$ and $\covhigh \in \pd^h$.
Denote by $0<\lambda_1\leq \ldots \leq \lambda_\ell$ the eigenvalues of \covlow, and by $0<\kappa_1 \leq \ldots\leq \kappa_h$ those of \covhigh.
If a CLCA complying with SEP from \measurelow to \measurehigh exists, then
\begin{equation}\label{eq:spectralCA}
    \lambda_i \leq \kappa_i \leq \lambda_{i + \ell -h}, \quad \forall \,i \in [h]\,.
\end{equation}
As detailed in \Cref{subsec:global_L}, a similar spectral interlacing holds for CLCAs between Gaussian mixtures of different dimensions.

\spara{Network sheaves and cosheaves of causal knowledge.}
To formalize the interaction of multiple causal AI agents, we adopt the framework of \emph{network sheaves and cosheaves of causal knowledge} by \citet{d2025relativity}.
Informally, these constructions can be thought of as assignments of CK to nodes and edges of a network, where node-edge incidence relations correspond to maps that preserve the causal meaning of CK.
Intuitively, a network sheaf describes how local causal knowledge held by individual agents at the node level can be ported to agents at the edge level.
Conversely, a network cosheaf captures how shared or aggregated causal knowledge at the edge level can be propagated back to agents at the node level.
Together, they formalize the {\it bidirectional interplay between local perspective and global coherence} that characterizes the relative causal framework.

Our model assigns a probability distribution, typically a Gaussian mixture, to each node and edge in the graph-- either observational, interventional or counterfactual--induced by the local SCMs.
Collectively, the assignments over the nodes form a $0$-cochain, those on the edges a $1$-cochain.
A central concept in this setting is that of a \emph{global section}, representing a consistent assignment of probability distributions across the entire network that does not break local rules.
When such consistency holds true, {\it global causal knowledge} is said to emerge from the interaction and alignment of {\it subjective and local causal knowledge} among different agents.
The precise categorical constructions and their properties are developed in\nb\citep{d2025relativity} and are recalled as needed below.

\spara{Gaussian mixtures.}
Gaussian mixtures (GMs) provide a flexible class of probability distributions obtained as convex combinations of Gaussian components, relevant to the methodological and empirical parts of our work.
We denote by \GMM{S}{d} the space of Gaussian mixtures with at most $S$ components in $\reall^d$.
A generic Gaussian mixture in \GMM{S}{d} is given by
\begin{equation}\label{eq:gmm}
    \measure = \sum_{s \in [S^\prime]}  w_s \measure_s\,;    
\end{equation}
where $S^\prime \leq S$, $\w = [w_1, \ldots, w_{S^\prime}]^\top \in \simplex{S}$ lies in the probability simplex, and each component is a Gaussian distribution $\measure_s \sim N(\bm\mu_s, \bm\Sigma_s)$.
The mean and covariance of the mixture admit closed-form expressions,
\begin{equation}\label{eq:gmm_mu}
    \bm\mu = \sum_{s \in [S^\prime]} w_s \bm\mu_s\,,
\end{equation}
and
\begin{equation}\label{eq:gmm_sigma}
    \bm\Sigma=\sum_{s\in [S^\prime]}w_s \left(\bm\Sigma_s + (\bm\mu_s - \bm\mu)(\bm\mu_s - \bm\mu)^\top\right)\,.
\end{equation}
Importantly, Gaussian mixtures are identifiable up to permutation of their components \citep{yakowitz1968identifiability}.
Additionally, they are well known universal approximators of probability distributions under mild regularity conditions.
To compare Gaussian mixtures induced by the causal models, we rely on the mixture-Wasserstein distance, which is a metric for Gaussian mixtures \citep{delon2020wasserstein,chen2018optimal}.
The technical details are deferred to \Cref{sec:learning_CAN}.
\section{Category-theoretic Mixture Causal Model}\label{sec:categorical-CMM}

As stated in \cref{sec:contributions} and further clarified in Rem.\nb\ref{rem:why-gmm}, in our framework, the ability to handle subjective causal models that are themselves mixtures of causal models not only provides more flexibility, but is also a natural requirement, since the diffusion operators defined in \Cref{subsec:CAN_as_ns} produce mixtures of probability distributions, which cannot be captured by SCMs.

A mixture of causal models can be thought of as a generative model where the joint distribution of the endogenous variables is not governed by a single causal mechanism, rather by a convex combination of independent SCMs, activated by categorical latent variables encoding for instance context- or subpopulation-specific conditional independence.
Hereinafter, we refer to the categorical latent variables as \emph{activation variables}.

Over the years, a mixture of causal models has been called in many ways and presented with different nuances and in diverse application contexts. 
Among the others, \emph{Bayesian multinet} \citep{geiger1996knowledge}, \emph{multi-DAG} \citep{thiesson1998learning}, \emph{mixture of DAGs} \citep{kumar2024learning}, and \emph{causal mixture model} (CMM, \citealp{mamechecausal}).
Although the previous models tend to generalize the canonical SCM, there are some differences between CMM and others.
On one hand, while other models assume a single global activation variable for the data--in other words, an observation of the endogenous corresponds to a specific value of a latent global variable activating a certain SCM--CMM allows different local activation variables to simultaneously influence distinct sets of endogenous variables, modifying the strength of the causal relationships in different ways, but not the parent sets.
On the other hand, CMM is more stringent from a structural standpoint, as the parent sets of the endogenous variables do not vary along the values of the local activation variables, which only modify the strength of the causal relationships.
Conversely, in Bayesian multinets and multi-DAGs, the causal structure varies according to the value of the global activation variable.

However, in all models a single global causal \emph{structure} is active at a specific sample of the endogenous variables.
Leveraging this observation, and adopting hereinafter the term \emph{mixture causal model} (MCM), herebelow we formalize a unifying definition.
    
\begin{definition}[Mixture causal model]\label{def:mcm}
    A mixture causal model is a tuple 
    $$\mcm{}\coloneqq\langle \myendogenous, \myexogenous, \zeta, W, \w, \mathcal{L}, \Upsilon, \mathcal{Q} \rangle\,,$$ 
    where:
    \begin{squishlist}
        \item A collection \myendogenous of $n$ observed endogenous variables $X_i$, arranged in a random vector $X=(X_1,\ldots,X_n)$ ; 
        \item A collection \myexogenous of $n$ latent exogenous variables $Z_i$, one for each $X_i$, arranged in a random vector $Z=(Z_1,\ldots,Z_n)$;
        \item A product probability distribution $\zeta$ over jointly independent exogenous variables;
        \item A global, categorical, DAG activation latent variable $W \in [w]$;
        \item A discrete probability distribution $\w = [w_1,\ldots, w_w]^\top \in \Delta_w$, yielding $W\sim \mathrm{Categorical}(\w)$;
        \item A collection $\mathcal{L}$ of $u \leq n$ $k_j$-dimensional, local, categorical, assignment activation latent variables $L_j$, taking values in $[k_j]$, $j \in [u]$ and relating to the endogenous via a surjective map $\mathrm{La}:\myendogenous \to \mathcal{L}$;
        \item A collection $\Upsilon=\{\bm\upsilon_1,\ldots,\bm\upsilon_u\}$ of discrete probability distributions $\bm\upsilon_j$, one for each $L_j$, lying on the $k_j$-dimensional probability simplex $\Delta_{k_j}$ and yielding $L_j \sim \mathrm{Categorical}(\bm\upsilon_j)$ 
        \item A collection of $w$ pairs $\mathcal{Q}\coloneqq\{(\mathcal{D}_1,\myfunctional_1),\ldots,(\mathcal{D}_w,\myfunctional_w)\}$, such that, for each $X_i$, 
        \begin{enumerate}
           \item[(i)] the parent set $\parents_i \subseteq \myendogenous \setminus \{X_i\}$ is determined by the causal DAG $\mathcal{D}_r$ with $r$ corresponding to the value of $W$;
           \item[(ii)] the value is determined by those of the corresponding $Z_i$ and the parents $\parents_i$, according to the structural assignment $f_{r,i,h_j}$ where $h_j \in [k_j]$ is the value of $L_j=\mathrm{La}(X_i)$,
           \begin{equation}
               X_i = f_{r,i,h_j}(\parents_i,Z_i)\,.
           \end{equation}
        \end{enumerate} 
        The activation variables $W, L_1, \ldots, L_u$ are assumed to be jointly independent.
    \end{squishlist}
\end{definition}

From Def.\nb\ref{def:mcm}, MCM coincides with CMM when \emph{(i)} $w=1$ and thus $\mathcal{Q}=\{(\mathcal{D},\myfunctional)\}$, \emph{(ii)} the structural assignments $f_{i,h_j}$ are linear with additive noise, and \emph{(iii)} $\myexogenous \sim N(\zeros_n,\sigma^2\identity)$.
When $u=n$ and $k_j=1$, we get the rest of the models above, also by specifying the corresponding functional forms and noise distributions, which in the continuous case are chosen linear with additive Gaussian noise.
Notably, in the general case, the MCM in Def.\nb\ref{def:mcm} is a more general model, combining both the global and local mixing properties of the above models, and at the same time encompassing more general functional forms for the structural assignments.

\begin{figure}[h!]
    \centering
    \begin{tikzpicture}[
    observed/.style={draw, circle, minimum size=6mm, inner sep=1pt},
    latent/.style={observed, dashed}
]
        \node[latent] (W) at (2,2.5) {$W$};
        
        \node[observed] (x1) at (-1,0) {$X_1$};
        \node[observed] (x2) at (0,1) {$X_2$};
        \node[observed] (x3) at (0,-1) {$X_3$};
        \node[observed] (x4) at (1,0) {$X_4$};
        \node[latent] (L1) at (-1,-1) {$L_1$};
        \node[latent] (L2) at (1,1) {$L_2$};
        \draw[rounded corners] (-1.5,1.5) rectangle (1.5,-1.5);
        \draw[->] (x1) to (x3);
        \draw[->] (x2) to (x1);
        \draw[->] (x2) to (x4);
        \draw[->] (L1) to (x1);
        \draw[->] (L1) to (x3);
        \draw[->] (L2) to (x4);

        \node[observed] (x11) at (3,0) {$X_1$};
        \node[observed] (x21) at (4,1) {$X_2$};
        \node[observed] (x31) at (4,-1) {$X_3$};
        \node[observed] (x41) at (5,0) {$X_4$};
        \node[latent] (L11) at (3,-1) {$L_1$};
        \node[latent] (L21) at (5,1) {$L_2$};
        \draw[rounded corners] (2.5,1.5) rectangle (5.5,-1.5);

        \coordinate (rectA-top) at (0,1.5);
        \coordinate (rectB-top) at (4,1.5);
        \node at (0,-1.8){$(\mathcal{D}_1, \mathcal{F}_1)$};
        \node at (4,-1.8){$(\mathcal{D}_2, \mathcal{F}_2)$};
        
        \draw[->] (x11) to (x31);
        \draw[->] (x21) to (x11);
        \draw[->] (x21) to (x41);
        \draw[->] (x41) to (x31);
        \draw[->] (L11) to (x11);
        \draw[->] (L11) to (x31);
        \draw[->] (L21) to (x41);
        \draw[->] (W) to (rectA-top);
        \draw[->] (W) to (rectB-top);
    \end{tikzpicture}
    \caption{Example of MCM in Def.\nb\ref{def:mcm}. 
    The observed variables are enclosed within solid circles, the latent variables in dashed circles.}
    \label{fig:MCM}
\end{figure}

\begin{example}\label{ex:MCM}
   \Cref{fig:MCM} shows an example of an MCM as defined in Def.\nb\ref{def:mcm}.
    We have $n=4$ endogenous variables $\mathcal{X}=\{X_1,\ldots,X_4\}$, each associated with an exogenous variable $Z_i$, $i \in [4]$, which is not shown for clarity.
    Further, we have a single global activation variable $W\sim \mathrm{Categorical}(\w)$, with $\w \in \Delta_2$, and $u=2$ local activation variables: $L_1 \sim \mathrm{Categorical}(\bm\upsilon_1)$ and $L_2 \sim \mathrm{Categorical}(\bm\upsilon_2)$, with $\bm\upsilon_1 \in \Delta_{k_1}$ and $\bm\upsilon_2 \in \Delta_{k_2}$.
    Depending on the value of $W \in [2]$, either $(\mathcal{D}_1, \mathcal{F}_1)$ or $(\mathcal{D}_2, \mathcal{F}_2)$ is active.
    Locally, $L_1 \in [k_1]$ modifies the causal mechanisms of $X_1$ and $X_3$, that is, the strength of the causal relationships $X_2 \rightarrow X_1$, $X_1 \rightarrow X_3$, and $X_4 \rightarrow X_3$ (when $W=2$).
    Similarly, $L_2 \in [k_2]$ affects the causal mechanism of $X_4$.
\end{example}

As discussed in \Cref{sec:background}, an SCM can be interpreted as a generative mechanism that maps a probability distribution over exogenous variables to a probability distribution over endogenous variables through a measurable \emph{deterministic} map obtained by recursively composing the structural assignments according to the causal ordering induced by the DAG. 
In categorical terms, this correspondence is captured by a functor from a simple index category \Index (made only of a source node and a target node, plus a unique arrow from the source to the target) to the category of probability spaces \Prob (cf. \citealp{d2025relativity}).
The functorial representation follows by mapping \emph{(i)} the source node to the probability space of the exogenous variables, \emph{(ii)} the target node to the probability space of the endogenous variables, and \emph{(iii)} the unique arrow to the measurable map.

Now, an MCM extends this picture by allowing the generative process to be governed not by a single causal mechanism, but by a finite collection of alternative causal mechanisms, selected by latent activation variables and combined according to a mixture distribution. 
As a consequence, the probability distribution induced on the endogenous variables is no longer the pushforward of the exogenous distribution along a single measurable map, but rather a convex combination of pushforward distributions corresponding to different causal mechanisms.

This observation motivates a categorical formulation of MCMs in which the syntax encoded by \Index is preserved, while \Prob is replaced by a category capable of internalizing convex combinations of probability distributions, that is, \CSprob \citep{fritz2009convex}. 
In the latter, objects are convex spaces of probability distributions, that is, sets of probability distributions over the sample space \myendogenousvals, denoted by $\mathcal{P}(\myendogenousvals)$, equipped with a convex combination operation 
\begin{equation}\label{eq:ccl}
    cc_{\lambda}(P_1, P_2)(\mathcal{O}) \coloneqq \lambda P_1(\mathcal{O}) + \bar{\lambda} P_2(\mathcal{O})\,, \quad \text{with}\; \lambda \in [0,1] \text{ and } \bar{\lambda}=1-\lambda\,; 
\end{equation}
for all measurable events $\mathcal{O}\subseteq\myendogenousvals$.
In our setting, each $P_i \in \mathcal{P}(\myendogenousvals)$ corresponds to the probability distribution $\measure_i$ of endogenous outcomes induced by a distinct causal mechanism in the MCM, while $\lambda$ represents the weight of the corresponding mixture. 
Morphisms are affine measurable maps commuting with $cc_\lambda$.
Using \Cref{eq:ccl}, throughout we denote by $cc_{\w}(\measure^{[K]})$ a mixture of probability distributions $\measure^k$, $k \!\in\![K]$, with weights $\w \!\in\! \Delta_{K}$.

As in the case of canonical SCMs, the categorical formulation of MCMs does not rely directly on the structural assignments. 
Instead, it relies on the induced mixing functions that map exogenous variables to endogenous ones.

Consider an MCM as in Def.\nb\ref{def:mcm}. 
For each value $r\in[w]$ of the global DAG activation variable $W$, the associated DAG $\mathcal{D}_r$ specifies a causal ordering over the endogenous variables. 
Moreover, for each configuration $\bm h=(h_1,\ldots,h_u) $ of the local activation variables, the corresponding collection of structural assignments 
$\{f_{r,i,h_j}\}_i$ determines the functional form of the causal mechanisms.
By acyclicity of $\mathcal{D}_r$, these structural assignments can be recursively composed, yielding a measurable mixing function $\mathcal{M}_{r,\bm h}$ that expresses the endogenous variables as functions of the exogenous ones alone, conditional on the activation configuration $(r,\bm h)$.
Each such mixing function induces a probability distribution on the endogenous space via pushforward of the exogenous distribution $\zeta$, namely $\measure^{r,\bm h}\coloneqq P_{\mathcal{M}_{r,\bm h}(Z)}$, where the pushforward distribution is defined by $P_{\mathcal{M}_{r,\bm h}(Z)}(\mathcal{O})=P_Z\left( \mathcal{M}_{r,\bm h}^{-1}(\mathcal{O})\right)$ for every measurable event $\mathcal{O}\subseteq\myendogenousvals$.

The latent activation variables do not select a single mixing function deterministically, but rather define a probability distribution over the family $\{\mathcal{M}_{r,\bm h}\}$. 
Consequently, the overall distribution of the endogenous variables is obtained as a convex combination of the distributions $\measure^{r,\bm h}$, with weights determined by the mixing distributions of the activation variables.
This perspective makes explicit that an MCM induces a convex family of pushforward distributions on the endogenous space.
Thus, we exploit this observation to formalize MCMs as functors from \Index to \CSprob, where convex combinations of probability distributions are internalized at the level of objects and morphisms.

\begin{definition}[Category-theoretic MCM]\label{def:mcm-fun}
    An MCM with a global and $u$ local activation variables, $W \in [w]$ and $L_j \in [k_j]$ with $j \in [u]$, respectively, is a functor $\mcm{}:\Index \to \CSprob$ defined as follows
    \begin{equation}
        \centering
        \begin{tikzpicture}[]
    
        \node (I) at (0, 2.25) {\Index};
        \node (P) at (3, 2.25) {\CSprob};
        
        \node[circle, draw, fill,inner sep=1pt] (A) at (0, 1.5) {};
        \node[circle, draw, fill,inner sep=1pt] (B) at (0, 0) {};
        \node (A1) at (-0.3, 1.5) {I};
        \node (B1) at (-0.3, 0) {$I^\prime$};
        \node (C) at (3, 1.5) {$(\mathcal{P}(\myexogenousvals), cc_\lambda)$};
        \node (D) at (3, 0) {$(\mathcal{P}(\myendogenousvals), cc_\lambda)$};
        \coordinate (AB) at (0,0.75);
        \coordinate (CD) at (3,0.75);
    
        \coordinate (A1shift) at ([yshift=-5pt]A);
        \draw[->,shorten >=2pt] (A1shift) -- node[left] {f\,} (B);
    
        \coordinate (Ishift) at ([xshift=10pt]I);
        \coordinate (Pshift) at ([xshift=-20pt]P);
        \draw[->, dashed, shorten <=2pt, shorten >=0pt] (A) to[out=45, in=160] (C);
        \draw[->, dashed, shorten <=2pt, shorten >=0pt] (B) to[out=45, in=160] (D);
        \draw[->, dashed, shorten <=2pt, shorten >=2pt] (AB) to[out=45, in=160] (CD);
        \draw[->] (Ishift) -- node[above] {\mcm{}} (Pshift);
        \draw[->] (C) -- node[right] {\mcm{f}} (D);
        \end{tikzpicture}
        \label{fig:mcmfunctor}
    \end{equation}
    Specifically, the sample space \myexogenousvals of the exogenous is endowed with the product probability distribution $\zeta \in \mathcal{P}(\mathcal{U})$, and the affine measurable map \mcm{f} yields a pushforward mixture probability distribution $\measure \in \mathcal{P}(\mathcal{V})$ over the endogenous, viz.
    \begin{equation}\label{eq:morphism-mcm}
        P^{\mcm{f}(Z)} = cc_{\mathbf{p}}\left(\measure^{r\in[w],\bm h \in [k_1]\times \ldots\times[k_j]}\right)\,,
    \end{equation}
\end{definition}
where the entries of $\bf p \in \Delta_{w\prod_jk_j}$ are $p_{r, \bm h}=w_r\prod_j^u \upsilon_{j,h_j}$ and $\measure^{r, \bm h}=P_{\mathcal{M}_{r,\bm h}(Z)}$.

\begin{remark}
In the categorical Def.\nb\ref{def:mcm-fun}, the latent activation variables $W$ and $L_j$, $j \in [u]$, index a finite family of mixing functions whose induced pushforward distributions are combined convexly in the codomain \CSprob.
\end{remark}

Similarly to the SCM case, we can organize MCMs in Def.\nb\ref{def:mcm-fun} in a functor category, termed \MCMcat, where morphisms are natural transformations.

\begin{definition}[Category of MCMs]\label{def:mcm-cat}
    The category of MCMs, namely \MCMcat, consists of (i) functors $\mcm{}:\Index \to \CSprob$ as objects; and (ii) natural transformations $\eta:\mcm{}\to \mcm{\prime}$ as morphisms, such that:
    \begin{squishlist}
        \item for each $I$ in \Index, an affine measurable map $\eta_I: \mcm{}(I)\to \mcm{\prime}(I)$ in \CSprob, called component at $I$;
        \item for the unique morphism $f: I \to I^\prime$ in \Index, the following diagram commutes:
            \begin{equation}\label{eq:nat_transf_MCMcat}
                \begin{tikzcd}[row sep=1.5cm, column sep=1.5cm]
                    \mcm{}(I) \arrow[r, "\mcm{f}"] \arrow[d, "\eta_I"'] & \mcm{}(I^\prime) \arrow[d, "\eta_{I^\prime}"] \\
                    \mcm{\prime}(I) \arrow[r, "\mcm{\prime^f}"'] & \mcm{\prime}({I^\prime)}
                \end{tikzcd}
                \begin{tikzpicture}[overlay]
                    \draw[dashed, rounded corners] (-5.3,-1.5) rectangle (-3.5,1.5);
                    \draw[dashed, rounded corners] (-1.9,-1.5) rectangle (0,1.5);
                    \node at (-4.4,-1.8){Exogenous };
                    \node at (-1,-1.8){Endogenous };
                \end{tikzpicture}
            \end{equation} 
    \end{squishlist}
    \medskip
\end{definition}
Additionally, in \MCMcat in Def.\nb\ref{def:mcm-cat}, the categorical identity is defined as $\catidentity{\mcm{}}\coloneqq\langle \catidentity{(\mathcal{P}(\myexogenousvals), cc_\lambda)}, \catidentity{(\mathcal{P}(\myendogenousvals), cc_\lambda)}\rangle$; while the composition is defined component-wise, that is, provided $\eta: \mcm{} \to \mcm{\prime}$ and $\eta^\prime: \mcm{\prime} \to \mcm{''}$, we have $\eta^{''} = \eta^\prime \circ \eta \coloneqq \langle \eta_{I}^\prime \circ \eta_{I}, \eta_{I^\prime}^\prime \circ \eta_{I^\prime} \rangle$ from \mcm{} to \mcm{''}.

It is worth noticing that \scm{}, in Def.\nb2 in \citep{d2025relativity}, is a special case of \mcm{} in Def.\nb\ref{def:mcm-fun} where we only have a single index $(r, \bm h)$, thus implying a deterministic morphism $\mcm{f}=\mathcal{M}_{r,\bm h}=\mathcal{M}$.
This highlights that the category \SCMcat in \citep{d2025relativity} is a \emph{full subcategory} of \MCMcat, that is, \emph{(i)} all objects in \SCMcat are also in \MCMcat (while the converse does not hold), and \emph{(ii)} given \scm{} and \scm{\prime} in \SCMcat, all the arrows between them in \MCMcat are also in \SCMcat.

The network sheaf and cosheaf of CK remain well defined when canonical SCMs are replaced by MCMs.
Indeed, MCMs admit a categorical semantics in \CSprob via affine measurable generative maps, and the morphisms between MCMs are represented by commutative diagrams in the same category.
Since the sheaf and cosheaf constructions depend solely on this functorial encoding, no modification is required when passing from SCMs to MCMs.
\section{The Causal Abstraction Network}\label{sec:CAN}

This section introduces CAN.
For clarity, we outline below the main steps:

\begin{squishlist}
    \item \Cref{subsec:ct_formalization} formalizes CAN using category theory, starting from the abstraction and embedding functors, and defines its graphical representation \CAN;
    \item \Cref{subsec:algebraic_descriptors} derives the algebraic descriptors of \CAN;
    \item \Cref{subsec:CAN_as_ns} reinterprets \CAN as a network sheaf of CK, enabling the analysis of CK flow;
    \item \Cref{subsec:CAN_consistency} investigates when \CAN is \emph{consistent}, linking consistency to SEP and analyzing spectral properties;
    \item \Cref{subsec:global_L} characterizes the diffusion of CK over CANs, including the existence and statistical properties of its global sections; 
    \item \Cref{subsec:convergence} proves convergence of the diffusion process to the space of global sections of a consistent CAN;
    \item \Cref{subsec:hyperparams} discusses the role of the hyper-parameters that shape the CK diffusion;
    \item \Cref{subsec:smoothness} introduces a notion of smoothness w.r.t. a CAN, quantifying it via information-theoretic metrics or $\phi$-divergences.
\end{squishlist}

To facilitate the understanding of concepts that may initially seem very abstract, we provide worked examples throughout the section.

\begin{figure*}[t]
    \centering
    \subfloat[Causal abstraction network\label{subfig:CAN}]{
        \begin{tikzpicture}[scale=0.97, transform shape]
            \node[draw, align=left] (R) at (3.5,3) {Causal abstraction relations:\\
            $1 \leqslant 2$, $2 \leqslant 3$, $2 \leqslant 4$}; 
            \node[draw, circle] (A) at (0,2) {$\mathsf{\scriptstyle  MCM}_1$};
            \node[draw, circle] (B) at (1,0) {$\mathsf{\scriptstyle  MCM}_2$};
            \node[draw, circle] (C) at (3,0) {$\mathsf{\scriptstyle  MCM}_3$};
            \node[draw, circle] (D) at (-1,-1) {$\mathsf{\scriptstyle  MCM}_4$};

            \draw[->,Mulberry] (A) to[bend right] node[midway, left]{$\V_{12}^\top$} (B);
            \draw[->,Periwinkle] (B) to[bend right] node[midway, right]{$\V_{12}$} (A);
            \draw[-] (A) to (B);
            \draw[->,Mulberry] (B) to[bend right] node[midway, below]{$\V_{23}^\top$} (C);
            \draw[->,Periwinkle] (C) to[bend right] node[midway, above]{$\V_{23}$} (B);
            \draw[-] (B) to (C);
            \draw[->,Mulberry] (B) to[bend right] node[midway, left]{$\V_{24}^\top\quad$} (D);
            \draw[->,Periwinkle] (D) to[bend right] node[midway, right]{$\quad \V_{24}$} (B);
            \draw[-] (B) to (D);
        \end{tikzpicture}
    }
    \hfill
    \subfloat[Network (co)sheaf representation\label{subfig:CANns}]{
        \begin{tikzpicture}
            \node[draw, circle] (A) at (0,3) {$\mathsf{\scriptstyle  MCM}_1$};
            \node[draw, circle] (B) at (1,0) {$\mathsf{\scriptstyle  MCM}_2$};
            \node[draw, rectangle] (AB) at (.5,1.5) {$\mathsf{\scriptstyle  MCM}_{12}$};
            \node[draw, circle] (C) at (4,0) {$\mathsf{\scriptstyle  MCM}_3$};
            \node[draw, rectangle] (BC) at (2.5,0) {$\mathsf{\scriptstyle  MCM}_{23}$};
            \node[draw, circle] (D) at (-2,-1) {$\mathsf{\scriptstyle  MCM}_4$};
            \node[draw, rectangle] (BD) at (-.5,-.5) {$\mathsf{\scriptstyle  MCM}_{24}$};

            \draw[->,Mulberry] (AB) to[bend right] node[near start, left]{$\V_{12}^\top$} (B);
            \draw[->,Periwinkle] (B) to[bend right] node[near end, right]{$\V_{12}$} (AB);
            \draw[->,Mulberry] (AB) to[bend right] node[midway, right]{$\myP_{d_1}$} (A);
            \draw[->,Periwinkle] (A) to[bend right] node[midway, left]{$\myP_{d_1}^\top$} (AB);
            \draw[-] (AB) to (A);
            \draw[-] (AB) to (B);
            \draw[->,Mulberry] (BC) to[bend right] node[midway, below]{$\V_{23}^\top$} (C);
            \draw[->,Periwinkle] (C) to[bend right] node[midway, above]{$\V_{23}$} (BC);
            \draw[->,Periwinkle] (B) to[bend right] node[midway, below]{$\myP_{d_2}^\top$} (BC);
            \draw[->,Mulberry] (BC) to[bend right] node[midway, above]{$\myP_{d_2}$} (B);
            \draw[-] (BC) to (C);
            \draw[-] (BC) to (B);
            \draw[->,Mulberry] (BD) to[bend right] node[midway, above]{$\V_{24}^\top\quad$} (D);
            \draw[->,Periwinkle] (D) to[bend right] node[midway, below]{$\quad \V_{24}$} (BD);
            \draw[-] (BD) to (D);
            \draw[-] (BD) to (B);
            \draw[->,Periwinkle] (B) to[bend right] node[near end,above]{$\myP_{d_2}^\top$} (BD);
            \draw[->,Mulberry] (BD) to[bend right] node[midway,below]{$\myP_{d_2}$} (B);
        \end{tikzpicture}
    }
    \caption{(a) Causal abstraction network \CAN made of $4$ nodes and $3$ undirected edges given in black. 
        Blue arcs follow the network orientation, corresponding to the embedding direction, that is, the action of the functor $E$.
        Purple arcs follow the abstraction direction, that is, the action of the functor $A$.
        (b) Network sheaf (blue arcs) and cosheaf (purple arcs) representations corresponding to \CAN. 
        Each edge (co)stalk coincides--up to a permutation of causal variables--with the node (co)stalk of the finer causal model.}
    \label{fig:CAN}
\end{figure*}

\subsection{Category-theoretic formalization}\label{subsec:ct_formalization}
Let us consider a collection of $N$ subjective MCMs, denoted by \mcmcollection, modeling the same phenomenon at different levels of granularity.
For instance, different teams of neuroscientists might use different brain atlases to study effective connectivity \citep{friston2011functional}.
Each $\mcm{}_i$, $i \in [N]$, has a number $d_i$ of causal variables $X_i=(X_1, \ldots, X_{d_i})$--the regions of interest (ROIs) individuated by the atlas--and we assume it induces an observational Gaussian mixture with $I>0$ components in \GMM{I}{d_i}, viz. $\measure_i=P_{X_i} = cc_{\w_i}\left(P^{s\in[I]}_{X_i}\right)$ and $\measure_{i}^s=P^s_{X_i} \sim N(\bm\mu_{i}^s,\bm\Sigma_{i}^s)$--for example, the joint probability distribution of the signals from ROIs recorded from $I$ brain states.
We denote by $\x_i \in \reall^{d_i}$ a sample from $\measure_i$, $\forall \, i \in [N]$.
Although we focus on Gaussian mixtures, the theory developed in this section is valid for general mixtures of probability distributions.
From \mcmcollection, we build a poset \Poset{[N]} defined by 
\begin{equation}\label{eq:poset_rel}
    i \leqslant j \;\iff\; \text{$\mcm{}_j$ is a CLCA of $\mcm{}_i$.}
\end{equation}
\Cref{eq:poset_rel} can be read as \emph{\enquote{the CK of the high-level $\mcm{}_j$ can be reconstructed by abstracting that of the low-level $\mcm{}_i$ through a CLCA.}}
More precisely, representing the CLCA with a matrix $\V_{ji}$, \Cref{eq:poset_rel} implies that $\measure_j=cc_{\w_j}(\measure_{j}^{[J]})$ is a Gaussian mixture with \emph{(i)} the same weights--and thus number of components $J=I$--as in $\measure_i$, $\w_j=\w_i$; and \emph{(ii)} linearly transformed components $\measure_{j}^s\sim N(\V_{ji}\bm\mu_{i}^s, \V_{ji} \bm\Sigma_{i}^s \V_{ji}^\top)$.
\begin{remark}\label{rem:general-clca}
    In principle, more flexible forms of linear causal abstraction between Gaussian mixtures could be formalized.
    For example, it would be interesting to investigate the case in which, in addition to the causal variables of the MCMs, the components of the low-level mixture are also combined during the abstraction process, thus allowing the high-level MCM to have a different number of components and weights from those of the low-level MCM. 
    This would lead to a specific definition of causal abstraction for mixtures where, in addition to the structural and functional maps between the causal variables of the MCMs, there would be an $I \times J$ coupling matrix $\bm\Omega$ between components, with $I\neq J$ in general.
    Specifically, by viewing a finite Gaussian mixture in \GMM{I}{d_i} as a discrete probability distribution $\w_i$ over the space of Gaussians $\mathcal{N}(\reall^{d_i})$, thus $\bm\Omega$ is equivalent to a transport plan between discrete distributions $\w_i \in \simplex{I}$ and $\w_j \in \simplex{J}$ belonging to the set of nonnegative matrices--joint distributions--with marginals $\w_i$ and $\w_j$, that is
    \begin{equation}\label{eq:coupling-matrix}
        \bm\Omega \in \Pi(\w_i,\w_j) \coloneqq\{\bm\Omega \in \Delta_{I\times J} \mid \bm\Omega \ones_J=\w_i \text{ and } \bm\Omega^\top \ones_I=\w_j\}\,.
    \end{equation}
\end{remark}
Notably, in the proposed sheaf-theoretic framework, CA relationships can be either inferred from given causal models (\emph{deductive approach}) or enforced through the network sheaf--i.e., its topology and associated maps--to construct causal AI systems with desired properties (\emph{inductive approach}).
Additionally, high-level CK can also be embedded into the low-level. 
Thus, to model the dynamics of CK over the network sheaf, we need a single \emph{functorial} object that captures both abstraction and embedding.
Here, the term \enquote{functorial} is used in its categorical meaning: the investigated object is an \emph{arrow from the source to the target categories}.

Toward our goal, we adopt the general setting in \cite{d2025causal}, specifically without assuming: 
\emph{(i)} complete specification of the causal models;
\emph{(ii)} knowledge of their causal DAGs;
\emph{(iii)} their functional forms;
\emph{(iv)} availability of interventional data; or
\emph{(v)} jointly sampled observational data. 
Thus, we work at the CK level.

Let us start from the source category.
Each poset can be read as a category having at most one arrow between any two distinct objects.
Furthermore, the properties of \Poset{[N]} above naturally follow: 
\emph{reflexivity} from the identity morphism \catidentity{i};
\emph{antisymmetry} from the fact that CLCA is oriented from low- to high-level;
and \emph{transitivity} from the compositionality of CLCAs, as formalized in Lem.\nb\ref{lemma:composedCA}.

\begin{restatable}{lemma}{ComposedCALemma}\label{lemma:composedCA}
    Given two CLCAs from $\mcm{}_i$ to $\mcm{}_j$ and from $\mcm{}_j$ to $\mcm{}_k$, their composition is a valid CLCA.    
\end{restatable}
\begin{proof}
    See App. \ref{app:proofs}.
\end{proof}

For parsimony, we consider the \emph{transitive reduction} of $\Poset{[N]}$, keeping only \emph{irreducible} relations, i.e., $i \leqslant j$ such that no $m$ exists with $i \leqslant m \leqslant j$.
For simplicity, we denote the transitive reduction again by $\Poset{[N]}$.

Viewing \Poset{[N]} as a category, its diagrammatic representation is a directed graph with vertices corresponding to $\mcm{}_i \in \mcmcollection$ and arcs $i \rightarrow j$ representing irreducible CLCA relations $i \leqslant j$.
The dual construction $\Poset{[N]}^{\mathrm{op}}$ has the same vertices with reversed arcs. 
The relations $i \geqslant j$ in the dual can be read as \emph{\enquote{~$\mcm{}_i$ is a fine-grained representation of $\mcm{}_j$}}.
Thus, we let \Poset{[N]} and $\Poset{[N]}^{\mathrm{op}}$ be the source categories for abstraction and embedding dynamics, respectively.
We remark that duality between these categories implies that a relation in $\Poset{[N]}^{\mathrm{op}}$ exists iff a corresponding CLCA exists in \Poset{[N]}.

Moving to the target category, according to \Cref{sec:categorical-CMM}, we model CK as an object in \CSprob.
Since we deal with CLCAs, relevant morphisms in \CSprob are linear maps, here represented as matrices.
We then attach the CK of the MCMs and CLCAs to the poset using the following \emph{abstraction functor}.

\begin{definition}[Abstraction functor]\label{def:abstraction_functor}
    The abstraction functor $A: \Poset{[N]} \rightarrow \CSprob$ is as follows:
    (i) applied to objects, $A(\mcm{}_i) = \text{CK}(\mcm{}_i)$ for each $\mcm{}_i \in \mcmcollection$; 
    (ii) applied to arrows, $A(i \leqslant j) = \V_{j i}$ for each $i \leqslant j$, the CLCA from $\mcm{}_i$ to $\mcm{}_j$.
\end{definition}

The embedding action is formalized via the dual functor $E$.

\begin{definition}[Embedding functor]\label{def:embedding_functor}
    The embedding functor $E: \Poset{[N]}^{\mathrm{op}} \rightarrow \CSprob$ is as follows:
    (i) on objects, $E(\mcm{}_i) = \text{CK}(\mcm{}_i) $ for each $\mcm{}_i \in \mcmcollection$; 
    (ii) on arrows, $E(i \geqslant j) = \V_{i j} = \V_{j i}^\top$ for each $i \geqslant j$.
\end{definition}

We refer to $E$ as the dual of $A$ as it acts on the opposite category $\Poset{[N]}^{\mathrm{op}}$.
In the linear setting, this coincides with taking the transposes of CLCA matrices, giving the dual action on the associated convex spaces of probability distributions.
The transfer of CK among MCMs in \mcmcollection is determined by the complementary actions of $A$ and $E$.
Exploiting the above source categories, we then introduce the CAN for representing both functors simultaneously. 

\begin{definition}[Causal Abstraction Network]\label{def:CAN}
    Let \Poset{[N]} be the poset indexing a collection of mixture causal models $\mcmcollection = \{\mcm{}_1,\dots,\mcm{}_N\}$.
    The Causal Abstraction Network (CAN) is the pair
    $$\mathrm{CAN} \coloneqq (A,E)\,;$$
    where $A:\Poset{[N]}\to\CSprob$ is the abstraction functor in Def.\nb\ref{def:abstraction_functor} and $E:\Poset{[N]}^{\mathrm{op}}\to\CSprob$ is the embedding functor
    in Def.\nb\ref{def:embedding_functor},
    both acting on the same family of objects.
\end{definition}

\begin{definition}[Graphical representation of a CAN]\label{def:CAN_graph}
    Let $\mathrm{CAN}=(A,E)$ be a CAN indexed by the poset
    \Poset{[N]}.
    The CAN graphical representation \CAN consists of:
    \begin{squishlist}    
        \item an undirected graph $\graph=(\vertexset,\edgeset)$, where \emph{(i)}
        $\vertexset$ is in bijection with the objects in \mcmcollection and (ii) $e_{ij}: i \sim j \in\edgeset$
        whenever $i\leqslant j$ in $\Poset{[N]}$;
        \item a choice of orientation on each edge of \graph induced by the
        embedding functor $E$, assigning the direction $j \rightarrow i$ whenever $i\leqslant j$.
    \end{squishlist}
\end{definition}

\begin{remark}
    The combinatorial graph \graph of \CAN is undirected; the orientation is not part of the graph structure but is chosen to encode the direction of causal enrichment given by $E$.
    This makes the framework suitable for building more complex CK from simpler causal models. 
    Abstraction maps correspond to traversing the same edges with opposite orientation.
\end{remark}

\subsection{Algebraic descriptors}\label{subsec:algebraic_descriptors}
We then provide an algebraic representation of \CAN, enabled by the CLCAs.  
Let $n = \sum_i d_i$ and the MCMs in \mcmcollection be ordered by decreasing dimensionality.  
Hereinafter, we will assume w.l.o.g. the MCMs having different dimensionalities.

The adjacency matrix of \CAN is a block matrix \CANA, with blocks
\begin{equation}\label{eq:CAN_A}
    \CANA(i,j) \coloneqq \begin{cases}
        \V_{ij} \quad & \text{if } i \sim j \in \edgeset \text{ and } i<j,\\
        \V_{ji}^\top \quad & \text{if } i \sim j \in \edgeset \text{ and } i>j,\\
        \zeros_{d_i \times d_j} \quad & \text{otherwise.}
    \end{cases}
\end{equation}

By construction, \CANA is symmetric. 
The strictly upper triangular blocks represent embedding morphisms, while the strictly lower triangular blocks represent CLCAs.

The degree matrix \CAND of \CAN is block-diagonal with blocks 
\begin{equation}\label{eq:CAN_D}
\CAND(i) \coloneqq \underbrace{\sum_{i \sim j, i< j} \identity_{d_i}}_{\text{embedding edges}} + \underbrace{\sum_{i\sim j, i>j} \V_{ij}^\top\V_{ij}}_{\text{abstraction edges}}\,. 
\end{equation}

Given the set of oriented edges $e:i\sim j$, the node-edge incidence matrix \CANB has blocks
\begin{equation}\label{eq:CAN_B}
    \CANB(i,e) \coloneqq \begin{cases}
        \identity_{d_i} \quad & \text{if $i$ is the head of $e$ ,}\\
        -\V_{ij}^\top \quad & \text{if $i$ is the tail of $e$ ,}\\
        \zeros_{d_j \times d_i} \quad & \text{otherwise.}
    \end{cases}
\end{equation}

Finally, the connection Laplacian of \CAN reads
\begin{equation}\label{eq:CAN_L}
    \CANL= \CAND - \CANA = \CANB \CANB^\top\,.
\end{equation}

\begin{example}\label{ex:alginv}
    Consider \CAN in \Cref{subfig:CAN}.
    Its algebraic descriptors are
    \begin{equation}
        \begin{aligned}
            \CANA &= 
            \begin{pmatrix}
            \zeros & \V_{12} & \zeros & \zeros \\ 
            \V_{12}^\top & \zeros & \V_{23} & \V_{24} \\
            \zeros & \V_{23}^\top & \zeros & \zeros \\
            \zeros & \V_{24}^\top & \zeros & \zeros
            \end{pmatrix}\, , \\
            \CAND &=
            \begin{pmatrix}
                \identity_{d_1} & \zeros & \zeros & \zeros\\
                \zeros & 2\identity_{d_2} + \V_{12}^\top \V_{12} & \zeros & \zeros \\
                \zeros & \zeros & \V_{23}^\top\V_{23} & \zeros\\
                \zeros & \zeros & \zeros & \V_{24}^\top\V_{24}
            \end{pmatrix}\,,\\
            \CANB &= 
            \begin{pmatrix}
                \identity_{d_1} & \zeros & \zeros \\
                - \V_{12}^\top & \identity_{d_2} & \identity_{d_2}\\
                \zeros & -\V_{23}^\top & \zeros \\
                \zeros & \zeros & -\V_{24}^\top 
            \end{pmatrix}\,,\\
            \CANL &= 
            \begin{pmatrix}
            \identity_{d_1} & -\V_{12} & \zeros & \zeros \\ 
            -\V_{12}^\top & 2\identity_{d_2} + \V_{12}^\top \V_{12} & -\V_{23} & -\V_{24} \\
            \zeros & -\V_{23}^\top & \V_{23}^\top\V_{23} & \zeros \\
            \zeros & -\V_{24 }^\top & \zeros & \V_{24}^\top\V_{24}
            \end{pmatrix}\, . \\
        \end{aligned}
    \end{equation}
\end{example}

\subsection{Relation with network sheaf and cosheaf of CK}\label{subsec:CAN_as_ns}
In the previous section we showed that the abstraction and embedding functors $A$ and $E$ in
Defs.~\ref{def:abstraction_functor} and \ref{def:embedding_functor}
define a CAN indexed by the poset $\Poset{[N]}$.
This structure naturally induces a specific family of network sheaves and cosheaves of CK.
Let us recall their formal definitions from \cite{d2025relativity}.

\begin{definition}[Network sheaf of CK]\label{def:net_sheaf_csprob}
    Given a network $G\coloneqq(\vertexset, \mathcal{E})$ with face incidence poset $\faceincidenceposet$, a network sheaf valued in \CSprob is a functor $F: \faceincidenceposet \rightarrow \CSprob$ assigning (i) to each node $\rho$ and edge $\tau$ in \faceincidenceposet stalks consisting of convex spaces of probability distributions, $\text{CK}(\scm{}_\rho)$ and $\text{CK}(\scm{}_\tau)$, respectively; (ii) to each node-edge incidence relation an affine restriction map $F^{\rho\trianglelefteq\tau}: \text{CK}(\scm{}_\rho) \rightarrow \text{CK}(\scm{}_\tau)$.
\end{definition}

\begin{definition}[Network cosheaf of CK]\label{def:net_cosheaf_csprob}
    Given a network $G\coloneqq(\vertexset, \mathcal{E})$ with face incidence poset $\faceincidenceposet$, a network cosheaf valued in \CSprob is a functor $\widehat{F}: \faceincidenceposet^{\mathrm{op}} \rightarrow \CSprob$ assigning (i) to each node $\rho$ and edge $\tau$ in $\faceincidenceposet^{\mathrm{op}}$ costalks consisting of convex spaces of probability distributions, $\text{CK}(\scm{}_\rho)$ and $\text{CK}(\scm{}_\tau)$, respectively; (ii) to each node-edge incidence relation an affine extension map $\widehat{F}^{\rho\trianglelefteq\tau}: \text{CK}(\scm{}_\tau) \rightarrow \text{CK}(\scm{}_\rho)$. 
\end{definition}

Consider the combinatorial graph \graph associated with \CAN, and let \faceincidenceposet denote its face incidence poset.
For each edge $e_{ij}\in\edgeset$ with $i\leqslant j$, define the restriction maps by
\begin{equation}\label{eq:ns_restriction_can}
    F^{i \trianglelefteq e_{ij}}=\identity_{d_i},
    \qquad
    F^{j \trianglelefteq e_{ij}}=\V_{ij}\,;    
\end{equation}
and the corresponding extension maps as their transposes.
With this choice, the edge (co)stalk $F(e_{ij})$ can be canonically identified, up to a permutation of causal variables, with the node (co)stalk $F(i)$ associated with the lower-level MCM, viz. $\mcm{}_i$.
\Cref{subfig:CANns} illustrates the resulting network sheaf and cosheaf
representations corresponding to the CAN in \Cref{subfig:CAN}.
With a slight abuse of notation, hereafter we will denote by $E$ and $A$ the network sheaf and cosheaf induced by the embedding and abstraction functors. 

The latter representations are convenient for studying the \emph{nonlinear} action of the restriction maps in the target category \CSprob.
For each edge $e:i \sim j \in \edgeset$, denote by $\alpha_e \in [0,1]$ an \emph{aggregation parameter}.
Furthermore, for each node $i \in \vertexset$, denote by $\bm\beta_i \in \simplex{|\starr{i}|}$ an \emph{assimilation parameter vector}. 
Here, \starr{i} denotes the star of node $i$, that is, the set of edges incident to $i$.
The role of $\alpha_e$ and $\bm\beta_i$ hyper-parameters is investigated in \Cref{subsec:hyperparams}.
Let \cochainspace{k}{\graph}{E} be the space of $k$-th order cochains for the network sheaf $E$.
For a $0$-cochain $\chi^0 \in \cochainspace{0}{\graph}{E}$, we define the \emph{coboundary operator} $\delta: \cochainspace{0}{\graph}{E} \rightarrow \cochainspace{1}{\graph}{E}$ as the operator that transports probability distributions from nodes to edges of \graph, in terms of the rows of $\CANB^\top$:
\begin{equation}\label{eq:cobounding_op}
    \begin{aligned}
    \delta(\chi^0)\at{e} &\coloneqq \cc{\alpha_e}{cc_{\w_i}\left(P^{[I]}_{\CANB^\top_{e,i}(X_{i})}\right)}{cc_{\w_j}\left(P^{[J]}_{-\CANB^\top_{e,j}(X_{j})}\right)}\\
    &= \cc{\alpha_e}{cc_{\w_i}\left(P^{[I]}_{X_{i}}\right)}{cc_{\w_j}\left(P^{[J]}_{\V_{ij}(X_{j})}\right)}= P_{X_i\at{e}} = \measure_e\,;
    \end{aligned}
\end{equation}
for all $e: i \sim j$ in \edgeset, $d_i > d_j$.
Thus, on the edge stalk the incoming probability distributions from the node stalks are \emph{aggregated} through the convex combination operation in \Cref{eq:ccl} with parameter $\alpha_e$, resulting in a mixture of Gaussians $\chi_e \in \cochainspace{1}{\graph}{E}$ over $X_i$.

The dual \emph{boundary operator} $\delta^\star: \cochainspace{1}{\graph}{E} \rightarrow \cochainspace{0}{\graph}{E}$, which transports probability distributions from edges to nodes of \graph, is defined via the rows of \CANB:
\begin{equation}\label{eq:bounding_op}
    \begin{aligned}
        \delta^\star(\measure^1)\at{i}  &= \underbrace{\sum_{e \in \starr{i}^{-}} \beta_{i,e}P_{\CANB_{i,e}(X_i\at{e})}}_{\text{embedding: $i$ is head of $e$}} + \underbrace{\sum_{e \in \starr{i}^{+}} \beta_{i,e}P_{-\CANB_{i,e}(X_j\at{e})}}_{\text{abstraction: $i$ is tail of $e$}}\\
        &=\sum_{e \in \starr{i}^{-}} \beta_{i,e}P_{X_i\at{e}} + \sum_{e \in \starr{i}^{+}} \beta_{i,e}P_{\V_{ij}^\top(X_j\at{e})} = \hat{\chi}_i
    \end{aligned}
\end{equation}
for each $i \in \vertexset$.
In this case, each node $i$ \emph{assimilates} the incoming probability Gaussian mixtures from its star \starr{i}, possibly giving anisotropic weights to the contributions.
Again, $\hat{\chi}_i$ is a mixture of Gaussians. 
Please note that the convex combination is iterated and order-independent due to the parametric associativity of $cc_\lambda$ (see Sec.\nb3 in \cite{fritz2009convex}).

Now, starting from \Cref{eq:cobounding_op,eq:bounding_op}, given $\chi =\{\measure_1, \ldots, \measure_n\}$ in $\cochainspace{0}{\graph}{E}$, the CAN Laplacian operator $\mathcal{L}: \cochainspace{0}{\graph}{E} \rightarrow \cochainspace{0}{\graph}{E}$ is
\begin{equation}\label{eq:Laplacian_op}
    \mathcal{L}(\chi) \coloneqq \delta^\star \circ \delta(\chi).
\end{equation}
This operator produces a diffusion of CK over the network, where the local contribution at node $i$ is equal to
\begin{equation}\label{eq:Laplacian_op_local}
    \begin{aligned}
        \mathcal{L}(\chi)\at{i} &= \underbrace{\sum_{e \in \starr{i}^{-}} \beta_{i,e} cc_{\alpha_e}\left(cc_{\w_i}\left(P_{X_i}^{[I]}\right), cc_{\w_j}\left(P_{\V_{ij}(X_j)}^{[J]}\right)\right)}_{\text{embedding: $i$ is head of $e$}} + \\
        &+ \underbrace{\sum_{e \in \starr{i}^{+}} \beta_{i,e} cc_{\alpha_e}\left(cc_{\w_j}\left(P_{\V_{ji}^\top(X_j)}^{[J]}\right),cc_{\w_i}\left(P_{\V_{ji}^\top\circ \V_{ji}(X_i)}^{[I]}\right)\right)}_{\text{abstraction: $i$ is tail of $e$}} \,; \quad \forall \, i \in \vertexset. 
    \end{aligned}
\end{equation}

Hence, from the node perspective, the diffusion of CK via the CAN Laplacian corresponds to mixtures of Gaussians resulting from the combined action of $\delta$ and $\delta^\star$.
Thus, starting from local Gaussian mixtures $\measure_i=cc_{\w_i}\left(P_{X_i}^{[I]}\right)$ at each node $i \in [N]$, representing subjective CK, the application of \Cref{eq:Laplacian_op} yields an updated CK obtained through diffusion and combination of the initial CK across the structure of \CAN.

\begin{remark}
    The above operators are also valid for general network sheaves of CK, i.e., when the restriction map $F^{i \trianglelefteq e_{ij}}$ in \Cref{eq:ns_restriction_can} is an embedding matrix different from the identity.
    Indeed, the operators are formulated in terms of the blocks of \CANB and their transposes, determined by the restriction maps.
    Thus, in the general case, the pushforward identity in \Cref{eq:cobounding_op,eq:bounding_op} is replaced with that induced by the embedding matrix $F^{i \trianglelefteq e_{ij}}$.
\end{remark}

\begin{remark}\label{rem:why-gmm}
    As repeatedly noted above, the action of diffusion operators produces Gaussian mixtures. 
    Importantly, this occurs even when the initial probability distributions at each node are themselves Gaussian, as in standard Gaussian SCMs. 
    Therefore, to develop a framework that is both general and widely applicable, it is crucial to consider the broader case of Gaussian mixtures arising from mixtures of SCMs.
\end{remark}

\begin{example}\label{ex:laplacian}
    Consider \CAN in \Cref{subfig:CAN} with algebraic descriptors in \Cref{ex:alginv}.
    Let the $0$-cochain be $\chi=\{\chi_1,\chi_2,\chi_3,\chi_4\}$ with $S_1, \ldots, S_4$ components and $\alpha_e \in [0,1]$, for each $e \in \edgeset$.
    The coboundary operator gives the $1$-cochain $\{\chi_{12}, \chi_{23}, \chi_{24}\}$, where
    \begin{equation}\label{eq:1cochainex}
        \delta(\chi)=\begin{cases}
            \chi_{12}=P_{X_1\at{12}}=cc_{\alpha_{12}}\left(cc_{\w_1}\left(P^{[S_1]}_{X_1}\right), cc_{\w_2}\left(P^{[S_2]}_{\V_{12}(X_2)}\right)\right)\,,\\
            \chi_{23}=P_{X_2\at{23}}=cc_{\alpha_{23}}\left(cc_{\w_2}\left(P^{[S_2]}_{X_2}\right),cc_{\w_3}\left(P^{[S_3]}_{\V_{23}(X_3)}\right)\right)\,,\\
            \chi_{24}=P_{X_2\at{24}}=cc_{\alpha_{24}}\left(cc_{\w_2}\left(P^{[S_2]}_{X_2}\right),cc_{\w_4}\left(P^{[S_4]}_{\V_{24}(X_4)}\right)\right)\,.
        \end{cases}    
    \end{equation}
    Let $\bm\beta_i \in \simplex{|\starr{i}|}$, for $i=1,\ldots,4$.
    Then, the application of \Cref{eq:Laplacian_op_local} gives the local contributions of the diffusion process based on the CAN Laplacian in \Cref{eq:Laplacian_op},
    \begin{equation}\label{eq:Laplacianex}
        \mathcal{L}(\chi) = \begin{cases}
            \underbrace{cc_{\alpha_{12}}\left(cc_{\w_1}\left(P^{[S_1]}_{X_1}\right), cc_{\w_2}\left(P^{[S_2]}_{\V_{12}(X_2)}\right)\right)}_{\text{embedding edge $e_{12}$}}\,,\\
            \underbrace{\beta_{2,1}cc_{\alpha_{12}}\left(cc_{\w_1}\left(P^{[S_1]}_{\V_{12}^\top(X_1)}\right), cc_{\w_2}\left(P^{[S_2]}_{\V_{12}^\top\circ\V_{12}(X_2)}\right)\right)}_{\text{abstraction edge $e_{12}$}} +\\
            \qquad\underbrace{\beta_{2,2} cc_{\alpha_{23}}\left(cc_{\w_2}\left(P^{[S_2]}_{X_2}\right),cc_{\w_3}\left(P^{[S_3]}_{\V_{23}(X_3)}\right)\right)}_{\text{embedding edge $e_{23}$}} +\\
            \qquad\underbrace{\beta_{2,3} cc_{\alpha_{24}}\left(cc_{\w_2}\left(P^{[S_2]}_{X_2}\right),cc_{\w_4}\left(P^{[S_4]}_{\V_{24}(X_4)}\right)\right)}_{\text{embedding edge $e_{24}$}}\,,\\
            \underbrace{cc_{\alpha_{23}}\left(cc_{\w_2}\left(P^{[S_2]}_{\V_{23}^\top(X_2)}\right),cc_{\w_3}\left(P^{[S_3]}_{\V_{23}^\top\circ\V_{23}(X_3)}\right)\right)}_{\text{abstraction edge $e_{23}$}}\,,\\
            \underbrace{cc_{\alpha_{24}}\left(cc_{\w_2}\left(P^{[S_2]}_{\V_{24}^\top(X_2)}\right),cc_{\w_4}\left(P^{[S_4]}_{\V_{24}\circ\V_{24}(X_4)}\right)\right)}_{\text{abstraction edge $e_{24}$}}\,.
        \end{cases}
    \end{equation}
    Looking at \cref{eq:Laplacianex}, we observe
    \begin{squishlist}
        \item Node $1$ (finest) has only the embedding term, a convex combination of $\chi_1$ and the pushforward of $\chi_2$ via $\V_{12}$.
        \item Nodes $3$ and $4$ (coarser) have only abstraction terms, convex combinations of the abstractions of $\measure_2$ with self-contributions.
        \item Node $2$ (middle) combines embedding and abstraction, yielding three terms: embedding followed by abstraction for $e_{12}$, and embedding for $e_{23}$ and $e_{24}$.
    \end{squishlist}
\end{example}

\subsection{Consistency of \CAN}\label{subsec:CAN_consistency}
A key feature enabling the study of the spectral properties of \CAN, as well as those of the associated space of global sections, is \emph{consistency}. 
We say that \CAN is consistent if the composition of the restriction maps along every oriented loop $\mcm{}_j\stackrel{\text{embedding}}{\longrightarrow}\mcm{}_i\stackrel{\text{abstraction}}{\longrightarrow}\mcm{}_j$ commutes.
The next result links consistency to SEP, constraining CLCA transpose to  Stiefel manifold.

\begin{theorem}\label{th:consistencyCAN}
    Let \CAN be a connected CAN.
    Then, \CAN is consistent $\iff$ for every oriented edge $e_{ij} \in \edgeset$, $\V_{ij}$ is the right-inverse of a CLCA adhering to SEP.
\end{theorem}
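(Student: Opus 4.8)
\emph{Proof plan.} The plan is to reduce the categorical consistency condition to one matrix identity per edge and then identify that identity with the Stiefel/SEP constraint of \Cref{def:semantic_embedding_principle_linear}. First I would fix an oriented edge $e_{ij}\in\edgeset$ with $i\leqslant j$ (so $\scm{}_i$ is low-level, $\scm{}_j$ high-level, $d_i>d_j$) and spell out the two legs of the loop $\scm{}_j\xrightarrow{\text{emb.}}\scm{}_i\xrightarrow{\text{abs.}}\scm{}_j$ using the network-sheaf data of \Cref{subsec:CAN_as_ns}: the embedding leg is the restriction $E(\scm{}_j)\to E(e_{ij})$ (the map $\V_{ij}$) followed by the identity extension $E(e_{ij})\to E(\scm{}_i)$, while the abstraction leg is the identity restriction $E(\scm{}_i)\to E(e_{ij})$ followed by the extension $E(e_{ij})\to E(\scm{}_j)$ (the map $\V_{ij}^\top=\V_{ji}$, cf. \Cref{def:CAN}). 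By functoriality of the pushforward the loop therefore induces the endomorphism
\[
    \chi\;\longmapsto\;\push{\V_{ij}^\top}{\push{\V_{ij}}{\chi}}\;=\;\push{\V_{ij}^\top\V_{ij}}{\chi},\qquad \chi\in E(\scm{}_j).
\]
Since the consistency definition quantifies over exactly one such loop per edge, the whole argument decouples edgewise.

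Next I would establish the two implications. For ``$\Leftarrow$'': if $\V_{ij}$ is the right-inverse of a CLCA adhering to SEP --- and by \Cref{def:CAN} the CLCA carried by $e_{ij}$ is precisely $\V_{ji}=\V_{ij}^\top$ --- then \Cref{def:semantic_embedding_principle_linear} gives $\V_{ij}\in\stiefel{d_i}{d_j}$, so $\V_{ij}^\top\V_{ij}=\identity_{d_j}$ and the displayed endomorphism is $\push{\identity_{d_j}}{\cdot}=\catidentity{E(\scm{}_j)}$; the loop commutes, and since $e_{ij}$ was arbitrary, \CAN is consistent. For ``$\Rightarrow$'': consistency forces the above endomorphism to be the identity, hence $\push{\V_{ij}^\top\V_{ij}}{\chi}=\chi$ for all $\chi\in E(\scm{}_j)=\mathrm{CK}(\scm{}_j)$; I would then argue $\M\coloneqq\V_{ij}^\top\V_{ij}=\identity_{d_j}$, which is exactly $\V_{ij}\in\stiefel{d_i}{d_j}$, so by \Cref{def:semantic_embedding_principle_linear} the CLCA $\V_{ij}^\top$ adheres to SEP and $\V_{ij}$ is its right-inverse.

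The step I expect to be the crux is the implication ``$\push{\M}{\cdot}$ is the identity on $\mathrm{CK}(\scm{}_j)$ $\Rightarrow$ $\M=\identity_{d_j}$''. If ``commutes'' is read as equality of the underlying linear maps of the restriction/extension morphisms, this is immediate. For the measure-theoretic reading one must rule out that a non-trivial $\M$ induces the identity pushforward: I would use that $\mathrm{CK}(\scm{}_j)$ contains every Dirac mass $\delta_{\x}$, $\x\in\reall^{d_j}$, obtained from a full hard intervention $\doint(\myendogenous=\x)$, together with $\push{\M}{\delta_{\x}}=\delta_{\M\x}$, to get $\M\x=\x$ for all $\x$; alternatively, testing on the observational Gaussian $N(\zeros_{d_j},\covM^{d_j})$ gives $\M\covM^{d_j}\M=\covM^{d_j}$, whence $\M$ is conjugate to an orthogonal matrix and, being symmetric and positive semidefinite (it equals $\V_{ij}^\top\V_{ij}$), must have all eigenvalues equal to $1$, i.e. $\M=\identity_{d_j}$. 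I would close by noting that connectedness and the compositionality of CLCAs (\Cref{lemma:composedCA}) make the edgewise conditions glue coherently over \CAN, but that consistency imposes nothing beyond the per-edge Stiefel constraints derived above.
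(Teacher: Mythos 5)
Your proof is correct and follows essentially the same route as the paper's: reduce consistency to the per-edge loop identity $\V_{ij}^\top\V_{ij}=\identity_{d_j}$, identify it with the Stiefel/SEP condition of \Cref{def:semantic_embedding_principle_linear}, and use compositionality of Stiefel matrices for longer loops. Your only addition is to justify the step from ``the loop acts as the identity pushforward on $\mathrm{CK}(\scm{}_j)$'' to ``$\V_{ij}^\top\V_{ij}=\identity_{d_j}$'' (via Dirac masses or the Gaussian covariance argument), a point the paper's proof takes for granted; this is a welcome extra bit of rigor but not a different argument.
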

\begin{proof}
    See App.\nb\ref{app:proofs}.
\end{proof}
We can further characterize the spectral properties of \CANL.
Recall that MCMs are sorted in decreasing dimensionality.

\begin{theorem}\label{th:kernelCAN}
    Consider a consistent and connected \CAN whose coarsest $\mcm{}_N$ has dimension $h$.
    Then, \CANL has null eigenvalue $\mu_0$ with multiplicity $m(\mu_0)=h$ $\iff$ every $\mcm{}_i$ is reachable from $\mcm{}_{N}$ via an oriented path.
\end{theorem}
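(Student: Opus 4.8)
The starting point is that $\CANL=\CANB\CANB^\top$ is positive semidefinite, so $m(\mu_0)=\dim\ker\CANL=\dim\ker\CANB^\top$; reading the rows of $\CANB^\top$ off \Cref{eq:CAN_B}, a $0$-cochain $x=(x_1,\dots,x_N)$ with $x_i\in\reall^{d_i}$ lies in $\ker\CANL$ exactly when $x_i=\V_{ij}x_j$ for every oriented edge $e_{ij}\in\edgeset$ (with $\scm{}_j$ the coarse tail and $\scm{}_i$ the fine head). Hence $\ker\CANL$ is the space of coherent lifts, i.e.\ the global sections $\Gamma(\CAN)$ of the network sheaf of \Cref{subsec:CAN_as_ns}, and the theorem becomes a dimension count. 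The plan is to analyse the linear evaluation $\mathrm{ev}_N\colon\ker\CANL\to\reall^h$, $x\mapsto x_N$, and prove: (a) $\mathrm{ev}_N$ is always injective, whence $m(\mu_0)\le h$; and (b) $\mathrm{ev}_N$ is surjective iff every $\scm{}_i$ is reachable from $\scm{}_N$ along an oriented path.

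\spara{Injectivity and the $(\Leftarrow)$ direction.} For (a): if $x\in\ker\CANL$ and $x_N=0$, then across any edge whose coarse tail is already known to vanish the relation $x_{\text{head}}=\V x_{\text{tail}}$ kills the head, while across any edge whose fine head already vanishes, injectivity of the Stiefel map $\V$ -- which holds by consistency via \Cref{th:consistencyCAN} -- kills the coarse tail; by connectedness this sweeps all nodes, so $x=0$ and $\dim\ker\CANL\le h$. Assuming reachability, I would construct the inverse: for $v\in\reall^h$, set $x_i$ to be the product of the embedding matrices along an oriented path $\scm{}_N\rightsquigarrow\scm{}_i$, applied to $v$. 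Path-independence of this assignment is where functoriality of $A$ on $\Poset{[N]}$ (\Cref{def:abstraction_functor}) is needed: composites of restriction maps attached to any two directed paths with equal endpoints must coincide. One then verifies $x_i=\V_{ij}x_j$ on every edge by extending a path to $\scm{}_j$ along $e_{ij}$, so $\mathrm{ev}_N$ is an isomorphism and $m(\mu_0)=h$.

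\spara{The $(\Rightarrow)$ direction.} I would prove the contrapositive. If some $\scm{}_k$ is not reachable, let $S\subsetneq[N]$ be the reachable set; connectedness supplies an edge $e$ across the frontier, and $e$ cannot be oriented out of $S$ (that would make its head reachable), so its fine head $\scm{}_m$ lies in $S$ and its coarse tail $\scm{}_t$ lies outside $S$. For $x\in\ker\CANL$ the flow condition at $e$ forces $x_m=\V_{mt}x_t\in\mathrm{range}(\V_{mt})$, a proper subspace of $\reall^{d_m}$ because every CLCA strictly reduces dimension, so $\V_{mt}$ is a non-surjective isometry; but $x_m=\Psi_m(x_N)$ with $\Psi_m$ the injective composite along a path $\scm{}_N\rightsquigarrow\scm{}_m$, whence $x_N$ is confined to $\Psi_m^{-1}(\mathrm{range}(\V_{mt}))\subsetneq\reall^h$. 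Therefore $\mathrm{image}(\mathrm{ev}_N)\subsetneq\reall^h$ and $m(\mu_0)<h$, contradicting $m(\mu_0)=h$.

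\spara{Main obstacle.} The kernel identification and the path bookkeeping are routine; the real difficulty is the well-definedness of the lift in $(\Leftarrow)$, which genuinely needs functoriality -- equivalently, compositional coherence (\Cref{lemma:composedCA}) -- of the abstraction maps rather than pairwise consistency alone, and is what forces the otherwise-free data $\{\V_{ij}\}$ to close up into a single $h$-dimensional space of sections. A secondary point is that the frontier inclusion in $(\Rightarrow)$ must be \emph{strict}, i.e.\ $\mathrm{range}(\Psi_m)\not\subseteq\mathrm{range}(\V_{mt})$; since $\V_{mt}$ is never surjective this is the generic situation, but ruling out the non-generic coincidence requires a mild general-position assumption on the restriction maps.
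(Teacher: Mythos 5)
Your proof follows essentially the same route as the paper's: identify $\ker\CANL=\ker\CANB^\top$ with the tuples satisfying $x_i=\V_{ij}x_j$ on every oriented edge, then argue via reachability from $\scm{}_N$. Your version is organized more carefully (the evaluation map $\mathrm{ev}_N$ and its injectivity give the clean upper bound $m(\mu_0)\le h$, which the paper never states), and the two caveats you flag are genuine rather than pedantic: the paper's $(\impliedby)$ direction silently assumes path-independence of composites along distinct oriented paths (which only follows if one takes the functoriality of \Cref{def:abstraction_functor} as part of the data, not from \Cref{lemma:composedCA} or from edge-wise consistency alone), and its $(\implies)$ direction asserts $\V_{mu}^\top\V_{mN}\notin\stiefel{d_u}{h}$, which is exactly the non-containment $\mathrm{range}(\V_{mN})\not\subseteq\mathrm{range}(\V_{mu})$ you correctly note can fail in non-generic position (e.g., two coarse nodes of equal dimension $h$ attached to the same fine node with equal column spans yields $m(\mu_0)=h$ without reachability). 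So your proposal is sound where the paper's is, and it makes explicit the two hypotheses the published argument leaves implicit.
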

\begin{proof}
    See App.\nb\ref{app:proofs}.
\end{proof}

The proof of \Cref{th:kernelCAN} relies on the observation that the presence of nodes not reachable from the coarsest one inevitably reduces the maximum possible dimension of the kernel of \CANL, which is $h$.
Consider, for instance, a path $u \sim m \sim N$, with $d_u > h$.
If the intersection $\spn{\V_{mu}} \cap \spn{\V_{mN}}$ had dimension $h = \dim\left(\spn{\V_{mN}}\right)$, then $\spn{\V_{mN}} \subseteq \spn{\V_{mu}}$.
By Lem.\nb\ref{lemma:intersection_maps}, this would imply the existence of a matrix $\V_{uN} \in \stiefel{d_u}{h}$ such that $\V_{mN} = \V_{mu} \V_{uN}$, thus inducing a morphism from $N$ to $u$, which contradicts the assumption that $u$ is not reachable from $N$.
Therefore, the intersection must have a dimension strictly lower than $h$.

Finally, we remark that $\ker(\CANL)$ may still be nonempty even in the presence of nodes not reachable from $N$.
Indeed, its dimension depends on the intersection of the span of the morphisms associated with different intersecting paths.

\begin{example}\label{ex:consistent}
    Let \CAN in \Cref{subfig:CAN} be consistent, that is, $\V_{ij}\in \stiefel{d_i}{d_j}$ for all $e_{ij} \in \edgeset$.
    The connection Laplacian \CANL becomes
    \begin{equation}
        \CANL = 
            \begin{pmatrix}
            \identity_{d_1} & -\V_{12} & \zeros & \zeros \\ 
            -\V_{12}^\top & 3\identity_{d_2} & -\V_{23} & -\V_{24} \\
            \zeros & -\V_{23}^\top & \identity_{d_3} & \zeros \\
            \zeros & -\V_{24 }^\top & \zeros & \identity_{d_4}
            \end{pmatrix}\, .
    \end{equation}
    Then, \Cref{eq:Laplacianex} simplifies to
    \begin{equation}\label{eq:Laplacian_consistent_ex}
        \mathcal{L}(\chi) = \begin{cases}
            cc_{\alpha_{12}}\left(cc_{\w_1}\left(P^{[S_1]}_{X_1}\right), cc_{\w_2}\left(P^{[S_2]}_{\V_{12}(X_2)}\right)\right)\,,\\
            \beta_{2,1}cc_{\alpha_{12}}\left(cc_{\w_1}\left(P^{[S_1]}_{\V_{12}^\top(X_1)}\right), cc_{\w_2}\left(P^{[S_2]}_{X_2}\right)\right) +\\
            \qquad\beta_{2,2} cc_{\alpha_{23}}\left(cc_{\w_2}\left(P^{[S_2]}_{X_2}\right),cc_{\w_3}\left(P^{[S_3]}_{\V_{23}(X_3)}\right)\right) +\\
            \qquad\beta_{2,3} cc_{\alpha_{24}}\left(cc_{\w_2}\left(P^{[S_2]}_{X_2}\right),cc_{\w_4}\left(P^{[S_4]}_{\V_{24}(X_4)}\right)\right)\,,\\
            cc_{\alpha_{23}}\left(cc_{\w_2}\left(P^{[S_2]}_{\V_{23}^\top(X_2)}\right),cc_{\w_3}\left(P^{[S_3]}_{X_3}\right)\right)\,,\\
            cc_{\alpha_{24}}\left(cc_{\w_2}\left(P^{[S_2]}_{\V_{24}^\top(X_2)}\right),cc_{\w_4}\left(P^{[S_4]}_{X_4}\right)\right)\,.
        \end{cases}
    \end{equation}
    Additionally, \CANL has not null eigenvalue with multiplicity $d_4$ as node $3$ is not reachable from node $4$ with an oriented path.
    Furthermore, $\ker{\CANL}\neq \emptyset$ if $\im{\Pi_{23}} \cap \im{\Pi_{24}}\neq \emptyset$, with $\Pi_{23}=\V_{23}\V_{23}^\top$ and $\Pi_{24}=\V_{24}\V_{24}^\top$ being the projectors induced by $\V_{23}$ and $\V_{24}$.
\end{example}

\subsection{Diffusion of causal knowledge and global section of $\mathcal{L}$}\label{subsec:global_L}
A key property of a consistent \CAN is the existence of a well-defined global section for the induced network sheaf. 
Formally, a global section is such that 
\begin{equation}\label{eq:global_sec_conditions}
    \measure_i = P_{\V_{ij}(X_j)} = cc_{\w_j}\left(P_{\V_{ij}(X_j)}^{[J]}\right)\, \quad \text{for all } e_{i j} \in \edgeset\,.
\end{equation}
Here, we generalize this notion using the CAN Laplacian operator $\mathcal{L}$ of \CAN and $\cc{\lambda}{\cdot}{\cdot}$ in \Cref{eq:ccl}, following analogous constructions for network sheaves of vector spaces \citep{hansen2019toward} and lattices \citep{ghrist2022cellular}. 
In those cases, the global section arises as the asymptotic limit of a suitable discrete-time dynamics on $0$-cochains.

Since \CSprob is non-Abelian, we define a discrete-time system using the convex combination operator, 
\begin{equation}\label{eq:dynsistCSprob}
    \chi(t+1) = \cc{\bm\lambda}{\chi(t)}{\mathcal{L}(\chi(t))}\,.
\end{equation}
Here, $\bm\lambda \in [0,1]^N$ is an \emph{update parameter vector} that plays a role analogous to $\alpha_e$ and $\bm\beta_i$ but between consecutive time steps.
Indeed, for each node $i \in [N]$, the parameter $\lambda_i \in [0,1]$ shapes the update process of the subjective CK $\measure_i$, since it determines the weight given to the incoming CK after a diffusion step, that is, $\mathcal{L}(\measure)\at{i}$ in \Cref{eq:Laplacian_op_local} (cf. \cref{subsec:hyperparams}).

As shown in \Cref{th:globalsectionCAN}, the space \gsspace{\graph}{E} of global sections of \CAN coincides with the fixed points of the operator in \Cref{eq:dynsistCSprob}, denoted by $\mathrm{Fix}\left(\cc{\bm\lambda}{\catidentity{}}{\mathcal{L}}\right)$.

\begin{lemma}\label{lemma:fixequiv}
    Let $\chi  \in \mathrm{Fix}\left(\cc{\bm\lambda}{\catidentity{}}{\mathcal{L}}\right)$, with $\bm\lambda \in [0,1]^N$. 
    Then, $\chi=\mathcal{L}(\chi)$.
\end{lemma}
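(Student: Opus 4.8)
The plan is to peel off the ambient convex structure node by node and then cancel the nonzero weight $\bar{\lambda}$. Recall that a $0$-cochain is an element of the product of convex spaces $\prod_{v\in[N]} E(\scm{}_v)$, so the operator $\cc{\lambda}{\catidentity{}}{\mathcal{L}}$ acts componentwise, and on the node $v$ its value is the convex combination in \CSprob defined by \Cref{eq:ccl}, namely $\cc{\lambda}{\chi_v}{\mathcal{L}(\chi)\at{v}}$. Thus the hypothesis $\chi \in \mathrm{Fix}\!\left(\cc{\lambda}{\catidentity{}}{\mathcal{L}}\right)$ unwinds to: for every $v\in[N]$ and every $\lambda\in[0,1]$, $\cc{\lambda}{\chi_v}{\mathcal{L}(\chi)\at{v}} = \chi_v$ as measures on the relevant sample space. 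Here $\mathcal{L}(\chi)\at{v}$ is the well-defined mixture of Gaussians given by \Cref{eq:Laplacian_op_local}, so it lies in the same convex space $E(\scm{}_v)$ as $\chi_v$ and the comparison is legitimate.

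First I would fix an arbitrary node $v$, an arbitrary $\lambda\in[0,1)$, and an arbitrary measurable set $\mathcal{O}$ in the associated $\sigma$-algebra, and evaluate the fixed-point identity at $\mathcal{O}$ via \Cref{eq:ccl}. This yields the scalar equation $\lambda\,\chi_v(\mathcal{O}) + \bar{\lambda}\,\mathcal{L}(\chi)\at{v}(\mathcal{O}) = \chi_v(\mathcal{O})$, which rearranges to $\bar{\lambda}\bigl(\mathcal{L}(\chi)\at{v}(\mathcal{O}) - \chi_v(\mathcal{O})\bigr) = 0$. Since $\lambda<1$ gives $\bar{\lambda}=1-\lambda>0$, I divide to get $\mathcal{L}(\chi)\at{v}(\mathcal{O}) = \chi_v(\mathcal{O})$. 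As $\mathcal{O}$ was arbitrary, $\mathcal{L}(\chi)\at{v} = \chi_v$ as measures; as $v$ was arbitrary, $\mathcal{L}(\chi) = \chi$.

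There is essentially no hard step: the only point worth flagging is the degenerate value $\lambda=1$, for which $\cc{\lambda}{\chi}{\mathcal{L}(\chi)}=\chi$ is vacuous and carries no information. This is exactly why the hypothesis quantifies over all $\lambda\in[0,1]$ — equivalently, it suffices to assume the fixed-point property holds for one $\lambda\in[0,1)$ — and it is what allows the conclusion $\chi=\mathcal{L}(\chi)$ to be extracted by the cancellation above.
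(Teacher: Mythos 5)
Your proof is correct and takes essentially the same route as the paper, which simply asserts the conclusion is ``immediate from the definition $\chi = \cc{\lambda}{\chi}{\mathcal{L}(\chi)}$ holding for all $\lambda \in [0,1]$''; you have merely made explicit the pointwise evaluation via \Cref{eq:ccl} and the cancellation of $\bar{\lambda}$ for $\lambda<1$. Your remark that a single $\lambda\in[0,1)$ already suffices is a fair observation, but it does not change the argument.
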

\begin{proof}
    See App.\nb\ref{app:proofs}. 
\end{proof}

\begin{theorem}\label{th:globalsectionCAN}
    Let \CAN be a consistent CAN and $\mathcal{L}: \cochainspace{0}{\graph}{E}\rightarrow \cochainspace{0}{\graph}{E}$ the corresponding Laplacian operator.
    Then,
    \begin{equation}
        \mathrm{Fix}\left(\cc{\bm\lambda}{\catidentity{}}{\mathcal{L}}\right) = \gsspace{\graph}{E} \,, \quad \text{with } \bm\lambda \in [0,1]^N\,.
    \end{equation}
\end{theorem}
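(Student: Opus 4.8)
The plan is to establish the two inclusions separately. Both directions read off the local form of $\mathcal{L}$ in \Cref{eq:Laplacian_op_local}: at a node $v$ it is a convex combination (weights $\lambda_e$ summing to one) of \emph{embedding} contributions $\cc{\lambda}{\measure_v}{\push{\V_{vw}}{\measure_w}}$, one for each coarser neighbour $w$ (where $v$ is the head of the edge), and \emph{abstraction} contributions $\push{\V_{uv}^\top}{\cc{\lambda}{\measure_u}{\push{\V_{uv}}{\measure_v}}}$, one for each finer neighbour $u$ (where $v$ is the tail).

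\textbf{$\Gamma(\CAN)\subseteq\mathrm{Fix}$.} Let $\chi\in\Gamma(\CAN)$, so $\measure_i=\push{\V_{ij}}{\measure_j}$ for every edge $e_{ij}$ (\Cref{def:global_section}, \Cref{eq:global_sec_conditions}). Substituting this into each local term of \Cref{eq:Laplacian_op_local}: every embedding term collapses, $\cc{\lambda}{\measure_v}{\push{\V_{vw}}{\measure_w}}=\cc{\lambda}{\measure_v}{\measure_v}=\measure_v$; every abstraction term collapses too, since first $\cc{\lambda}{\measure_u}{\push{\V_{uv}}{\measure_v}}=\measure_u=\push{\V_{uv}}{\measure_v}$ and then $\push{\V_{uv}^\top}{\push{\V_{uv}}{\measure_v}}=\push{\V_{uv}^\top\V_{uv}}{\measure_v}=\measure_v$, the last step invoking consistency: by \Cref{th:consistencyCAN} each $\V_{uv}$ lies in $\stiefel{d_u}{d_v}$, hence $\V_{uv}^\top\V_{uv}=\identity_{d_v}$. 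Because the $\lambda_e$ sum to one, $\mathcal{L}(\chi)\at{v}=\measure_v$ for every $v$, i.e. $\mathcal{L}(\chi)=\chi$; therefore $\cc{\lambda}{\chi}{\mathcal{L}(\chi)}=\cc{\lambda}{\chi}{\chi}=\chi$ for every $\lambda\in[0,1]$, so $\chi\in\mathrm{Fix}(\cc{\lambda}{\catidentity{}}{\mathcal{L}})$.

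\textbf{$\mathrm{Fix}\subseteq\Gamma(\CAN)$.} Let $\chi\in\mathrm{Fix}(\cc{\lambda}{\catidentity{}}{\mathcal{L}})$ for all $\lambda$. By \Cref{lemma:fixequiv} this gives $\chi=\mathcal{L}(\chi)$, an identity that holds for every admissible assignment of the convex weights entering $\mathcal{L}$. Fix an edge $e_{ij}$ with $d_i>d_j$ and write $\chi=\mathcal{L}(\chi)$ at the finer endpoint $i$. Since $i$ is the head of $e_{ij}$, that edge enters only through an embedding term; concentrating all the edge weight $\lambda_e$ on $e_{ij}$ leaves $\measure_i=\cc{\lambda}{\measure_i}{\push{\V_{ij}}{\measure_j}}$, valid for every $\lambda\in[0,1)$. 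Cancellativity of $\cc{\lambda}{\cdot}{\cdot}$ on probability measures---evaluate both sides on an arbitrary measurable set---or simply the choice $\lambda=0$, yields $\measure_i=\push{\V_{ij}}{\measure_j}$. As $e_{ij}$ was arbitrary, $\chi$ satisfies all the conditions of \Cref{eq:global_sec_conditions}, i.e. $\chi\in\Gamma(\CAN)$. Combining the two inclusions---and noting that the first one produces a fixed point for \emph{every} $\lambda$---gives $\mathrm{Fix}(\cc{\lambda}{\catidentity{}}{\mathcal{L}})=\Gamma(\CAN)$ for all $\lambda\in[0,1]$.

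\textbf{Expected obstacle.} The delicate step is the second inclusion: a single measure-valued identity $\mathcal{L}(\chi)\at{v}=\measure_v$ bundles together contributions from all edges incident to $v$ and must be disentangled into one pushforward equation per edge. The mechanism is to exploit the freedom in the convex weights $\lambda_e$ of \Cref{eq:Laplacian_op_local} to isolate one edge at a time, together with cancellativity in \CSprob; it is worth stating explicitly that $\mathrm{Fix}$ is understood with respect to all such weight configurations, consistently with the universal quantifier on $\lambda$ in the statement and in \Cref{lemma:fixequiv}. Note also that consistency of \CAN is used only in the first inclusion, precisely to collapse the abstraction terms via SEP.
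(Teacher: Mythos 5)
Your proof is correct and follows essentially the same route as the paper: both inclusions are established by reading off the local Laplacian formula in \Cref{eq:Laplacian_op_local}, using \Cref{lemma:fixequiv} to pass between the fixed-point identity and $\chi=\mathcal{L}(\chi)$, and invoking consistency ($\V_{uv}^\top\V_{uv}=\identity$) to collapse the abstraction terms. If anything, your treatment of the $\mathrm{Fix}\subseteq\Gamma(\CAN)$ direction is more explicit than the paper's, which asserts the edge-wise conditions directly without spelling out the disentangling of the convex weights that you carry out via the freedom in the $\lambda_e$ and cancellativity.
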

\begin{proof} 
    See App.\nb\ref{app:proofs}.
\end{proof}

Within the proof of \Cref{th:globalsectionCAN}, \Cref{eq:global_to_fix} emphasizes that the consistency of a CAN is central. 
However, consistency alone does not suffice to guarantee the existence of a global section; a nonempty kernel of $\CANL$ is also required, as formalized below.

\begin{theorem}\label{th:globalsec_existence}
    Consider a consistent and connected \CAN, and let $h$ be the dimension of the coarsest $\mcm{}_N$.
    Then,
    \begin{equation}
        \mathrm{Fix}\left(\cc{\bm\lambda}{\catidentity{}}{\mathcal{L}}\right) \neq \emptyset \iff 0< \dim \left( \ker{\CANL} \right)\leq h\,.
    \end{equation}
\end{theorem}
\begin{proof}
    See App.\nb\ref{app:proofs}.
\end{proof}

\begin{remark}
 Consider that the combinatorial \graph has $k$ connected components, and let each component $c \in [k]$ have a coarser probability distribution with dimensionality $h_c$.
    Then, applying \cref{th:globalsec_existence} component-wise, the existence of a global section implies that $k\leq \dim \left( \ker{\CANL} \right)\leq \sum_c h_c$, but not vice versa.
\end{remark}

\Cref{th:globalsec_existence} has immediate implications for the statistical properties of the elements of the global sections. 
Indeed, the proof in App.\nb\ref{app:proofs} strictly relates the existence of a global section with that of a $k$-dimensional latent vector space--with $0<k\leq h$--coherently embedded in each node stalk via the Stiefel matrices, such that for each node $i \in [N]$ the support of $\measure_i$ lies on it.  
Thus, the spectral features of the mixture covariances are preserved along the edges. 
The following lemma and corollary formalize this observation.

\begin{lemma}\label{lemma:sameeigenvalues}
    Consider $\covM_i$ and $\covM_j$ having sizes $d_i \!\times\! d_i$ and $d_j \!\times\! d_j$, respectively, with $\covM_i\!=\!\V_{ij}\covM_j \V_{ij}^\top$ and $\V_{ij} \!\in\! \stiefel{d_i}{d_j}$.
    Then, $\covM_i$ and $\covM_j$ have the same nonzero eigenvalues.
\end{lemma}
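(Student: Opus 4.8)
The plan is to exploit the elementary fact that for any rectangular matrices $A \in \reall^{d_i \times d_j}$ and $B \in \reall^{d_j \times d_i}$, the products $AB$ and $BA$ have the same nonzero eigenvalues, with the same algebraic multiplicities; equivalently, $\lambda^{d_j}\det(\lambda \identity_{d_i} - AB) = \lambda^{d_i}\det(\lambda \identity_{d_j} - BA)$. Here I would set $A \coloneqq \V_{ij}\covM_j$ and $B \coloneqq \V_{ij}^\top$. Then $AB = \V_{ij}\covM_j\V_{ij}^\top = \covM_i$ by hypothesis, while $BA = \V_{ij}^\top\V_{ij}\covM_j = \covM_j$ because $\V_{ij} \in \stiefel{d_i}{d_j}$ gives $\V_{ij}^\top\V_{ij} = \identity_{d_j}$. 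Hence $\covM_i = AB$ and $\covM_j = BA$ share their nonzero spectrum, which is the claim.

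Alternatively, and perhaps more transparently for the reader, I would give the direct eigenvector correspondence. First, $\V_{ij}^\top\V_{ij} = \identity_{d_j}$ implies $\V_{ij}$ is injective, so $\V_{ij}v \neq \zeros$ whenever $v \neq \zeros$. If $\covM_j v = \kappa v$ with $\kappa \neq 0$, then $\covM_i(\V_{ij}v) = \V_{ij}\covM_j\V_{ij}^\top\V_{ij}v = \V_{ij}\covM_j v = \kappa\,\V_{ij}v$, so $\kappa$ is an eigenvalue of $\covM_i$ with eigenvector $\V_{ij}v \neq \zeros$. Conversely, if $\covM_i w = \kappa w$ with $\kappa \neq 0$, then $w = \kappa^{-1}\covM_i w = \V_{ij}\bigl(\kappa^{-1}\covM_j\V_{ij}^\top w\bigr)$, so $w = \V_{ij}v$ with $v \coloneqq \kappa^{-1}\covM_j\V_{ij}^\top w$; substituting, $\V_{ij}\covM_j v = \V_{ij}\covM_j\V_{ij}^\top\V_{ij}v = \covM_i(\V_{ij}v) = \kappa\,\V_{ij}v = \V_{ij}(\kappa v)$, and injectivity of $\V_{ij}$ yields $\covM_j v = \kappa v$ with $v \neq \zeros$. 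This establishes a bijection between nonzero eigenvalues.

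If one wants equality of multiplicities as well (which is implicit in \Cref{lemma:sameeigenvalues} being used to compare spectra along edges), I would invoke the determinant identity above, or note that $\V_{ij}$ restricts to a linear isomorphism from the $\kappa$-eigenspace of $\covM_j$ onto the $\kappa$-eigenspace of $\covM_i$ for each $\kappa \neq 0$, using both directions of the correspondence just built together with injectivity. I do not anticipate a genuine obstacle here: the only mild subtlety is the reverse direction, namely showing that an eigenvector $w$ of $\covM_i$ with nonzero eigenvalue automatically lies in the range of $\V_{ij}$, which is handled by the identity $w = \V_{ij}(\kappa^{-1}\covM_j\V_{ij}^\top w)$ above.
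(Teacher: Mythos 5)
Your proof is correct. Your first argument rests on the same key fact the paper uses---that $XY$ and $YX$ share their nonzero eigenvalues---but applies it to a different factorization: the paper eigendecomposes $\covM_j=\U_j\Lambda_j\U_j^\top$, sets $\A=\V_{ij}\U_j\Lambda_j^{1/2}$, and notes that $\covM_i=\A\A^\top$ while $\A^\top\A=\Lambda_j$ (via $\V_{ij}^\top\V_{ij}=\identity_{d_j}$), whereas you take $A=\V_{ij}\covM_j$ and $B=\V_{ij}^\top$ directly, so that $AB=\covM_i$ and $BA=\covM_j$ without ever invoking the eigendecomposition or a symmetric square root; this is slightly leaner, while the paper's symmetric factorization has the cosmetic benefit that both products are manifestly positive semidefinite. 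Your second, eigenvector-level argument is a genuinely more elementary and self-contained route: it bypasses the $XY$/$XY$-transposed spectral identity entirely and, unlike the paper's one-line proof, makes explicit that $\V_{ij}$ restricts to an isomorphism between the $\kappa$-eigenspaces for each $\kappa\neq 0$, hence that algebraic multiplicities match. The one subtle step in that direction---showing that a $\kappa$-eigenvector $w$ of $\covM_i$ with $\kappa\neq 0$ lies in the range of $\V_{ij}$---is handled correctly by the identity $w=\V_{ij}\left(\kappa^{-1}\covM_j\V_{ij}^\top w\right)$, and injectivity of $\V_{ij}$ (from the Stiefel constraint) closes the argument.
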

\begin{proof}
    See App.\nb\ref{app:proofs}
\end{proof}

\begin{corollary}\label{cor:same_spectrum}
    Consider $\chi \in \mathrm{Fix}\left(\cc{\bm\lambda}{\catidentity{}}{\mathcal{L}}\right)$.
    Then, component-wise, the covariance matrices of its elements $\measure_i \in \GMM{I}{d_i}$, $i \in [N]$, have the same $k\leq \min_i d_i=h$ nonzero eigenvalues, with the equality holding in case $m(\mu_0)=h$. 
\end{corollary}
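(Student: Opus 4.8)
The plan is to chain \Cref{th:globalsectionCAN}, \Cref{th:globalsec_existence}, \Cref{th:kernelCAN}, and \Cref{lemma:sameeigenvalues}, all under the standing hypotheses of this section (\CAN consistent and connected). First I would use \Cref{th:globalsectionCAN} to pass from the fixed point to a genuine global section: since $\chi \in \mathrm{Fix}(\cc{\lambda}{\catidentity{}}{\mathcal{L}})$, it satisfies the conditions in \Cref{eq:global_sec_conditions}, i.e. $\chi_i = \push{\V_{ij}}{\chi_j}$ for every $e_{ij} \in \edgeset$. Because each $\chi_j$ is the zero-mean Gaussian $N(\zeros_{d_j}, \covM^{d_j})$ and $\push{\V_{ij}}{\cdot}$ is the pushforward along the linear map $\x \mapsto \V_{ij}\x$, this is equivalent to the edgewise covariance identity $\covM^{d_i} = \V_{ij}\,\covM^{d_j}\,\V_{ij}^\top$.

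Second, I would promote these edgewise identities to a single embedding from the coarsest node. Nonemptiness of $\mathrm{Fix}(\cc{\lambda}{\catidentity{}}{\mathcal{L}})$ together with \Cref{th:globalsec_existence} forces $\dim\{\ker \CANL\} = h$, and then \Cref{th:kernelCAN} guarantees that every $\scm{}_i$ is reachable from the coarsest $\scm{}_N$ along an oriented path. Consistency makes every edge embedding a Stiefel matrix, and Stiefel matrices compose --- if $\V_1^\top\V_1 = \identity_{}$ and $\V_2^\top\V_2 = \identity_{}$ then $(\V_1\V_2)^\top(\V_1\V_2) = \identity_{}$ --- so composing the embeddings along the path from $\scm{}_N$ to $\scm{}_i$ yields $\V_{iN} \in \stiefel{d_i}{h}$ with $\covM^{d_i} = \V_{iN}\,\covM^{d_N}\,\V_{iN}^\top$ for every $i \in [N]$.

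Third, I would apply \Cref{lemma:sameeigenvalues} to each pair $(\covM^{d_i}, \covM^{d_N})$ with the Stiefel matrix $\V_{iN}$, obtaining that $\covM^{d_i}$ and $\covM^{d_N}$ share the same nonzero eigenvalues. Running this over all $i \in [N]$ shows that every $\covM^{d_i}$ carries the nonzero spectrum of $\covM^{d_N}$, hence they all coincide; and since the coarsest Gaussian is non-degenerate, $\covM^{d_N} \in \pd^h$, there are exactly $h$ such nonzero eigenvalues (the remaining $d_i - h$ eigenvalues of $\covM^{d_i}$ being zero). This is the asserted statement.

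The arithmetic here is routine; the one step deserving care is the second paragraph. With only the edgewise relations $\covM^{d_i} = \V_{ij}\covM^{d_j}\V_{ij}^\top$ and $h = \min_i d_i$, one cannot a priori conclude that a common coarsest generator exists --- the constraints around a cycle could in principle be mutually incompatible. It is precisely the consistency of \CAN together with the reachability dichotomy of \Cref{th:kernelCAN} (equivalently, the kernel-dimension condition of \Cref{th:globalsec_existence}) that rule this out and let the per-edge identities be composed into the single embedding $\covM^{d_i} = \V_{iN}\covM^{d_N}\V_{iN}^\top$. A secondary subtlety worth flagging is that ``the same $h$ nonzero eigenvalues'' is sharp only because $\covM^{d_N}$ is non-degenerate; for $d_i > h$ the matrix $\covM^{d_i}$ is necessarily rank-deficient, which is exactly the positive-semidefinite regime highlighted in the introduction.
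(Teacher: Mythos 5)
Your proof is correct and follows essentially the same route as the paper: pass from the fixed point to a global section, invoke \Cref{th:globalsec_existence} (via reachability from the coarsest $\scm{}_N$) to write each $\covM^{d_i}$ as $\V_{iN}\covM^{h}\V_{iN}^\top$ with $\V_{iN}$ Stiefel, and conclude by \Cref{lemma:sameeigenvalues}. Your version merely makes explicit the intermediate reachability and compositionality steps that the paper compresses into a single sentence.
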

\begin{proof}
    See App.\nb\ref{app:proofs}.
\end{proof}

Additionally, the spectral interlacing necessary condition in \Cref{eq:spectralCA} for the existence of CLCA can be generalized to the case where the low- and high-level distributions are Gaussian mixtures.
Specifically, it turns out that the spectral interlacing has to be checked on the mixture covariance.

\begin{corollary}\label{cor:spectral_interlacing_gmm}
    Let $\measurelow \in \GMM{S}{\ell}$, $\measurehigh \in \GMM{S}{h}$, and let \covlow and \covhigh be their mixture covariances (cf. \cref{eq:gmm_sigma}).
    Denote by $0\leq\lambda_1\leq \ldots \leq \lambda_\ell$ the eigenvalues of \covlow, and by $0\leq\kappa_1 \leq \ldots\leq \kappa_h$ those of \covhigh.
    If a CLCA complying with SEP from \measurelow to \measurehigh exists, then
    \begin{equation}
        \lambda_i \leq \kappa_i \leq \lambda_{i + \ell -h}, \quad \forall \,i \in [h]\,.
    \end{equation}
\end{corollary}
\begin{proof}
    See App.\nb\ref{app:proofs}.
\end{proof}

\subsection{Convergence to global sections}\label{subsec:convergence}
\cref{th:globalsec_existence} provides a necessary and sufficient condition for the existence of a global section.
However, it does not guarantee or provide conditions for the convergence of \cref{eq:dynsistCSprob} to $\mathrm{Fix}\left(\cc{\bm\lambda}{\catidentity{}}{\mathcal{L}}\right)$.
To investigate the convergence, let us rewrite \Cref{eq:dynsistCSprob} in a convenient way.
Specifically, denote by $\measure^{\uparrow}(t)$ a collection of $|\edgeset|$ \emph{embedding} pushforward distributions at time $t$, one for each edge $e \in \edgeset$.
In detail, for each $i\sim j$, corresponding to a CLCA $i\leqslant j$ and embedding from $j$ to $i$, the corresponding element in $\measure^{\uparrow}(t)$ is the pushforward distribution induced by the linear map $\V_{ij} \in \stiefel{d_i}{d_j}$, viz. $P_{\V_{ij}X_j}(t)$.
Similarly, denote by $\measure^{\downarrow}(t)$ a collection of $|\edgeset|$ \emph{abstraction} pushforward distributions at time $t$, one for each edge $e \in \edgeset$.
Here, the element of $\measure^{\downarrow}(t)$ corresponding to $i\sim j$ is $P_{\V_{ji}^\top X_i}(t)$.

Recall the aggregation, assimilation, and update hyper-parameters above.
For convenience, for each $e \in \starr{i}$ with either $e:i\sim j$ or $e:j\sim i$, let $\beta_{i,e}=\beta_{ij}$.
Starting from \Cref{eq:dynsistCSprob}, for the node $i$ with $|\starr{i}^-|$ embedding edges and $|\starr{i}^+|$ abstraction edges, we have
\begin{equation}\label{eq:nodedynamics}
    P_{X_i}(t+1) = a_{ii} P_{X_i}(t) + \sum_{e \in \starr{i}^{-}} b_{ij}P_{\V_{ij}X_j}(t) + \sum_{e \in \starr{i}^{+}} c_{ij}P_{\V_{ji}^\top X_j}(t)\,;
\end{equation}
where
\begin{equation}\label{eq:abc}
    \begin{aligned}
         a_{ii}&=\lambda_i + (1-\lambda_i)(\sum_{e \in \starr{i}^{-}} \beta_{ij}\alpha_{ij} + \sum_{e \in \star{i}^{-}}\beta_{ij}(1-\alpha_{ji}))\,;\\
         b_{ij}&=(1-\lambda_i)\beta_{ij}(1-\alpha_{ij})\,;\\
         c_{ij}&=(1-\lambda_i)\beta_{ij}(1-\alpha_{ji})\,.
    \end{aligned}
\end{equation}
Please note that, $b_{ij}$ and $c_{ij}$ are mutually exclusive, and the hyper-parameters in \Cref{eq:abc} satisfy
\begin{equation}\label{eq:cond-abc}
    a_{ii} + \sum_{e \in \starr{i}} (b_{ij}+c_{ij})=1, \quad \text{for each } i \in [N]. 
\end{equation}

Thus, using \Cref{eq:nodedynamics}, \Cref{eq:dynsistCSprob} can be rewritten as
\begin{equation}\label{eq:rewritten-dynamics}
    \measure(t+1) = \A \measure(t) + \B \measure^{\uparrow}(t) + \C \measure^{\downarrow}(t)\,;
\end{equation}
where \emph{(i)} $\A \in [0,1]^{N\times N}$ is diagonal with entries $a_{ii}$ as in \Cref{eq:abc}, \emph{(ii)} $\B \in [0,1]^{N\times |\edgeset|}$ strictly upper triangular with entries $b_{ij}$ as in \Cref{eq:abc} corresponding to embedding edges, and \emph{(iii)} $\C \in [0,1]^{N\times |\edgeset|}$  strictly lower triangular with entries $b_{ij}$ as in \Cref{eq:abc} corresponding to abstraction edges.
In addition, the matrix $\A+\B+\C$ is row-stochastic.

Let $\mathcal{M}(t)\!\coloneqq\!\{\bm\mu_1(t),\ldots,\bm\mu_N(t)\}$ be the collection of mixture mean vectors $\bm \mu_i(t) \in \reall^{d_i}$ (cf. \cref{eq:gmm_mu}), one for each Gaussian mixture $\measure_i$.
Furthermore, let $\mathcal{S}(t)\!\coloneqq\!\{\covM_1(t),\!\ldots\!,\covM_N(t)\}$ be the collection of mixture covariance matrices $\covM_i(t) \!\in\! \reall^{d_i \times d_i}$ (cf. \cref{eq:gmm_sigma}), one for each Gaussian mixture $\measure_i$.
Starting from \Cref{eq:rewritten-dynamics}, we can formalize the convergence of CAN dynamics.
\begin{theorem}\label{thm:convergence}
    Consider a consistent and connected \CAN and let $\mathrm{Fix}\left(\cc{\bm\lambda}{\catidentity{}}{\mathcal{L}}\right) \neq \emptyset$.
    Then, if $a_{ii}\neq0$ for each $i \in [N]$, \Cref{eq:rewritten-dynamics} converges to a global section $\measure^\star$ where:
    \begin{itemize}
        \item $\mathcal{M}^\star$ has elements 
        \begin{equation}\label{eq:mu-i-star}
            \bm\mu_i^\star=\vecu_i^i\,; 
        \end{equation}
        where $\vecu^i=[\vecu_1^{i^\top},\ldots,\vecu_N^{i^\top}]^\top \in \reall^d$, $d =\sum_i d_i$, is an eigenvector in the $k$-dimensional kernel of the normalized Laplacian $\ker{\widetilde{\CANL}}$, i.e.
    \begin{equation}\label{eq:normalized-Laplacian}
            \widetilde{\CANL}= \CAND^{-\frac{1}{2}}\CANL \CAND^{-\frac{1}{2}}\,;
        \end{equation}
        \item $\mathcal{S}^\star$ has elements 
        \begin{equation}\label{eq:sigma-i-star}
            \covM^\star_n \in \{\covM \in \reall^{d_n \times d_n} \mid \covM=\frac{1}{2}(\vecu^i_n\vecu_n^{j^\top}+\vecu_n^j\vecu_n^{i^\top});\, (i,j) \in [k]\times[k] \text{ and } i\leq j \}\,.
        \end{equation}
    \end{itemize}
\end{theorem}
\begin{proof}
    See App.\nb\ref{app:proofs}.
\end{proof}

\Cref{thm:convergence} can be thought of as a generalization of the classical linear consensus to probability distributions coupled through a sheaf-like structure.
Although the dynamics in \cref{eq:dynsistCSprob} is nonlinear at the distribution level, the moments evolution is governed by the spectrum of the normalized Laplacian $\widetilde{\CANL}$, in close analogy with averaging processes on graphs.

However, in our case the limit does not correspond to scalar consensus, but rather to a sort of \emph{subspace consensus}: the means $\mathcal{M}^\star$ in \Cref{eq:mu-i-star} lie in the subspace spanned by the eigenvectors associated with $\ker{\widetilde{\CANL}}$, whose dimension is $k = \dim(\ker{\CANL})$.
This latent subspace is shared across all nodes and is consistent with the structure of the CAN.

The covariance structure in \cref{eq:sigma-i-star} further reflects this property.
The $\frac{k(k+1)}{2}$ covariance matrices obtained are all supported on the same $k$-dimensional latent subspace. 
In particular: 
\emph{(i)} the $k$ covariances corresponding to $i=j$ are of the form $\vecu_n^i \vecu_n^{i^\top}$ and have rank $1$;
\emph{(ii)} the $\frac{k(k-1)}{2}$ covariances corresponding to $i \neq j$ are symmetric combinations of eigenvectors and, in general, have rank $k$, since each block $\vecu_n^i$ is obtained via embedding maps from a common $k$-dimensional latent space.
Consequently, all limiting covariances are supported on a common latent subspace of dimension $k$, providing a strong structural characterization of the consensus solution.

We finally conclude the section with a worked example showing the dynamics of CK over a consistent CAN made of three MCMs, and the resulting global section.

\begin{figure}[h]
    \centering
    \subfloat[$\scm{}_1$\label{subfig:M1}]{
        \begin{tikzpicture}
            \node[draw, circle] (A) at (0,2) {$A$};
            \node[draw, circle] (B) at (3,0) {$B$};
            \node[draw, circle] (C) at (1,0) {$C$};

            \draw[->] (A) to node[midway, left]{$-1.10$} (C);
            \draw[->] (B) to node[midway, above]{$-0.53$} (C);
        \end{tikzpicture}
    }
    \hfill
    \subfloat[$\scm{}_2$\label{subfig:M2}]{
        \begin{tikzpicture}
            \node[draw, circle] (B) at (3,0) {$B$};
            \node[draw, circle] (AC) at (1,0) {$\{A,C\}$};

            \draw[->] (B) to node[midway, above]{$0.17$} (AC);
        \end{tikzpicture}
    }
    \hfill
    \subfloat[$\scm{}_3$\label{subfig:M3}]{
        \begin{tikzpicture}
            \node[draw, circle] (A) at (0,2) {$A$};
            \node[draw, circle] (BC) at (1,0) {$\{B,C\}$};

            \draw[->] (A) to node[midway, left]{$0.43$} (BC);
        \end{tikzpicture}
    }
    \caption{Causal DAGs in \Cref{ex:dynamics}.}
    \label{fig:causal_DAGs}
\end{figure}

\begin{example}\label{ex:dynamics}
    We consider a collection $\scmcollection=\{\scm{}_1,\scm{}_2,\scm{}_3\}$ made of three linear SCMs with additive Gaussian noise, whose dimensionalities are $d_1=3$, $d_2=2$, and $d_3=2$, chosen for visualization purposes.
    The SCMs induce the causal DAGs in \cref{fig:causal_DAGs}, and they are related by the CLCA relations $\{1\leqslant2, 1\leqslant3\}$, where the abstraction matrices are
    \begin{equation}\label{eq:clca_matrices}
        \text{\emph{(a)}}\quad \V_{12}^\top=\begin{bmatrix}
            0 & 1 & 0\\
            0.83 & 0 & 0.56
        \end{bmatrix}\,, \text{ and \emph{(b)}}
        \quad \V_{13}^\top=\begin{bmatrix}
            1 & 0 & 0\\
            0 & 0.29 & 0.96
        \end{bmatrix}\,.
    \end{equation}
    Thus, at the beginning of the simulation ($t=0$), we have the CAN in the top-left panel of \Cref{fig:can_dynamics}, where we report for each node $1000$ points sampled from the corresponding component of the $0$-cochain $\measure(0)=\{\measure_1(0),\measure_2(0),\measure_3(0)\}$.
    Additionally, we derive the components $\measure_2(0)$ and $\measure_3(0)$ from $\measure_1(0)$ by applying the CLCAs in \Cref{eq:clca_matrices}.
    
    At this point, we let the CK evolve according to \Cref{eq:dynsistCSprob} for $T=100$ time steps, setting the CAN parameters as follows:
    \begin{equation}
        \begin{aligned}
            & \alpha_{12}=\alpha_{13}=0.5\,; \quad \text{(aggregation)}\\
            & \bm\beta_{1}=[0.5, 0.5]\,,\; \bm\beta_2=[1]\,,\; \bm\beta_3=[1]\,; \quad \text{(assimilation)}\\
            & \bm\lambda = [0.5, 0.5, 0.5]\, .\quad \text{(update)}
        \end{aligned}
    \end{equation}
    Here $a_{ii}$ in \Cref{eq:abc} is non-null for each $i \in [3]$.
    Furthermore, the necessary and sufficient condition for the existence of global sections in \Cref{th:globalsec_existence} holds, as the multiplicity of the null eigenvalue of \CANL is $m(\mu_0)=1$.
    Then, in light of \Cref{thm:convergence}, we expect that the dynamics of CK converges to a global section.
    Accordingly, as shown in the bottom-right panel in \Cref{fig:can_dynamics} representing $t=100$, a global section emerges and its three components  lie in one-dimensional subspaces and are related by the embedding matrices $\V_{12}$ and $\V_{13}$.
    Finally, we remark that the dimensionality of the common subspace is lower than $\min_{i \in [3]} d_i = 2$.
    Indeed, the missing edge between $\scm{}_2$ and $\scm{}_3$ determines the unreachability from the coarsest node(s), and the dimension of $\ker(\CANL)$ depends on the intersection of the spans of $\V_{12}$ and $\V_{13}$.
\end{example}

\begin{figure}[t]
    \centering
    \includegraphics[width=1.0\linewidth]{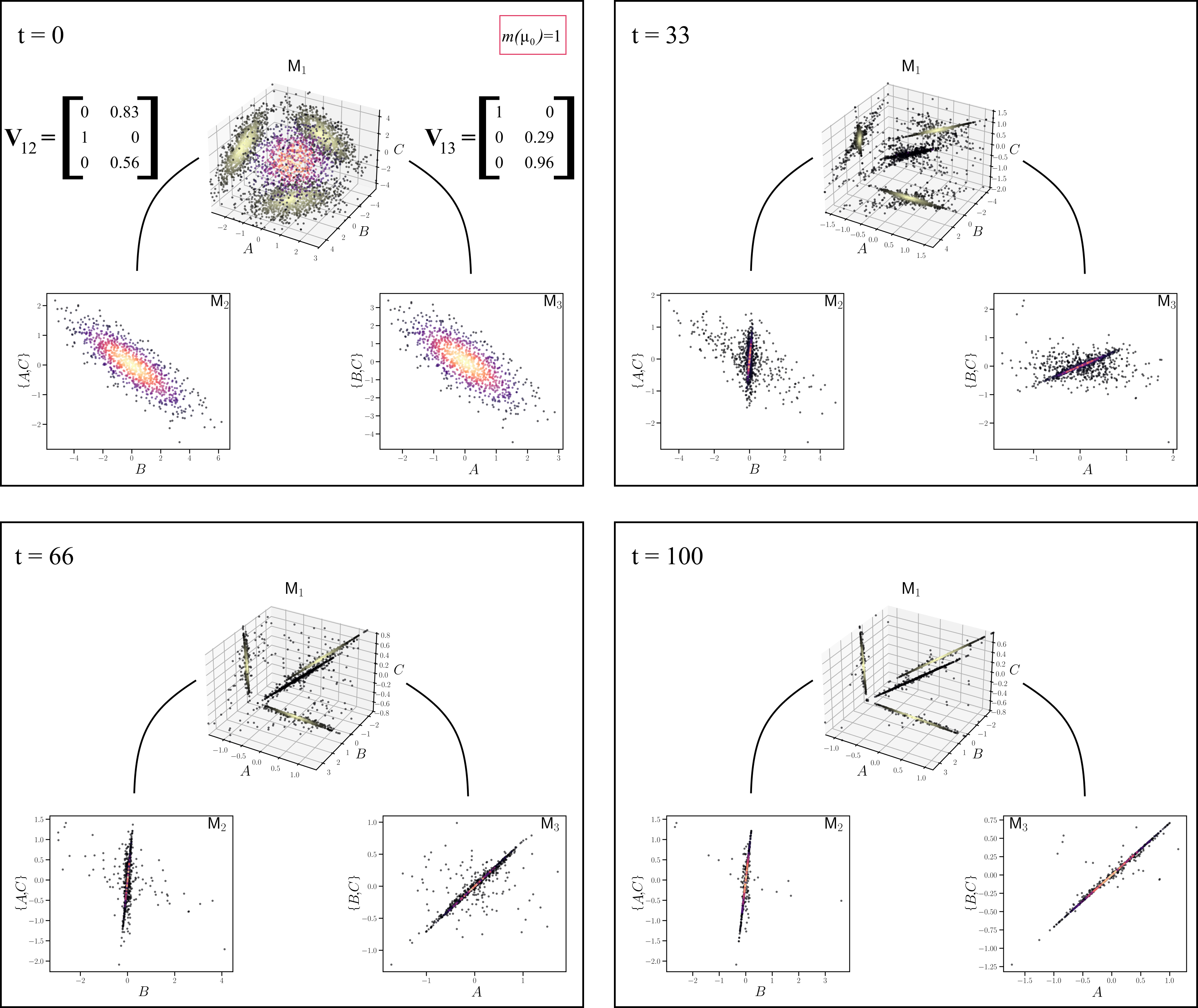}
    \caption{Dynamics of the CK over the CAN in \cref{ex:dynamics} for $T=100$ time steps (from top-left to bottom-right). In accordance with \Cref{th:globalsec_existence,thm:convergence}, the system converges and the emerging global section lives in one-dimensional subspace.}
    \label{fig:can_dynamics}
\end{figure}

\subsection{Role of hyper-parameters}\label{subsec:hyperparams}
An intriguing aspect of the proposed CAN--and more in general of the network sheaf and cosheaf of CK--is the presence of the aggregation, assimilation, and update parameters shaping the evolution of CK over time.
\Cref{fig:hyperparams-role} provides a pictorial representation for better understanding their action.
For instance, let us focus on $\mcm{}_2$ within \CAN in \Cref{fig:CAN}, and split the hyper-parameters into two sets: \emph{(i)} $\{\bar{\alpha}_{12}, \alpha_{23}, \alpha_{24}\}$ and $\{\bm\beta_2\}$ involved in the local contribution $\mathcal{L}\at{2}$ of the diffusion of CK over CAN computed according to \Cref{eq:Laplacian_op_local}; and \emph{(ii)} $\lambda_2$, which acts between consecutive time steps and is involved in the combination of the probability distribution at node \num{2} at time $t$ and the local contribution $\mathcal{L}\at{2}$, thus yielding the probability distribution at $t+1$.

Looking at the local contribution $\mathcal{L}\at{2}$ coming from the diffusion via the CAN Laplacian operator, we see that as the aggregation parameters $\alpha_e \to 1$, the resulting edge stalks valuation $\measure_{12},\measure_{23},\measure_{24}$ will tend to the pushforward distributions induced by the corresponding restriction maps in \Cref{eq:ns_restriction_can}.
Vice versa, as $\alpha_e \to 0$ the latter pushforward distributions becomes negligible in the combination performed at the edge stalk level.
Interestingly, the limit case $\alpha_e=0$ can be exploited by an external CAN controller to prevent agent \num{2} from “contaminating” the CK of the other agents, for example in the event of abnormal, suspicious or malicious behavior.
Conversely, imposing $\alpha_e=1$ can be exploited to preserve the CK of agent \num{2} while inducing a bias in the global CK towards the latter.

Concerning the assimilation parameter, $\bm\beta_2 \in \simplex{3}$, when $\beta_{2,i} \to 1$ the CK incoming from the $i$-th edge stalk will receive more importance, the opposite when $\beta_{2,i}\to 0$.
In this case, setting $\beta_{2,i}=0$ can be used to make the CK of agent \num{2} insensitive to that coming from the $i$-th edge. Conversely, setting $\beta_{2,i}=1$--and consequently zeroing the remaining entries of $\bm\beta_2$--can be used to favor the combination with the CK coming from the $i$-th edge.

Additionally, it is worth noting that zeroing both the aggregation and assimilation parameters for a certain edge amounts to removing that edge from \CAN.
In fact, \emph{(i)} in the outgoing direction, the CK of agent \num{2} will not be combined on the edge stalk because the aggregation parameter is null, and \emph{(ii)} in the incoming direction, it will not be affected by the CK of the edge because the assimilation parameter is zero.

Looking at the evolution between time steps, CK at $t+1$ consists of the combination of that at time $t$ and the local contribution  $\mathcal{L}\at{2}$ through the update parameter $\lambda_2 \in [0,1]$.
When $\lambda_2 \to 0$, CK diffusion is enhanced; conversely for $\lambda_2\to1$ CK tends to be stationary.

\begin{figure}[t]
    \centering
    \begin{tikzpicture}
        \node[draw, circle, opacity=1]   (B1) at (-2,-2) {$\mathsf{\scriptstyle  MCM}_2$};
        \node[draw, align=left] (t0) at (-2,-3) {$t$};
        
        \node[draw, circle, opacity=1]   (B) at (1,0) {$\mathcal{L}\at{2}$};
        \node[draw, rectangle, opacity=0.2] (AB) at (.5,1.5) {$\mathsf{\scriptstyle  MCM}_{12}$};
        \node[draw, rectangle, opacity=0.2] (BC) at (2.5,0) {$\mathsf{\scriptstyle  MCM}_{23}$};
        \node[draw, rectangle, opacity=0.2] (BD) at (-.5,-.5) {$\mathsf{\scriptstyle  MCM}_{24}$};
        
        \node[draw, align=left] (t1) at (4,-3) {$t+1$};
        \node[draw, circle, opacity=1] (B2) at (4,-2) {$\mathsf{\scriptstyle  MCM}_2$};
        
        \draw[->,Mulberry] (AB) to[bend right] node[near start,left]{$\beta_{2,1}$} (B);
        \draw[->,Periwinkle] (B) to[bend right] node[near end,right]{$\bar{\alpha}_{12}$} (AB);
        \draw[->,Periwinkle] (B) to[bend right] node[midway, below]{$\alpha_{23}$} (BC);
        \draw[->,Mulberry] (BC) to[bend right] node[midway, above]{$\beta_{2,3}$} (B);
        \draw[->,Periwinkle] (B) to[bend right] node[near end,above]{$\alpha_{24}$} (BD);
        \draw[->,Mulberry] (BD) to[bend right] node[midway,below]{$\beta_{2,2}$} (B);
    
        \draw[-, opacity=1] (AB) to (B);
        \draw[-, opacity=1] (BC) to (B);
        \draw[-, opacity=1] (BD) to (B);

        \draw[->,teal] (B) to[out=270, in=150] node[midway,above]{$\bar{\lambda}_2$} (B2);
        \draw[->,teal] (B1) to node[midway, below]{$\lambda_2$} (B2);
    \end{tikzpicture}
    \caption{Evolution over time of the CK of $\mcm{}_2$.}
    \label{fig:hyperparams-role}
\end{figure}

\subsection{Smoothness}\label{subsec:smoothness}
Recall that the structure of a consistent \CAN is shaped by the existence of CLCAs among the MCMs in \mcmcollection, represented as nodes. 
Intuitively, a collection of MCMs is smooth w.r.t. \CAN if, for each edge $e_{i j}\!:\! i \!\sim\! j$, the coarser $\mcm{}_j$ is--approximately--a CLCA of the finer $\mcm{}_i$.

Thus, smoothness can be measured in terms of how well the abstraction of the low-level probability distribution $\measure_i$ approximates the high-level $\measure_j$.
Denoting by $D_{\varphi}\left(\mu,\nu\right)$ any information-theoretic metric or $\phi$-divergence between $\mu$ and the pushforward of $\nu$ via the map $\varphi$, we measure smoothness by
\begin{equation}\label{eq:smoothness}
    f(\edgeset, \{\V_{j i}\}) \coloneqq \sum_{e_{i j} \in \edgeset} D_{\V_{ji}}\left(\measure_j,\measure_i\right)\,.
\end{equation}
Recall $\V_{j i}=\V_{i j}^\top$ and, by consistency, $\V_{i j} \in \stiefel{d_i}{d_j}$. 

It is worth recalling that in graph signal processing (GSP) \citep{ortega2018graph} and network sheaves valued in the category of vector spaces and linear maps \citep{hansen2019learning}, smoothness
is a variational notion induced by the Laplacian matrix, it is associated with a Dirichlet-like energy and is typically applied to deterministic signals. Conversely, in CAN, see e.g.   in \Cref{eq:smoothness}, we measure smoothness in terms of distances between probability distributions of different dimensions in non-Euclidean spaces. In a nutshell, smoothness can be thought of as an \emph{information-theoretic measure of causal coherence across abstraction levels.} This is a significant generalization of the approach typically followed in GSP. 

Furthermore, in GSP the Laplacian governs diffusion toward a global state and is a symmetric operator. Conversely, in CANs smoothness quantifies the fidelity of local causal abstractions. 
Consequently, smoothness is inherently directional in CAN, reflecting the orientation of causal abstractions. 
Moreover, this notion of smoothness offers the flexibility to select the information measure best suited to the application: 
for instance, the Kullback–Leibler (KL) divergence can be used for multivariate Gaussian distributions, whereas for Gaussian mixtures--where no closed-form expression for the KL divergence exists--one may instead employ alternatives such as the Mixture Projection \num{2}-Wasserstein discrepancy \citep{cai2022distances,salmona2024gromovwassersteinlike}, or resort to approximation techniques \citep{hershey2007approximating}. 

\begin{remark}\label{smoothvsglobal}
    While every global section $\chi \in \mathrm{Fix}\left(\cc{\lambda}{\catidentity{}}{\mathcal{L}}\right)$ is necessarily smooth w.r.t. a consistent \CAN, the converse does not hold in general.
    Indeed, if \Cref{eq:smoothness} vanishes, then $\measure_j=cc_{\w_i}(P_{\V_{ji}(X_i)}^{[I]})$ for all $e_{i j}$, but this does not guarantee $\measure_i=cc_{\w_j}(P_{\V_{ij}(X_j)}^{[J]})$, which is the global section condition in \Cref{eq:global_sec_conditions}.  
    Conversely, by consistency, if $\measure_i=cc_{\w_j}(P_{\V_{ij}(X_j)}^{[J]})$ then smoothness is automatically satisfied, since the pushforward via $\V_{ji}=\V_{ij}^\top$ gives
    \begin{equation}
         cc_{\w_j}\left(P_{(\V_{i j}^\top \circ \V_{i j})(X_j)}^{[J]}\right) = \measure_j\,.
    \end{equation}
\end{remark} 

\section{Learning causal abstraction networks}\label{sec:learning_CAN}

Our input is a collection $\chi=\{\chi_1, \ldots, \chi_N\}$ of Gaussian mixtures $\measure_i = cc_{\w_i}(P_{X_i}^{s \in [I]})$ in \GMM{I}{d_i}, with $\w_i =[w_{i,1},\ldots,w_{i,I}]^\top \in \simplex{I}$ and $\measure_{i}^s=P_{X_i}^s\sim N(\bm\mu_{i}^s,\bm\Sigma_{i}^s)$--sorted in descending order of dimensionality--induced by $N$ MCMs assumed to be smooth w.r.t. \CAN.
Furthermore, w.l.o.g., we assume $\measure_i$ to be centered, that is, $\bm\mu_i = \zeros_{d_i}$.
Please notice that this does not imply that the components $\measure_{i}^s$ are zero-mean Gaussians.
From Def.\nb\ref{def:CAN}, learning a consistent CAN amounts to learning \emph{(i)} the CLCA relations induced by the functor $A$ and determining its edge set \edgeset, 
and \emph{(ii)} the CLCAs $\{\V_{ij}^\top\}$ for each $e_{ij} \in \edgeset$, where $\V_{ij} \in \stiefel{d_i}{d_j}$.
Let $\mathcal{F}$ be the set of edges satisfying the CA necessary condition in Cor.\nb\ref{cor:spectral_interlacing_gmm}.
Then, we want to find the maximal edge set $\edgeset \subseteq \mathcal{F}$ and the corresponding CLCAs $\{\V_{ij}^\top\}$ such that the smoothness functional in \cref{eq:smoothness} vanishes.
As in \cref{eq:smoothness} we consider nonnegative discrepancies between probability distributions, the requirement $f(\edgeset, \{\V_{j i}\})=0$ is equivalent to requiring that $D_{\V_{ij}^\top}(\measure_j,\measure_i) = 0$ for each $e_{ij} \in \edgeset$.
However, in practice, it is unrealistic to require equality to zero, and it is sufficient to find a solution satisfying the constraint up to a small tolerance. 
Then, letting $\tau_D \approx 0$, our learning problem can be formulated with edge-wise constraints as follows:
\begin{equation}\label{eq:globalProblem}
    \begin{aligned}
        \max_{\mathcal{E} \subseteq \mathcal{F}} & \quad |\mathcal{E}| \\
        \text{subject to} \quad \inf_{\V_{ij} \in \stiefel{d_i}{d_j}}&\quad D_{\V_{ij}^\top}(\measure_j,\measure_i) \leq \tau_D, \quad \forall\, e_{ij} \in \edgeset\,.
    \end{aligned}
    \tag{P1}
\end{equation}

In particular, the set of CLCAs $\V_{ij}^\top$ for which the local $D_{\V_{ij}^\top}(\measure_j,\measure_i)\leq \tau_D$ determines the set of learned edges of \CAN.
Indeed, by construction, $\V_{ij}$ only makes sense in relation to a CA, thus solving \eqref{eq:globalProblem} reduces to finding the set of CLCAs satisfying the local constraints, which in turn determines \edgeset.
Evaluating the edge-specific constraints in Problem \eqref{eq:globalProblem} requires solving local CLCA learning problems, for which we need to specify the information-theoretic measure to be used.
Although in the case of canonical CLCA the high-level Gaussian mixture has the same number of components and weights as the low-level (cf. \cref{subsec:ct_formalization}), in light of Rem.\nb\ref{rem:general-clca}, we will continue with a more general discussion of mixtures with different numbers of components and weights. 
In particular, we expect that the method for solving local CLCA learning problems will also generalize well to the case described in Rem.\nb\ref{rem:general-clca}, as it does not rely on the aforementioned equality of the number of components and weights.

Similarly to \cite{d2025causal}, we leverage the work of \cite{cai2022distances} on distances between probability distributions with different dimensions.
However, in case of centered Gaussian mixtures, viz. $\measure_i = cc_{\w_i}(P^{[I]}_{X_i})$ in \GMM{I}{d_i} and $\measure_j = cc_{\w_j}(P_{X_j}^{[J]})$ in \GMM{J}{d_j}, with $d_i>d_j$, $\w_i=[w_{i,1},\ldots, w_{i,I}]^\top \in \simplex{I}$, and $\w_j=[w_{j,1},\ldots, w_{j,J}]^\top \in \simplex{J}$; in addition to the CLCA $\V_{ij}^\top$ determining the pushforward of each mixture component, we also have to consider the coupling among the components of the two mixtures.
We represent the latter as a coupling matrix $\bm\Omega \in \Pi(\w_i,\w_j)$ (cf. \cref{eq:coupling-matrix}).
Then, considering the $2$-Wasserstein distance between Gaussians
\begin{equation}\label{eq:W2}
    \Wtwo^2(\mu,\nu) = \norm{\bm m_\mu - \bm m_\nu}_2^2 + \Tr{\bm\Sigma_\mu+\bm\Sigma_\nu - 2 \left(\bm\Sigma_\nu^{\frac{1}{2}} \bm\Sigma_\mu\bm\Sigma_\nu^{\frac{1}{2}}\right)^{\frac{1}{2}}}\,; 
\end{equation}
and following \cite{salmona2024gromovwassersteinlike}, we call our specialized version of the projection distance by \cite{cai2022distances} the \emph{Mixture Projection Wasserstein Discrepancy} (\MPW).
Denoting by $\omega_{pq}$ the entries of $\bm\Omega$, the latter is defined as  
\begin{equation}\label{eq:mpw}
    \MPW(\measure_i,\measure_j) \coloneqq \inf_{\bm\Omega \in \Pi(\w_i,\w_j)} \inf_{\V_{ij} \in \stiefel{d_i}{d_j}} \sum_{p \in [I]} \sum_{q \in [J]} \omega_{pq} \Wtwo^2\left(P_{\V_{i j}^\top(X_i)}^p, P_{X_j}^q\right)\,.
\end{equation}

Thus, starting from \cref{eq:mpw} and letting $\tau_{\mathrm{MPW}}\approx 0$, Problem \eqref{eq:globalProblem} becomes
\begin{equation}\label{eq:globalProblemGMM}
    \begin{aligned}
        \max_{\mathcal{E} \subseteq \mathcal{F}} & \quad |\mathcal{E}| \\
        \text{subject to} \quad \inf_{\substack{\{\bm\Omega \in \Pi(\w_i,\w_j)\} \\ \{\V_{ij} \in \stiefel{d_i}{d_j}\}}}&\quad \MPW(\measure_i,\measure_j) \leq \tau_{\mathrm{MPW}}, \quad \forall\, e_{ij} \in \edgeset\,.
    \end{aligned}
    \tag{P2}
\end{equation}

To solve \eqref{eq:globalProblemGMM}, we propose an efficient search strategy over the candidate edge set $\mathcal{F}$ that leverages the compositionality of CAs in Lem.\nb\ref{lemma:composedCA}.
In particular, as detailed below, compositionality allows us to exploit transitive closure to progressively propagate inferred relations across edges, thus avoiding the need to solve all $|\mathcal{F}|$ local problems and significantly reducing their number during the optimization process.
Throughout, we assume full prior knowledge of the CA structure $\B_{ji}$, which is still a challenging setting relevant for application domains such as neuroscience \citep{d2025causal} and deep learning \citep{geiger2025causal}, as also demonstrated by our financial application in \cref{sec:finappl}.

\subsection{Search procedure}\label{subsec:search-proc}
Let us consider a collection of Gaussian mixtures. 
For each mixture covariance $\covM_i$ (cf. \cref{eq:gmm_sigma}), $i \in [N]$, we compute its eigenvalues.  
We encode possible CA relations according to Cor.\nb\ref{cor:spectral_interlacing_gmm} into a strictly lower triangular binary matrix $\mathbf{P} \in \{0,1\}^{N \times N}$.
Rather then applying Cor.\nb\ref{cor:spectral_interlacing_gmm} $N(N-1)/2$ times we fill $\mathbf{P}$ by leveraging the compositionality of CAs.
Specifically, we first apply Cor.\nb\ref{cor:spectral_interlacing_gmm} to the $N-1$ MCM pairs on the first subdiagonal, i.e., to consecutive neighbors in the chain $\chi_1\!-\!\chi_2\!-\!\ldots\!-\!\chi_N$.
We then set to $1$ the entries of $\mathbf{P}$ corresponding to pairs satisfying Cor.\nb\ref{cor:spectral_interlacing_gmm}, leaving the others null.
Afterward, we set $\mathbf{P}$ equal to its transitive closure.
Next, we move to the second subdiagonal.
At this stage, it is necessary to test at most $N\!-\!2$ MCM pairs, since some entries may already have been filled through the transitive closure at the previous step.
After the update, we apply again the transitive closure.
We repeat this procedure for all remaining subdiagonals.

Once $\mathbf{P}$ is constructed, we solve the local CLCA learning problems corresponding to its non-null entries by means of a \textsc{LocalSolver} detailed in \cref{subsec:local-prob}.
Here too, the number of subproblems to be solved can be reduced by exploiting transitive closure.
Let $\A \in [0,1]^{N\times N}$ encode the transitive reduction of all possible CLCA relations implied by the compositionality property, and let it be initialized as empty.
We first solve the local problems for the nonnull entries of the first subdiagonal of $\mathbf{P}$, and set to $1$ the entries of $\mathbf{A}$ whenever the CLCA holds, i.e., when \textsc{LocalSolver} finds a solution such that the convergence criteria for the local problem are matched.
We then compute the transitive closure of $\mathbf{A}$ and remove from $\mathbf{P}$ those entries already implied by this closure.
The same procedure is applied iteratively to the subsequent subdiagonals.
\cref{alg:search_proc} summarizes the search procedure.
\begin{algorithm}[ht]
\caption{Search procedure}
\label{alg:search_proc}
\begin{algorithmic}[1]
\vspace{-0.1cm}
\REQUIRE Ordered $0$-cochain $\chi = \{\chi_1,\ldots,\chi_N\}$, mixture covariances $\{\covM_i\}_{i=1}^N$, loss tolerance $\tau_{\mathrm{MPW}}$, args for \textsc{LocalSolver} 
\ENSURE Binary matrix $\A$ encoding CLCA relations, $\V_{ij}$ for each $\A_{ij}\neq 0$

\STATE \textit{Step 1: Construction of $\mathbf{P}$}
\STATE Compute eigenvalues of each $\covM_i$
\STATE Initialize $\mathbf{P} \leftarrow \zeros_{N \times N}$ 
\FOR{$d = 1$ \TO $N-1$}
    \FOR{$i = d+1$ \TO $N$}
        \IF{$\mathbf{P}_{i,i-d} = 0$}
            \IF{Cor.\nb\ref{cor:spectral_interlacing_gmm} holds for $(\chi_{i-d},\chi_i)$}
                \STATE $\mathbf{P}_{i,i-d} \leftarrow 1$
            \ENDIF
        \ENDIF
    \ENDFOR
    \STATE $\mathbf{P} \leftarrow \textsc{TransitiveClosure}(\mathbf{P})$
\ENDFOR

\STATE
\STATE \textit{Step 2: Learning \A and $\{\V_{ij}\}$}
\STATE Initialize $\A \leftarrow \zeros_{N \times N}$ 

\FOR{$d = 1$ \TO $N-1$}
    \FOR{$i = d+1$ \TO $N$}
        \IF{$\mathbf{P}_{i,i-d} = 1$}
            \STATE $\V_{i,i-d}, \mathrm{ConvergenceFlag} \leftarrow \textsc{LocalSolver}(\chi_{i-d},\chi_i)$
            \IF{$\mathrm{ConvergenceFlag}$} 
                \STATE Compute $\MPW(\measure_{i-d},\measure_i)$ using $\V_{i,i-d}$
                \IF{$\MPW(\measure_{i-d},\measure_i)\leq \tau_{\mathrm{MPW}}$}
                    \STATE $\A_{i,i-d} \leftarrow 1$
                \ENDIF
            \ENDIF
        \ENDIF
    \ENDFOR
    \STATE $\A \leftarrow \textsc{TransitiveClosure}(\A)$
    \STATE Remove from $\mathbf{P}$ all entries implied by $\A$
\ENDFOR

\RETURN $\A$, $\{\V_{ij}\}$ such that $\A_{ij}=1$ 
\end{algorithmic}
\vspace{-0.1cm}
\end{algorithm}

\subsection{Local problem}\label{subsec:local-prob}
Consider a single edge $e_{ij}$.
Since $e_{ij} \in \mathcal{F}$, thus Cor.\nb\ref{cor:spectral_interlacing_gmm} holds and a CLCA between $\mcm{}_i$ and $\mcm{}_j$ \emph{might} exist.
Let $\V_{ij}=\B \odot \V$, where $\B = \B_{ij}$ and $\odot$ is the Hadamard product.
Then, starting from \cref{eq:mpw}, the local problem is 
\begin{equation}\label{eq:local-prob-gmm}
    \min_{\substack{\bm\Omega \in \Pi(\w_i,\w_j) \\ \V \in \stiefel{d_i}{d_j}}} \; \sum_{p \in [I]} \sum_{q \in [J]} \omega_{pq} \Wtwo^2\left(P_{(\B \odot \V)^\top(X_i)}^p,P_{X_j}^q\right) - \lambda_\omega H(\bm\Omega)\,;
    \tag{SP1}
\end{equation}
where we also consider the entropic regularizer $H(\bm\Omega) = -\sum_{ij} \omega_{ij} \log{\omega_{ij}}$ with penalty parameter $\lambda_\omega \in \reall_+$.
Unfortunately, \eqref{eq:local-prob-gmm} is not jointly convex in $\bm\Omega$ and \V.
Our solution consists in an alternating optimization scheme.

\spara{Subproblem in $\bm\Omega$.}
Denote by $\V^k$ the value of the embedding matrix at the $k$-th iteration.
The subproblem to be solved can be equivalently rewritten in the form of an entropic regularized optimal transport (OT) problem,
\begin{equation}\label{eq:sub-Omega}
    \min_{\bm\Omega \in \Pi(\w_i,\w_j)} \; \Tr{\bm\Omega^\top \C} - \lambda_\omega H(\bm\Omega)\;
    \tag{SUB-$\bm\Omega$}
\end{equation}
where the cost matrix has entries $c_{pq}=\Wtwo^2\left(P_{(\B \odot \V^k)^\top(X_i)}^p,P_{X_j}^q\right)$, with $(p,q) \in [I]\times [J]$. 
The entropic regularization term is beneficial from a theoretical and practical standpoint.
On the one hand, it makes \eqref{eq:sub-Omega} is $\lambda_\omega$-strongly convex, ensuring the existence of a unique solution and enabling efficient computation via the Sinkhorn algorithm \citep{COTFNT}.
In our implementation we use the method provided by \cite{cuturi2022optimal}.
On the other hand, it acts as a probabilistic prior favoring high-entropy (i.e., less informative) distributions, uniform in the limit, preventing premature commitment at initial optimization stages to sharp, potentially noisy matching.

\begin{algorithm}[ht]
\caption{\textsc{SolveSOC}}
\label{alg:solveSOC}
\begin{algorithmic}[1]
\REQUIRE low-level $\measure_i\in\GMM{I}{d_i}$, high-level $\measure_j \in \GMM{J}{d_j}$, structural prior \B, coupling matrix $\bm\Omega$, initial value $\V^0$, initial value $\Y^0$, ADMM stepsize $\rho$, GD learning rate $\eta$, GD tolerance $\tau_{\mathrm{GD}}$, absolute tolerance $\tau_a$, relative tolerance $\tau_r$, maximum numbers of iterations $T_{\mathrm{GD}}$, $T_{\mathrm{SOC}}$
\ENSURE \V, \Y, $\mathrm{ConvergenceSOC}$

\STATE Initialize $\V \leftarrow \V^0$, $\Y \leftarrow \Y^0$, $\BPsi \leftarrow \B \odot \V - \Y$, $\mathrm{ConvergenceSOC} \leftarrow \mathrm{False}$, $r\leftarrow 1$
\WHILE{$\mathrm{ConvergenceSOC} = \mathrm{False}$ \AND $r \leq T_{\mathrm{SOC}}$}
    \STATE $u \leftarrow 1$
    \WHILE{$\frob{\V^{u+1} - \V^u} > \tau_{\mathrm{GD}}$ \AND $u\leq T_{\mathrm{GD}}$ }
        \STATE $\V^{u+1} \leftarrow$ \Cref{eq:gd-V}
        \STATE $u \leftarrow u+1$
    \ENDWHILE
    \STATE $\V \leftarrow \V^{u+1}$
    \STATE $\Y \leftarrow$ \Cref{eq:updateY-gmm}
    \STATE $\BPsi \leftarrow \BPsi + \B \odot \V - \Y$
    \IF{conditions in \Cref{eq:conv_conditions-gmm} hold}
        \STATE $\mathrm{ConvergenceSOC}=\mathrm{True}$
    \ENDIF
    \STATE $r\leftarrow r+1$
\ENDWHILE
\RETURN \V, \Y, $\mathrm{ConvergenceSOC}$
\end{algorithmic}
\end{algorithm}

\spara{Subproblem in \V.}
Denote by $\bm\Omega^k$ the value of the coupling matrix at the $k$-th iteration.
The subproblem to be solved is 
\begin{equation}\label{eq:subprob-V-gmm}
    \min_{\V \in \stiefel{d_i}{d_j}} \; \sum_{p \in [I]} \sum_{q \in [J]} \omega_{pq}^k \Wtwo^2\left(P_{(\B \odot \V)^\top(X_i)}^p,P_{X_j}^q\right) \,.
    \tag{SUB-\V}
\end{equation}
Unfortunately, \eqref{eq:subprob-V-gmm} is still nonconvex due to the orthogonality constraint induced by the Stiefel manifold membership.
To solve \eqref{eq:subprob-V-gmm}, we leverage the rationale underlying the \emph{splitting of orthogonality constraints} (SOC, \citealp{lai2014splitting}).
Specifically, we relax $\V$ to the Euclidean space $\reall^{d_i \times d_j}$ and enforce its orthogonality through a splitting variable $\Y \in \stiefel{d_i}{d_j}$, subject to 
\begin{equation}\label{eq:constraintV-gmm}
    \Y - \B \odot \V = \zeros_{d_i \times d_j}\,. 
\end{equation}
Thus, \eqref{eq:subprob-V-gmm} becomes
\begin{equation}\label{eq:subprob-V-gmm-splitting}
    \begin{aligned}
        \min_{\substack{\V \in \reall^{d_i \times d_j}\\ \Y \in \stiefel{d_i}{d_j}}} \quad &\sum_{p \in [I]} \sum_{q \in [J]} \omega_{pq}^k \Wtwo^2\left(P_{(\B \odot \V)^\top(X_i)}^p,P_{X_j}^q\right) \\
        \text{subject to}\quad & \Y - \B \odot \V = \zeros_{d_i \times d_j}\,.
    \end{aligned}
    \tag{$\widetilde{\text{SUB}}$-\V}
\end{equation}

Now, starting from \eqref{eq:subprob-V-gmm-splitting}, introducing the scaled dual variable $\BPsi \in \reall^{d_i \times d_j}$, we obtain the augmented Lagrangian
\begin{equation}\label{eq:aL-gmm}
    \begin{aligned}
        \mathcal{L}_\rho(\V, \Y, \BUpsilon) &= \sum_{p \in [I]} \sum_{q \in [J]} \omega_{pq}^k \Wtwo^2\left(P_{(\B \odot \V)^\top(X_i)}^p,P_{X_j}^q\right) + \frac{\rho}{2}\frob{\B \odot \V - \Y + \BPsi}^2 \,;
    \end{aligned}
\end{equation}
where $\rho \in \reall_+$ is the penalty parameter.
At each outer iteration $k$, we then minimize \Cref{eq:aL-gmm} iteratively with respect to the primal variables and maximize it with respect to the dual variables through ADMM \citep{boyd2011distributed}.
Specifically, the update recursion at the inner iteration $r$ is
\begin{equation}\label{eq:recursion-gmm}
    \begin{aligned}
        \V^{r+1} &= \argmin_{\V\in \reall^{d_i \times d_j}} \mathcal{L}_\rho(\V, \Y^r, \BPsi^r)\,,\\
        \Y^{r+1} &= \argmin_{\Y\in \stiefel{d_i}{d_j}} \mathcal{L}_\rho(\V^{r+1}, \Y, \BPsi^r)\,,\\
        \BPsi^{r+1} &= \BPsi^r + \B\odot\V^{r+1} - \Y^{r+1}\,.\\
    \end{aligned}
\end{equation}
The solution for the updates is given below.

\epara{Update for \V.}
The subproblem to solve is 
\begin{equation}\label{eq:updateV-gmm}
        \V^{r+1} = \argmin_{\V\in \reall^{d_i \times d_j}} \sum_{p \in [I]} \sum_{q \in [J]} \omega^k_{qp}  \Wtwo^2\left(P_{(\B \odot \V)^\top(X_i)}^p,P_{X_j}^q\right) + \frac{\rho}{2}\frob{\B \odot \V - \Y^r + \BPsi^r}^2\,.
\end{equation}
We minimize \cref{eq:updateV-gmm} via gradient descent \citep{boyd2004convex}, by exploiting automatic differentiation \citep{baydin2018automatic,jax2018github} for the computation of the gradient of the objective.
By setting $\V^{u}=\V^r$ for $u=0$, and denoting by $\eta$ the learning rate, we update
\begin{equation}\label{eq:gd-V}
    \V^{u+1} = \V^u - \eta \nabla_\V\left(\Wtwo^2\left(P_{(\B \odot \V)^\top(X_i)}^p,P_{X_j}^q\right) + \frac{\rho}{2}\frob{\B \odot \V - \Y^r + \BPsi^r}^2\right)\,;   
\end{equation}
until either $\frob{\V^{u+1}-\V^u}\leq \tau_\mathrm{GD}$, where $\tau_\mathrm{GD} \approx 0$, or a maximum number of iterations $T_{\mathrm{GD}}$ is reached.

\epara{Update for \Y.}
The subproblem to solve is
\begin{equation}\label{eq:updateY-gmm}
    \begin{aligned}
        \Y^{r+1} &= \argmin_{\Y} \frac{1}{2}\frob{\B \odot \V^{r+1} - \Y + \BPsi^r}^2 =\\
        &= \prox_{\stiefel{d_i}{d_j}}(\B \odot \V^{r+1} + \BPsi^r)\,;
    \end{aligned}
\end{equation}
where $\prox_{\stiefel{d_i}{d_j}}(\mathbf{S})$ is given in closed-form by the $\U$ factor of the polar decomposition of $\mathbf{S}$ \citep{d2025causal}.

\epara{Stopping criteria.}
Empirical convergence for the subproblem in \V is assessed via primal and dual residuals, which must vanish as $k \to \infty$. 
Following \cite{boyd2011distributed}, in our case
\begin{equation}\label{eq:residuals-gmm}
    \begin{aligned}
        \R_{\rm{p_1}}^{r+1} &= \Y^{r+1}-\B \odot \V^{r+1}\,; \quad \textit{(primal)}\\
        \R_{\rm{d_1}}^{r+1} &= \B \odot\left(\Y^{r+1}-\Y^{r}\right)\,; \quad \textit{(dual)}\\
    \end{aligned}
\end{equation}
Let $\tau_a, \tau_r \in \reall_+$ denote absolute and relative tolerances. 
According to \cite{boyd2011distributed}, convergence is established once the following conditions hold:
\begin{equation}\label{eq:conv_conditions-gmm}
    \begin{aligned}     
        \frob{\R_{\rm{p_1}}^{r+1}} &\leq \tau_a \sqrt{d_i d_j} + \tau_r \mathrm{max}\left(\frob{\Y^{r+1}},\frob{\B \odot \V^{r+1}}\right)\,;\\
        \frob{\R_{\rm{d_1}}^{r+1}} &\leq \tau_a \sqrt{d_i d_j} + \tau_r \frob{\B \odot \BPsi^{r+1}}\,.\\
    \end{aligned}
\end{equation}
The solver for the subproblem in \V is called \textsc{SolveSOC} and summarized in \Cref{alg:solveSOC}.

\spara{\mcalsep .}
We term the overall algorithm for solving the local problem \mcalsep, whose pseudo-code is reported in \cref{alg:mcalsep}.

\begin{algorithm}[ht]
\caption{\mcalsep}
\label{alg:mcalsep}
\begin{algorithmic}[1]
\REQUIRE low-level $\measure_i=cc_{\w_i}\left(P_{X_i}^{[I]}\right)$, high-level $\measure_j=cc_{\w_j}\left(P_{X_j}^{[J]}\right)$, structural prior \B, entropic penalty $\lambda_\omega$, ADMM stepsize $\rho$, GD learning rate $\eta$, GD tolerance $\tau_{\mathrm{GD}}$, absolute tolerance $\tau_a$, relative tolerance $\tau_r$, $\bm\Omega$ tolerance $\tau_{\bm\Omega}$, \V tolerance $\tau_\V$, maximum numbers of iterations $T_{\mathrm{GD}}$, $T_{\mathrm{SOC}}$, and $T_{\mathrm{GL}}$   
\ENSURE \V, $\bm\Omega$, $\mathrm{ConvergenceFlag}$ 

\STATE Initialize $\Y$ in \stiefel{d_i}{d_j}, $\V\leftarrow \Y$, \C with entries $c_{pq} \leftarrow \Wtwo^2\left(P_{(\B \odot \V)^\top(X_i)}^p,P_{X_j}^q\right)$ for each $(p,q) \in [I]\times [J]$, $\bm\Omega \leftarrow \textsc{Sinkhorn}(\C, \w_i, \w_j, \lambda_\omega)$, $\mathrm{ConvergenceFlag}\leftarrow \mathrm{False}$  

\FOR{$k=1$ to $T_\mathrm{GL}$}
    \STATE $\V_{\mathrm{prev}} \leftarrow \V$, $\Y_{\mathrm{prev}} \leftarrow \Y$, $\bm\Omega_{\mathrm{prev}} \leftarrow \bm\Omega$
    \STATE $\V, \Y, \mathrm{ConvergenceSOC} \leftarrow \textsc{SolveSOC}(\measure_i, \measure_j, \B, \bm\Omega, \V_\mathrm{prev}, \Y_\mathrm{prev}, \rho, \eta, \tau_{\mathrm{GD}}, \tau_a, \tau_r, T_{\mathrm{GD}}, T_{\mathrm{SOC}})$
    \STATE $c_{pq} \leftarrow \Wtwo^2\left(P_{(\B \odot \V)^\top(X_i)}^p,P_{X_j}^q\right)$ for each $(p,q) \in [I]\times [J]$
    \STATE $\bm\Omega \leftarrow \textsc{Sinkhorn}(\C, \w_i, \w_j, \lambda_\omega)$
    \IF{$\frob{\V -\V_{\mathrm{prev}}}\leq \tau_\V$ \AND $\frob{\bm\Omega - \bm\Omega_{\mathrm{prev}}}\leq \tau_{\bm\Omega}$ \AND $\mathrm{ConvergenceSOC}$}
        \STATE $\mathrm{ConvergenceFlag}\leftarrow \mathrm{True}$
        \RETURN \V, $\bm\Omega$, $\mathrm{ConvergenceFlag}$
    \ENDIF{}
\ENDFOR
\RETURN \V, $\bm\Omega$, $\mathrm{ConvergenceFlag}$

\end{algorithmic}
\end{algorithm}

\begin{remark}\label{remark:MG-case}
    Although the \mcalsep algorithm applies to the multivariate Gaussian case as well--basically, it only runs the \textsc{SolveSOC} procedure in \Cref{alg:solveSOC} with fixed $\bm\Omega=[1]$,--in our previous work \citep{dacunto2026learningconsistentcausalabstraction} we propose a specific problem formulation.
    The latter results in a feasibility problem that avoids nonconvex objectives and explicitly characterizes the set of optimal CLCAs.
    The algorithmic solution is a fast and specialized method, called \spectral, suitable for both positive definite and semidefinite covariances, which is relevant for global sections (cf. Cor.\nb\ref{cor:same_spectrum}).
\end{remark}
\section{Empirical assessment}\label{sec:empirical_assessment}

In \Cref{subsec:clca-learning} we empirically evaluate the performance of \mcalsep in CLCA learning.
We start by comparing it with the baselines in \citep{d2025causal,dacunto2026learningconsistentcausalabstraction} in the multivariate Gaussian setting.
Since the methods in \citep{d2025causal} assume positive definite covariances, we focus on this case. 
Then we move to the mixture setting, where only \mcalsep is applicable.
In \Cref{subsec:can-learning}, we test our search procedure in \Cref{alg:search_proc} to learn different CAN structures.
The hyperparameter values used are given in App.\nb\ref{app:hyperparams}.

\subsection{CLCA learning}\label{subsec:clca-learning}
\spara{Multivariate Gaussian case.}
Given as input the low- and high-level SCMs, viz. \scm{\ell} and \scm{h}, the goal is to recover the CLCA matrix $\V^\top$, assuming full prior knowledge of \B.
As already remarked, this is still a challenging setting relevant for application domains such as neuroscience \citep{d2025causal} and deep learning \citep{geiger2025causal}.
In the multivariate Gaussian case, we compare \mcalsep with the \spectral, CLinSEPAL, LinSEPAL-ADMM, and LinSEPAL-PG methods by \citep{d2025causal,dacunto2026learningconsistentcausalabstraction}.  
While \spectral solves a feasibility problem, the other baselines minimize the KL divergence for the case of positive definite matrices, relying on different optimization frameworks.  
Specifically, CLinSEPAL solves a smooth constrained Riemannian problem (Prob. 3 in \citealp{d2025causal}) by combining SOC, ADMM \citep{boyd2011distributed}, and SCA \citep{nedic2018parallel}.
Conversely, LinSEPAL-ADMM and LinSEPAL-PG solve a nonsmooth unconstrained Riemannian problem (Prob. 2 in \citealp{d2025causal}), the former using manifold ADMM \citep{kovnatsky2016madmm} and the latter manifold proximal gradient \citep{chen2020}.

We tested the methods on the same synthetic dataset used in our previous work \citep{dacunto2026learningconsistentcausalabstraction}.
Thus, the results for the baselines are the same as \cite{dacunto2026learningconsistentcausalabstraction}.
The dataset includes three configurations $(\ell, h) \in \{(12,2), (12,4), (12,6)\}$, corresponding to \emph{high}, \emph{medium-high}, and \emph{medium} abstraction levels.  
For each configuration, $S=30$ ground-truth matrices \V are generated.  
In each simulation $s \in [S]$, the methods are run $\ntrials=50$ times with different initializations, and the best solution is selected according to the KL divergence for all the methods but \mcalsep, which optimizes \Wtwo instead.  
LinSEPAL-PG terminates when the KL divergence falls below $10^{-4}$, whereas the other methods stop upon residual convergence. 

To evaluate performance, we monitor 
\emph{(i)} the normalized absolute Frobenius distance between the estimate \Vhat and the ground-truth \V,
\emph{(ii)} the KL divergence $\mathrm{KL}\left(P_{X^h}||P_{\widehat{\V}^\top(X^\ell)}\right)$  and the $2$-Wasserstein distance $\Wtwo^2\left(P_{\Vhat^\top(X^\ell)},P_{X^h}\right)$ to quantify the alignment between $\measure^h=P_{X^h}$ and the abstraction $P_{\widehat{\V}^\top(X^\ell)}$ of $\measure^\ell=P_{X^\ell}$ via $\widehat{\V}$, in accordance with the objectives used by the methods.
Furthermore, we ensured that the returned \Vhat are constructive for all the methods and that they respect the structural prior \B.

\Cref{fig:mvG_synth_data} shows the performance of the tested methods.
The vertical bars indicate the interquartile ranges across the $S$ simulations. 
All methods achieve good alignment for all $s \in [S]$, with KL (center) and \Wtwo (right) showing low values and similar behaviors across methods and settings.  
However, the Frobenius absolute distance (left) shows that, under high coarse-graining, identifying \V--up to sign--is more challenging for all methods, as the set of matrices that zero the objectives enlarges.
For medium-high and medium coarse-graining instead, the estimates get closer to the ground-truth with a similar level of alignment.

In summary, these results show that \spectral and \mcalsep achieve comparable performance to the methods in \cite{d2025causal} in learning CLCAs with positive definite covariances as input, while being more widely applicable.
Since \spectral avoids nonconvex, costly objectives and further reduces the computational cost by providing updates in closed form, we recommend the latter method in the multivariate Gaussian setting.

\begin{figure}[ht]
    \centering
    \includegraphics[width=\columnwidth]{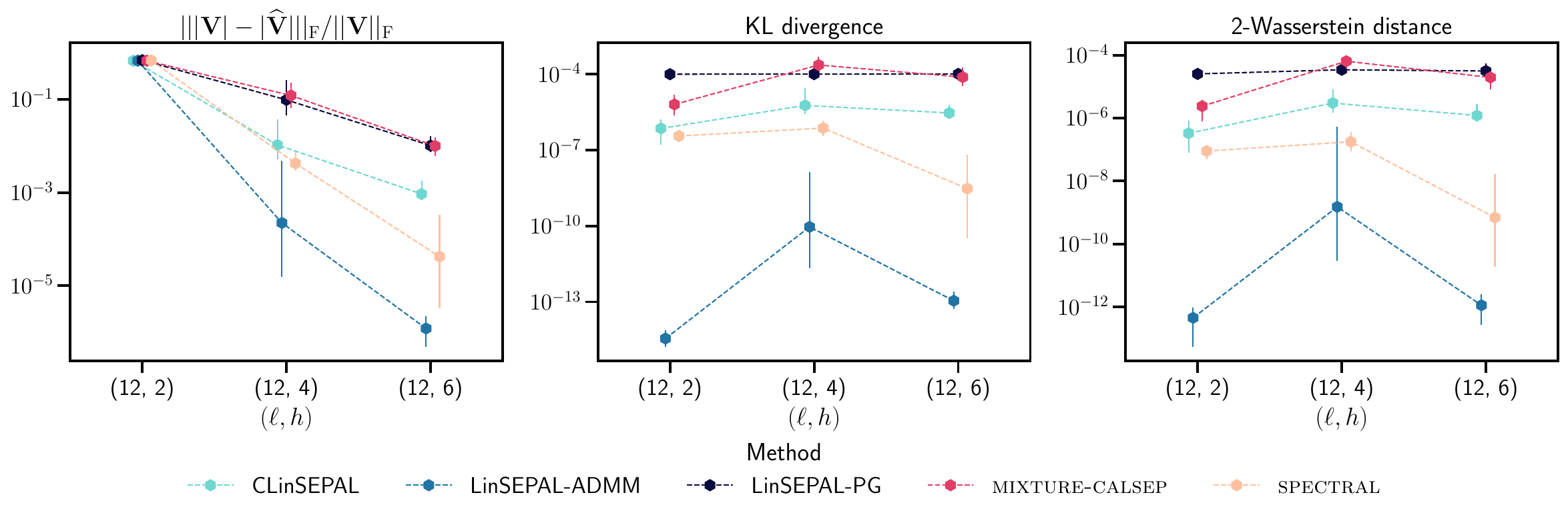}
    \caption{Synthetic results for the solution of the local problem across all settings $(\ell, h)$ from \cite{d2025causal}.}
    \label{fig:mvG_synth_data}
\end{figure}

\spara{Gaussian mixture case.}
We then turn to CLCA learning in the case where the input distributions \measurelow and \measurehigh are Gaussian mixtures in \GMM{3}{\ell} and \GMM{3}{h}, respectively.
Since here we also have to learn the coupling between components, for the performance evaluation, in addition to the normalized absolute Frobenius distance between the estimate \Vhat and the ground-truth \V, we monitor the normalized Frobenius distance between the estimate $\widehat{\bm\Omega}$ and the ground-truth $\bm\Omega$.
Instead, to quantify the alignment, we look at $\MPW\left(\measurelow, \measurehigh\right)$ in \Cref{eq:mpw}, since the only suitable method for this setting is \mcalsep.
Finally, we also ensured that the learned causal abstractions are constructive and adhere to \B. 

We consider the same settings $(\ell, h)$ as in the multivariate Gaussian case.
For each configuration, we generate $S=30$ ground-truth \V and random couplings $\bm\Omega$ for both cases of Gaussian mixtures whose components have \emph{(i)} positive semidefinite (psd) and \emph{(ii)} positive definite (pd) covariances.
Furthermore, to better assess the performance of the method, we test \mcalsep both when $\bm\Omega$ is specified as prior and when it must be learned.
Thus, for each configuration, we have four cases $(\textsf{\small{omega prior}}, \textsf{\small{psd}}) \in \{\mathrm{False},\mathrm{True}\} \times \{\mathrm{False},\mathrm{True}\}$.
For each of the four cases, \mcalsep is run $\ntrials=30$ times with different initializations, and the best solution is selected by looking at the lower values of \MPW.

\begin{figure}[ht]
    \centering
    \includegraphics[width=\columnwidth]{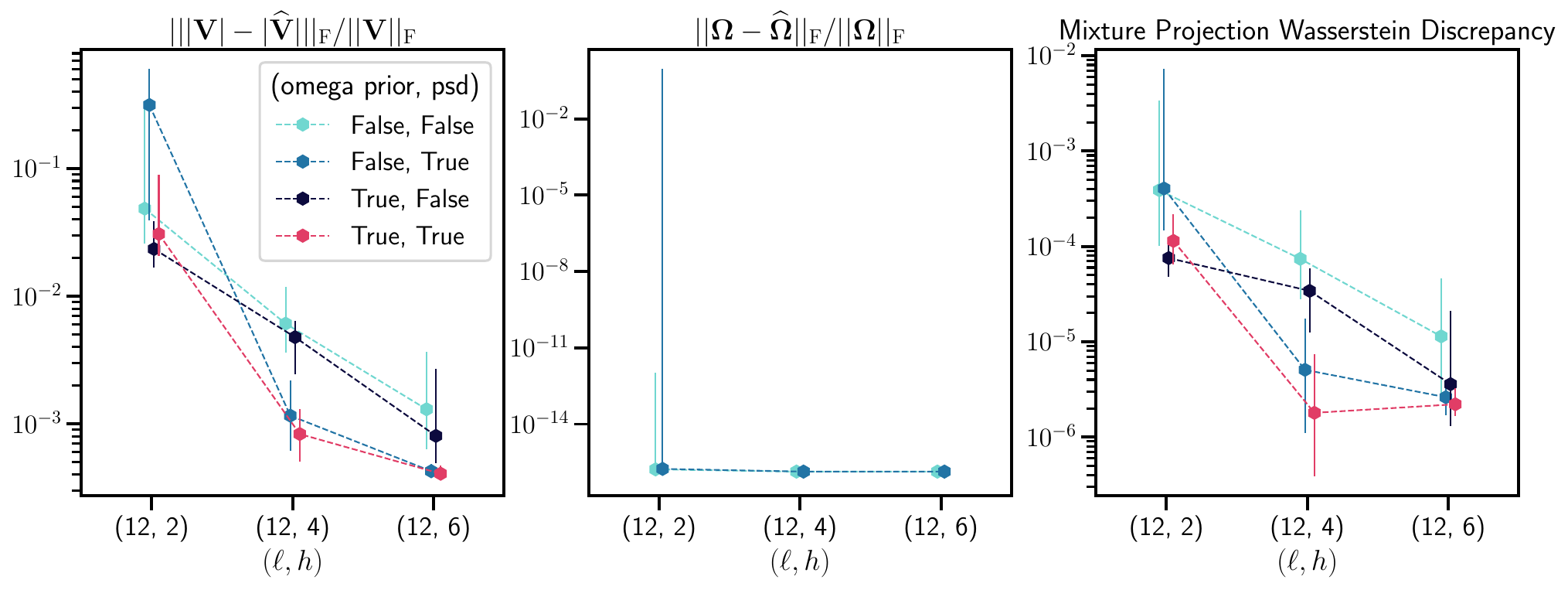}
    \caption{Synthetic results for the solution of the local problem across all settings $(\ell, h)$, when the input distributions are Gaussian mixtures.}
    \label{fig:gmm_synth_data}
\end{figure}

\Cref{fig:gmm_synth_data} shows the results, where the vertical bars indicate the interquartile ranges across the $S$ simulations. 
Overall, also in this case the level of alignment (right) is good for all the configurations and investigated cases.
Additionally, under high coarse-graining, the Frobenius absolute distance (left) tends to be higher when $\bm\Omega$ is not given as prior knowledge.
Indeed, looking at the performance on the estimation of $\bm\Omega$ (center), we see that for the latter setting the interquartile ranges are higher, for both psd and pd covariances.
Inevitably, errors on the estimation of $\bm\Omega$ negatively affect the learning of \V and the alignment.
Interestingly, when $\bm\Omega$ is provided instead, the Frobenius absolute distance as well as the interquartile ranges consistently reduce w.r.t. the multivariate Gaussian case.
We hypothesize that this result is due to the presence of multiple components related by the same CLCA which, compared to the latter case, facilitate the identifiability--up to sign--of \V.  
Finally, in the medium-high and medium coarse-graining, \mcalsep perfectly recovers $\bm\Omega$ and $\widehat{\V}$ gets very close in absolute terms to \V, leading to a better alignment.

\spara{Running without structural prior.}
So far, in the data-generating process, a CLCA between the low- and high-level representations always exists.
Moreover, all algorithms receive the ground-truth CLCA structure $\B^\top$ as input.
Here, instead, we investigate the capability of \mcalsep to find an alignment--in the form of a CLCA \V adhering to SEP--between the low- and high-level when \emph{(i)} a CLCA may not exist in the data-generating process and \emph{(ii)} the structural prior \B is not provided as input.
We proceed as follows.

First, we generate an MCM with $\ell=6$ causal variables, yielding a Gaussian mixture $\measurelow \in \GMM{3}{\ell}$.
Then, we coarsen \measurelow in two different ways, producing two Gaussian mixtures in $\GMM{3}{h}$, namely $\measure^{\mathrm{CLCA}}$ and $\measure^{\mathrm{Prob}}$, where $h=2$.
In the first case, the coarsening $\measure^{\mathrm{CLCA}}$ is a proper CLCA of \measurelow adhering to SEP.
In the second case, instead, $\measure^{\mathrm{Prob}}$ is not a CLCA; rather, it is a probabilistic Gaussian mixture model obtained from \measurelow via a dense linear map belonging to $\stiefel{\ell}{h}$.
However, in both cases, the necessary spectral interlacing condition in Cor.\nb\ref{cor:spectral_interlacing_gmm} for the existence of a CLCA holds.

At this point, we enumerate all possible CLCA structures $B$ and encode them with matrices $\B_i^\top$, $i \in [B]$.
Furthermore, in the case of $\measure^{\mathrm{CLCA}}$, the true CLCA has the same structure as $\B_2^\top$.
Then, for each $i \in [B]$, and considering $\B_i^\top$ as a working hypothesis for the CLCA structure, we solve Prob.\nb\ref{eq:globalProblemGMM} using \mcalsep without any prior on the coupling matrix $\bm\Omega$.
Specifically, we first provide as input to \Cref{alg:mcalsep} the pair $(\measurelow, \measure^{\mathrm{CLCA}})$ and then $(\measurelow, \measure^{\mathrm{Prob}})$.
Additionally, for each $i$ and each case, we run \mcalsep $\ntrials=30$ times with different initialization seeds.

\begin{figure}[tb]
\centering
\includegraphics[width=\linewidth]{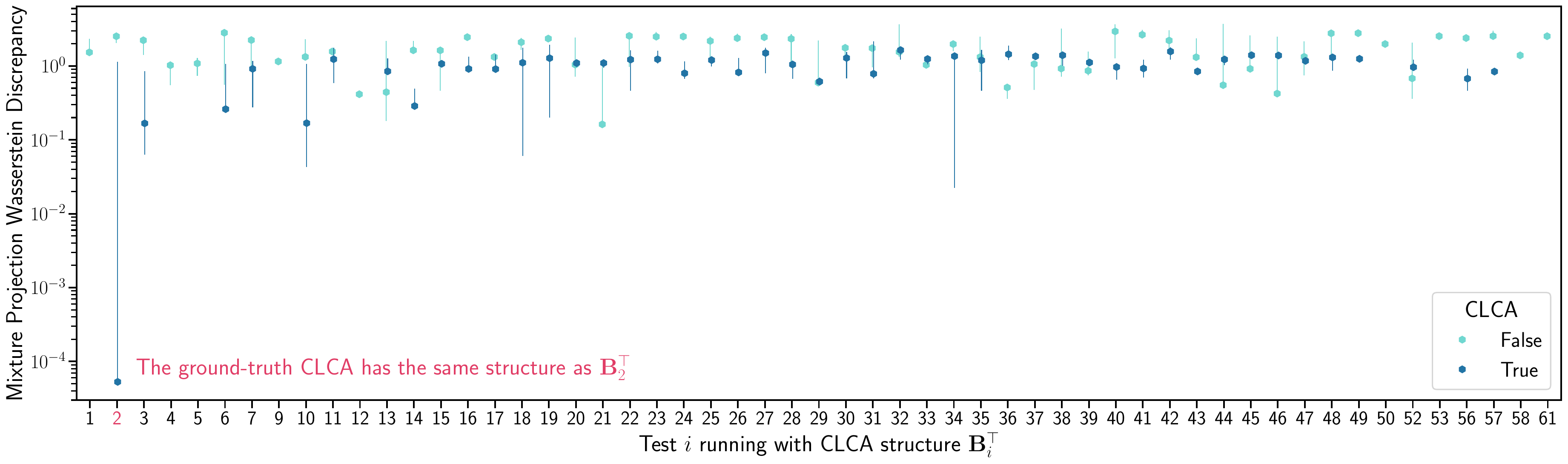}
\caption{\MPW across tests $i \in B$ using the hypothetical CLCA structure $\B_i^\top$.
Only converged trials are shown.
$\mathrm{CLCA}=\text{True}$ corresponds to running \mcalsep with $(\measurelow, \measure^{\mathrm{CLCA}})$ as input; $\mathrm{CLCA}=\text{False}$ corresponds to using $(\measurelow, \measure^{\mathrm{Prob}})$.
Markers denote the $50$th percentile, while bars represent the interquartile ranges.}
\label{fig:B_not_given}
\end{figure}

\Cref{fig:B_not_given} shows the results, considering only converged trials.
Hence, when none of the \ntrials converges, no data is plotted.
Restricting attention to converged trials is important for our goal, as it ensures that the learned maps belong to $\stiefel{\ell}{h}$ and respect the structure of the corresponding $\B_i$.

On the x-axis of \Cref{fig:B_not_given} we report the index $i$, while the y-axis shows \MPW.
Additionally, as indicated in the legend, $\mathrm{CLCA}=\text{True}$ corresponds to the case where \mcalsep is run with $(\measurelow, \measure^{\mathrm{CLCA}})$ as input--hence a CLCA exists in the data-generating process--whereas $\mathrm{CLCA}=\text{False}$ corresponds to using $(\measurelow, \measure^{\mathrm{Prob}})$.
Finally, markers denote the $50$th percentile, while bars represent the interquartile ranges computed over converged trials.

The first take-home message is that \mcalsep minimizes $\MPW(\measurelow, \measure^{\mathrm{CLCA}})$ below $10^{-4}$ at $i=2$, that is, when the CLCA structure $\B_2^\top$ is used.
This is consistent with the CLCA learning performance of \mcalsep shown in \Cref{fig:mvG_synth_data,fig:gmm_synth_data}.
Conversely, when $i \neq 2$, \mcalsep is unable to find a CLCA map that produces an alignment comparable to that obtained at $i=2$ in all trials.

Moreover, \mcalsep does not find any CLCA map that reduces $\MPW(\measurelow, \measure^{\mathrm{Prob}})$ below $10^{-1}$.
Thus, the second take-home message is that, for this dataset, it is difficult to find a CLCA that replicates the probabilistic coarsening and consistently minimizes the misalignment between the input distributions.

\begin{figure}[th]
    \centering
    \subfloat[Chain\label{subfig:chain}]{
        \includegraphics[width=.3\textwidth]{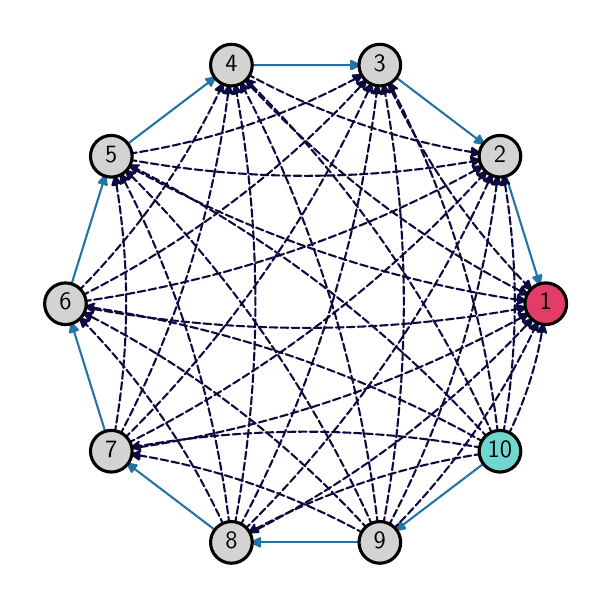}
    }
    \hfill
    \subfloat[Star\label{subfig:star}]{
        \includegraphics[width=.3\textwidth]{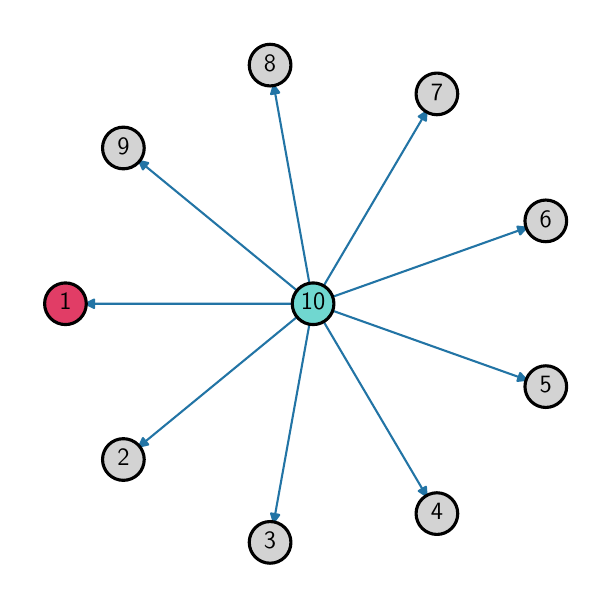}
    }
    \hfill
    \subfloat[Tree\label{subfig:tree}]{
        \includegraphics[width=.3\textwidth]{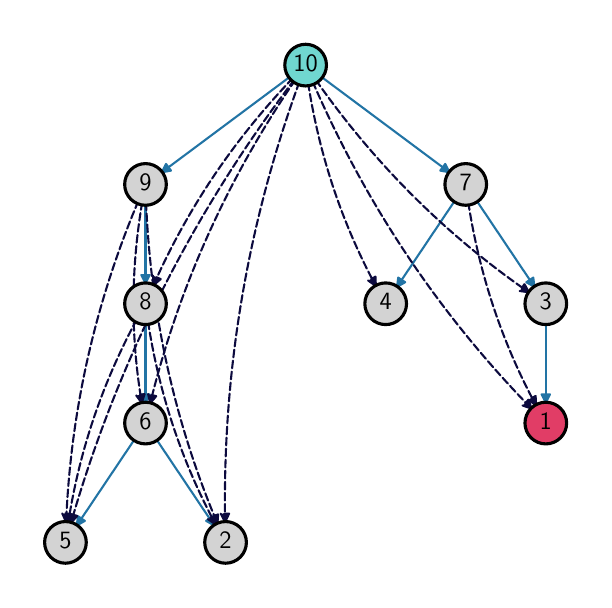}
    }
    \caption{CANs used in the empirical evaluation. 
    Nodes \myred{$1$} and \mycyan{$10$} are the finest and coarsest MCMs, respectively. 
    Edges in the transitive reduction are drawn in \mylightblue{light blue}; whereas edges appearing only in the transitive closure are dashed \mydarkblue{dark blue}.
    Panels correspond to (a) chain, (b) star, and (c) tree transitive reductions.}
    \label{fig:graphs}
\end{figure}

\subsection{CAN learning}\label{subsec:can-learning}

We then turn to the learning of consistent CANs.
We set $N=10$ and assign random dimensions $d_i \in [2, 2N]$ to each MCM, $i \in [N]$.
We then attach the MCMs to the nodes of three graphical structures in \Cref{fig:graphs}, where index \myred{$1$} corresponds to the finest model and index \mycyan{$10$} to the coarsest.
Edges in the transitive reduction are drawn in \mylightblue{light blue}, while edges belonging only to the transitive closure--the full CAN--are dashed \mydarkblue{dark blue}.
First, we have the chain in \Cref{subfig:chain}, whose full CAN is the fully-connected graph.
Second, the star in \Cref{subfig:star}, which coincides with the full CAN.
Third, the tree in \Cref{subfig:tree}, whose full CAN results in a denser tree.
For each CLCA $i \leqslant j$ in the transitive closure, we randomly generate the structure $\B_{ij} \in \{0,1\}^{d_i \times d_j}$ and sample $\V_{ij} \in \stiefel{d_i}{d_j}$ consistent with $\B_{ij}$. 

In light of \Cref{th:kernelCAN,th:globalsec_existence}, all considered structures admit global sections, which can be generated iteratively starting from the coarsest $\measure_{10}$.  
For each structure, we generate $S=30$ global sections with \num{3} components.
We then run the search procedure described in \Cref{alg:search_proc}, using \mcalsep--without any prior knowledge about the coupling matrix--to solve each local problem.  
Because of nonconvexity, each local problem is solved by running \mcalsep \ntrials times, with different initializations.
Specifically, up to $\ntrials \in \{10,50\}$ times.  
In case convergence is established--that is, $\mathrm{ConvergenceFlag}$ in \Cref{alg:mcalsep} is equal to $\mathrm{True}$--within $\ntrials$ runs, we stop and set the corresponding entry of \A to $1$.  
Performance is evaluated in terms of \emph{false positive rate} (FPR), \emph{true positive rate} (TPR), and \emph{false discovery rate} (FDR) w.r.t. the full CAN.  
By Cor.\nb\ref{cor:same_spectrum}, the covariance matrices of nodes $\measure_1, \ldots, \measure_9$ are positive semidefinite and share the same nonzero eigenvalues, corresponding to the spectrum of $\measure_{10}$.  
This is an extreme scenario, since the necessary condition in Cor.\nb\ref{cor:spectral_interlacing_gmm} holds true for all $N(N-1)/2$ MCM pairs.  
Consequently, the matrix $\mathbf{P}$ has all entries equal to $1$ in its strictly lower triangular part in both cases, and thus provides no information to reduce the number of local CA learning problems.  

\begin{figure}[t]
    \centering
    \includegraphics[width=\columnwidth]{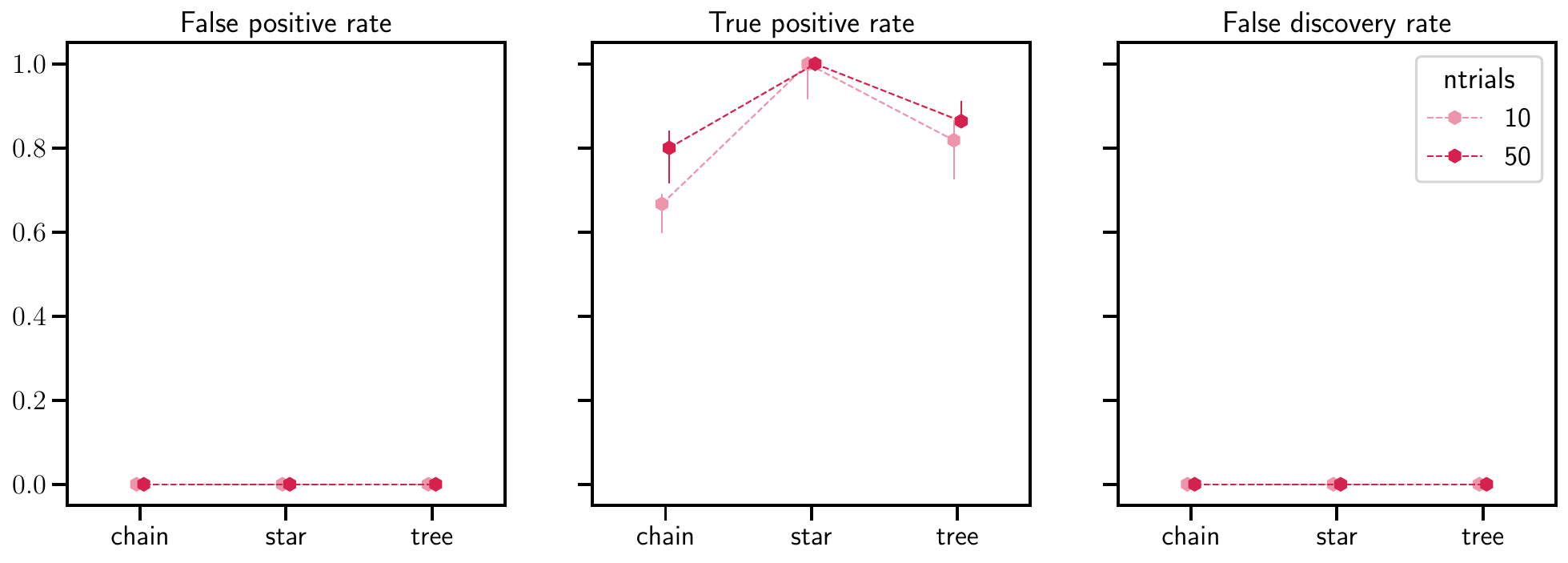}
    \caption{False positive (left) and true positive (center) and false discovery (right) rates for the proposed method on the three CANs in \Cref{fig:graphs}, when the observed global section \measure is a collection of Gaussian mixtures with three components, with different dimensions.}
\label{fig:CAN_learning}
\end{figure}

\Cref{fig:CAN_learning} shows the results.
Different values of \ntrials are given in hues of red, while vertical bars indicate the interquartile ranges across the $S$ simulations.
Although Cor.\nb\ref{cor:spectral_interlacing_gmm} holds true, the FPR (left) and FDR (right) confirm that \mcalsep never identifies a CLCA when none exists in the ground truth.
This holds across all settings and values of \ntrials and is consistent with the results in \Cref{fig:B_not_given}.
Conversely, the TPR (right) depends on \ntrials.
As expected, due to nonconvexity, the initialization is important since residual convergence may not be reached within the maximum number of iterations (cf. App.\nb\ref{app:hyperparams}).  
The TPR increases with \ntrials as expected.
Additionally, the results suggest that a lower value for \ntrials is sufficient to recover the star graph; the opposite is true for the chain.
This is an intriguing evidence that we plan to investigate further in future work.
\section{Application to Financial Data}\label{sec:finappl}

This section presents an application of the proposed framework to financial data.

We collect daily returns of equally weighted industry portfolios at two different levels of granularity from the Fama--French data library\footnote{\url{https://mba.tuck.dartmouth.edu/pages/faculty/ken.french/data_library.html}}
, namely the datasets \texttt{5\_Industry\_Portfolios\_Daily.csv} and \texttt{10\_Industry\_Portfolios\_Daily.csv}.
Both datasets span the period from July \num{1}, \num{1926} to October \num{31}, \num{2025}. 
Industry portfolios are constructed by aggregating stocks traded on the NYSE, AMEX, and NASDAQ exchanges according to Compustat or CRSP SIC codes.
The former dataset aggregates stocks into \num{5} portfolios, while the latter aggregates them into \num{10}.
Hereafter, we refer to these granularities as level I and level II, respectively.
The aggregation induces the surjective map in \Cref{subfig:ind_level_map} from level II to level I, which we exploit to define priors on the structure of the CLCAs.

\begin{figure}[t]
    \centering

    \subfloat[Surjective map between industry portfolios.\label{subfig:ind_level_map}]{
        \includegraphics[width=0.45\linewidth]{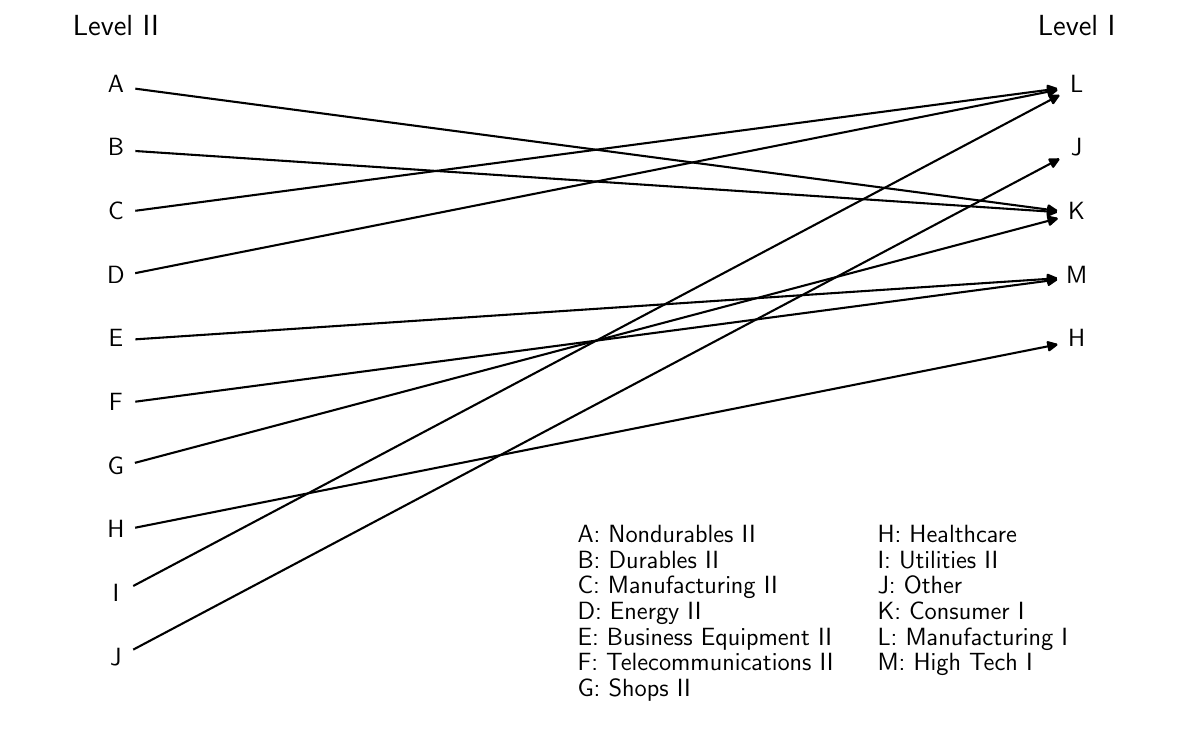}
    }\hfill
    \subfloat[CAN-based trading system. For each AI agent, the colored bar represents the industry portfolios in which the agent invests. \label{subfig:gtcan}]{
        \includegraphics[width=0.45\linewidth]{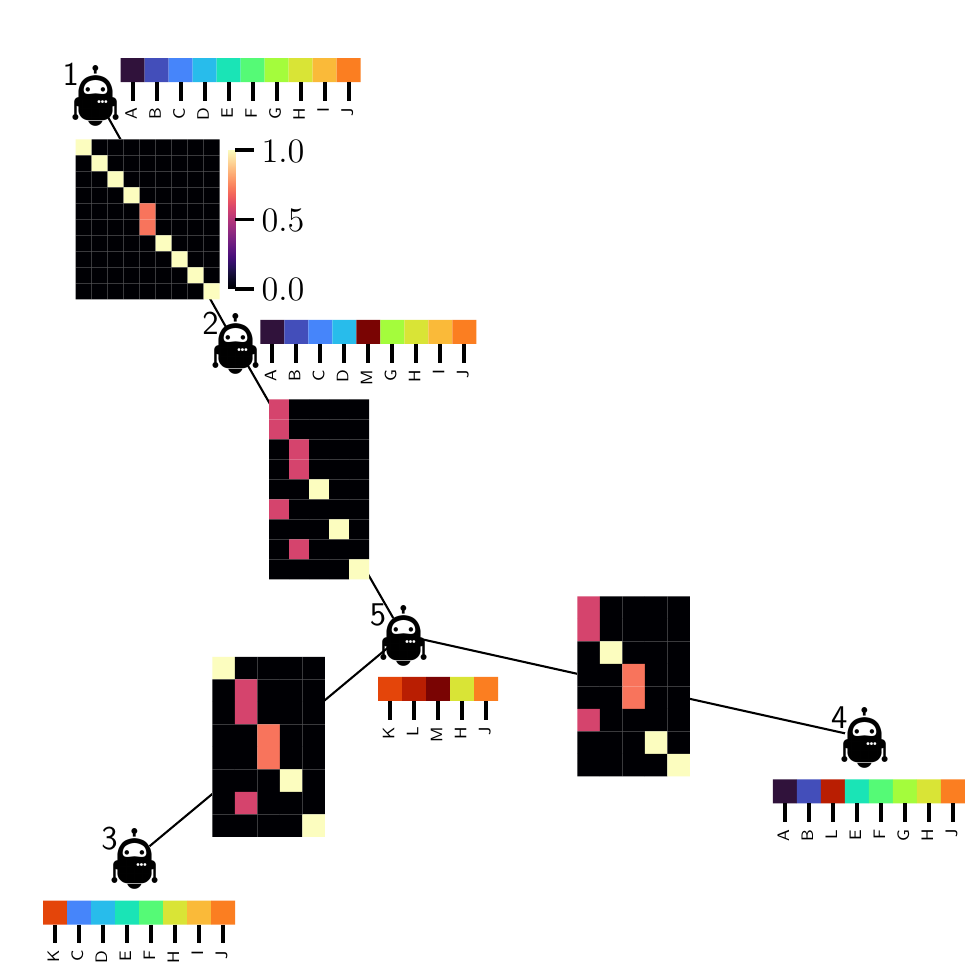}
    }

    \caption{Industry mapping and CAN-based trading system.}
    \label{fig:combined_fig}
\end{figure}

We simulate a CAN-based trading system composed of $N=5$ AI agents.
Each agent invests in industry portfolios from both level I and level II.
The system is anchored to the CK of the \enquote{coarsest} agent at index $5$, which invests exclusively in level I portfolios.
This corresponds to considering the CAN at equilibrium, i.e., the CK $\measure=\{\measure_1,\ldots,\measure_5\}$ is a global section belonging to the set of fixed points of \Cref{eq:dynsistCSprob} and can be recursively generated starting from $\measure_5$.
Specifically, we propagate the latter over the CAN via Stiefel embedding maps $\V_{ij}$, defined as
\begin{equation}\label{eq:gt-V-finappl}
    [\V_{ij}]_{mn}=\begin{cases}
        0 &\quad \text{if } [\B_{ij}]_{mn}=0\,,\\
        1/\sqrt{c_n} &\quad \text{otherwise;}
    \end{cases}
\end{equation}
where $c_n$ denotes the number of nonzero entries in the $n$-th column of $\B_{ij}$.
Notice that the definition of the maps $\V_{ij}$ reflects the aggregation strategy in the data, that is, industry portfolios are constructed using an equally-weighted aggregation strategy.
The resulting CAN is shown in \Cref{subfig:gtcan}.
For each agent, the colored bar indicates the industry portfolios in which it invests.
The coarsest agent is located at the center of the CAN, and all other agents are reachable from it.
Additionally, for each edge $i \sim j$, we plot the corresponding embedding map $\V_{ij} \in \stiefel{d_i}{d_j}$, where the color legend is located near $\V_{12}$.

The CK of the coarsest agent is a Gaussian mixture with three components induced by the agent’s MCM, capturing \emph{downturn}, \emph{mixed market}, and \emph{upturn} scenarios.
This corresponds to $w\!=\!3$, $u\!=\!5$ and $k_j\!=\!1$ in Def.\nb\ref{def:mcm-fun}, implying no local variation of causal mechanisms.
Each scenario is described by a structural vector autoregressive model (SVAR, \cite{kilian2017structural}) learned from data.
Specifically, for each scenario, we select a representative recent year based on annual returns and volatility of the industry portfolios: \emph{(i)} \num{2022} for the downturn, \emph{(ii)} \num{2023} for the mixed market, and \emph{(iii)} \num{2020} for the upturn.
The time series rebased to start at \num{1} and corresponding to each scenario are shown in \Cref{subfig:tsplot}.
For each scenario $s$, let \emph{(i)} $\X_s$ denote the time series; \emph{(ii)} $\X_{s,l}$ its lagged values at lag $l \in [L_s]_0$, where we denote by $[n]_0$ the range of integers from $0$ to $n$; \emph{(iii)} $\C_{s,l}$ the matrix of causal coefficients at lag $l \in [L_s]_0$; and \emph{(iv)} $\Z_s$ the values of the exogenous variables.
We apply the SSCASTLE method \citep{d2022multiscale} to fit the SVAR
\begin{equation}\label{eq:svar}
    \X_s = \sum_{l=0}^{L_s} \C_{s,l} \X_{s,l} + \Z_s\,,
\end{equation}
with an acyclicity constraint on $\C_{s,0}$ \citep{zheng2018dags} and $\ell_1$-norm sparsity regularization on $\C_{s,l}$.
The maximum lag $L_s$ and sparsity penalty $\lambda_s$ are selected in a data-driven manner via the BIC over the grid $(L_s, \lambda_s) \in [10]_0 \times [10^{-4}, 1]$.
In all scenarios $s$, the BIC selects the model with $L_s=0$, corresponding to the following structural equation model,
\begin{equation}\label{eq:sem}
    \X_s = \C_{s,0} \X_{s} + \Z_s\,.
\end{equation}
The learned causal structures are reported in \Cref{subfig:causalstructures}.

\begin{figure}[t]
    \centering

    \subfloat[Industry portfolios over the considered scenarios\label{subfig:tsplot}]{
        \includegraphics[width=\linewidth]{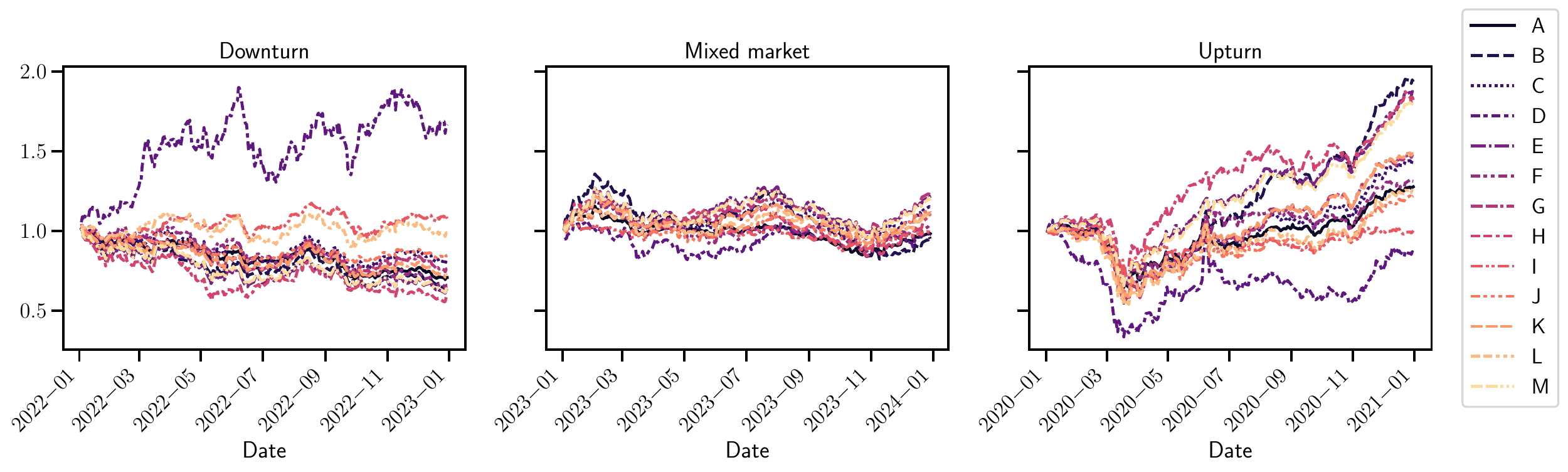}
    }

    \subfloat[Learned linear causal models\label{subfig:causalstructures}]{
        \includegraphics[width=\linewidth]{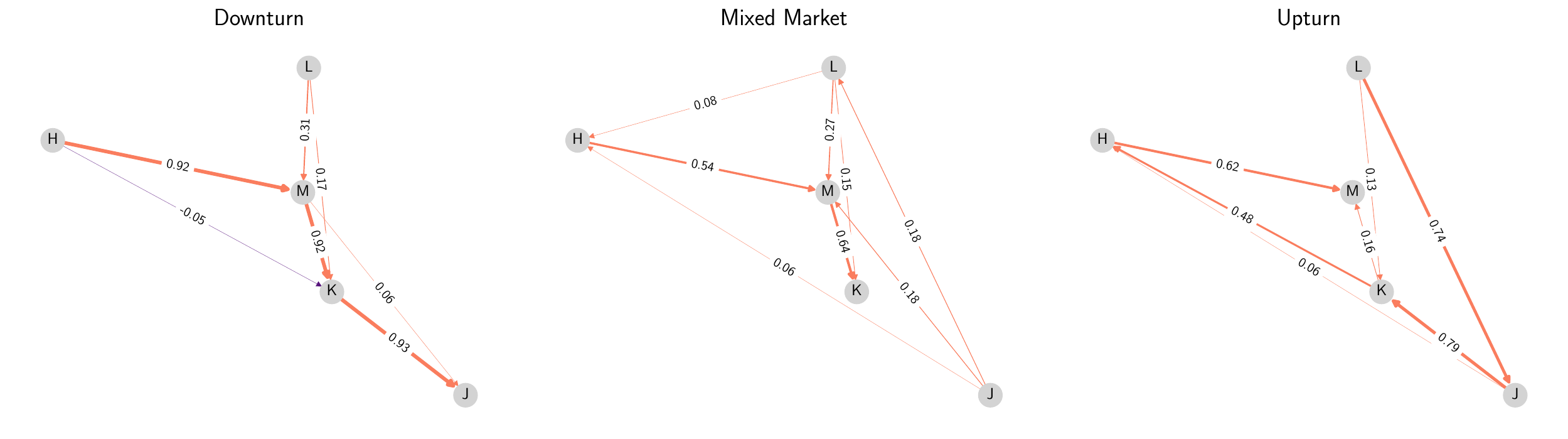}
    }

    \caption{The figure shows, for each scenario, (a) the behavior of the considered industry portfolios and (b) the linear causal model of the $5$-th AI agent learned via SSCASTLE from the level I portfolios , corresponding to $(\mathcal{D}_r, \mathcal{F}_r)$ in Def.\nb\ref{def:mcm-fun}.}
    \label{fig:vertical_figs}
\end{figure}

From the learned $\widehat{\C}_{s,0}$, we define the induced mixing matrix $\M_s = (\identity_{d_5} - \widehat{\C}_{s,0})^{-1}$.
The subjective CK of the coarsest agent, given its \emph{forward-looking market outlooks} $\myexogenous_s \sim N\left(\bm\mu_s^\myexogenous,\bm\Sigma_s^\myexogenous\right)$ with $\bm\Sigma_s^\myexogenous=\mathrm{diag}(\sigma_{s,1}^\myexogenous, \ldots, \sigma_{s, d_N}^\myexogenous)$, is
\begin{equation}\label{eq:coarsestCK}
    \begin{aligned}
        \GMM{S}{d_5} \ni \measure_5 &= cc_{\w}\left(P_{X_5}^{[S]}\right)=\sum_{s=1}^{S=3} w_s N(\bm\mu_s^\myendogenous, \bm\Sigma_s^\myendogenous)\\
        &=\sum_{s=1}^{S=3} w_s N(\M_s\bm\mu_s^\myexogenous, \M_s \bm\Sigma_s^\myexogenous \M_s^\top)\,.
    \end{aligned}
\end{equation}
For simplicity, defining the residuals
\begin{equation}\label{eq:residuals-sem}
    \widehat{\Z}_s=(\identity_{d_5} - \widehat{\C}_{s,0})\X_s\,,
\end{equation} 
we set  
\emph{(i)} $\bm\mu_s^\myexogenous=[\mu_{s,1}^\myexogenous, \ldots, \mu_{s,d_5}^\myexogenous]^\top$ equal to the mean vector of $\widehat{\Z}_s$, which is denoted by $\bm\mu_s =[\mu_{s,1}, \ldots, \mu_{s,d_5}]^\top$; and \emph{(ii)} $\sigma_{s,1}^\myexogenous, \ldots, \sigma_{s,d_5}^\myexogenous$ to the main diagonal of its covariance matrix, which is denoted by $\sigma_{s,1}\ldots,\sigma_{s,d_5}$.
We further set $\w=[0.3, 0.6, 0.1]^\top$, representing a setting in which the coarsest agent expects a mixed market scenario to be dominant in the upcoming period.
Finally, we construct the global section $\measure=\{\measure_1, \ldots, \measure_5\}$ by propagating the coarsest CK via Stiefel embeddings, that is, 
$\measure_i = cc_{\w}\left(P_{\V_{i5}(X_5)}^{[S]}\right)$ for $i \in [4]$, where $\V_{15}=\V_{12}\V_{25}$.

\subsection{CAN learning}\label{subsec:net_sheaf_learning_appl}

As a first task, we assume the CAN to be unknown and apply \cref{alg:search_proc} equipped with \cref{alg:mcalsep} to learn it from the observed global section $\measure$ and the structural matrices $\B_{ij}$, which are available from data (cf. \cref{subfig:ind_level_map}).
The hyperparameters used in the learning procedure are reported in App.\nb\ref{app:hyperparams}.

\begin{wrapfigure}{r}{0.45\linewidth}
    \centering
    \includegraphics[width=\linewidth]{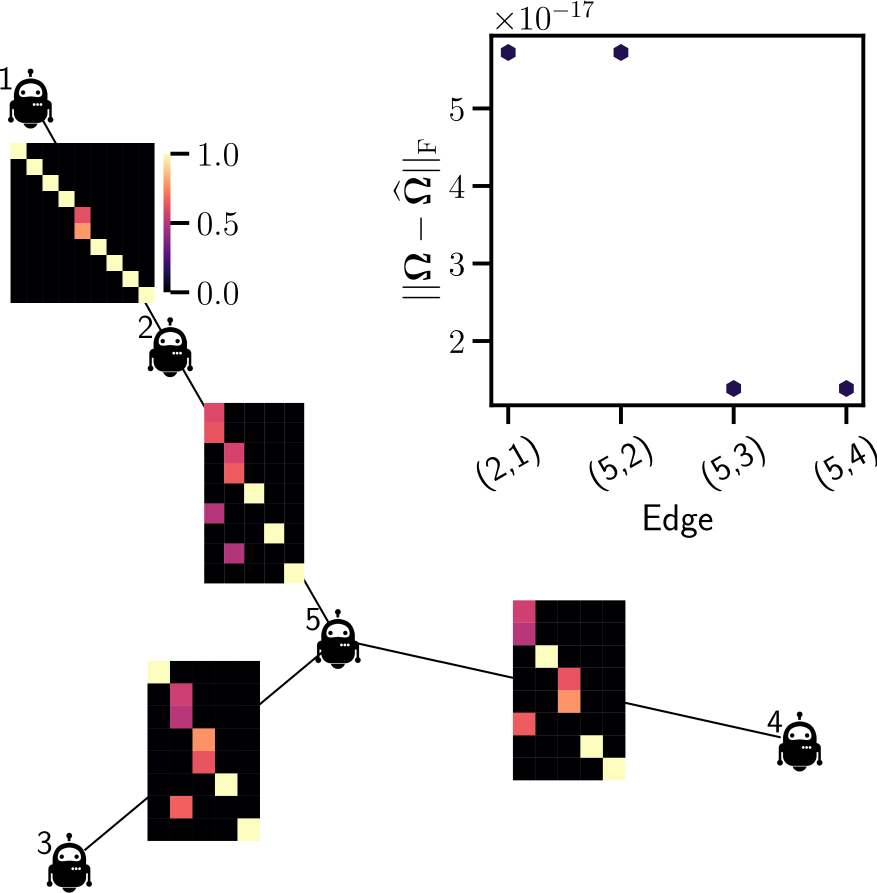}
    \caption{CAN learned by \cref{alg:search_proc} equipped with \cref{alg:mcalsep}. Ground-truth is shown in \cref{subfig:gtcan}. }
    \label{fig:finappl-learned-ns}
    \vspace{-10pt}
\end{wrapfigure}
\Cref{fig:finappl-learned-ns} displays the learned CAN, namely its topology--given by the transitive reduction of all admissible CLCAs relations--and the associated causal abstraction mappings encoded by the transposes of the embedding morphisms $\widehat{\V}_{ij} \in \stiefel{d_i}{d_j}$, with $d_i>d_j$.
Specifically, as done with those $\V_{ij}$ in \Cref{subfig:gtcan}, we plot these latter $\widehat{\V}_{ij}$ on the corresponding edges.
Additionally, we also show the Frobenius norm between the ground truth coupling matrix $\bm\Omega$ across GM components and the learned $\widehat{\bm\Omega}$, across all the learned edges.

From the comparison of \Cref{fig:finappl-learned-ns} with the ground truth in \Cref{subfig:gtcan}, we see that \cref{alg:search_proc} equipped with \cref{alg:mcalsep} perfectly recovers the CAN topology.
Additionally, the learned coupling matrix $\widehat{\bm\Omega}$ exactly matches the ground-truth coupling $\bm\Omega$ for each edge.
Furthermore, the learned embedding morphisms $\widehat{\V}_{ij}$ shown on the edges are consistent with the structural priors $\B_{ij}$, although their nonzero entries slightly deviate from the uniform values $1/\sqrt{c_n}$ used to define the ground-truth $\V_{ij}$ in \Cref{eq:gt-V-finappl}.
We therefore investigate how these deviations in the learned $\widehat{\V}_{ij}$ affect a downstream portfolio allocation task, which is relevant from the perspective of both portfolio and risk management.
In the sequel, the \emph{portfolio} of the agent is a linear combination of the industry portfolios in which it can invest, where the latter information is given in \Cref{subfig:gtcan}.

\spara{Mean-variance optimization over CAN.}
We consider a CAN optimizing a global mean--variance trade-off based on the CK of the coarsest agent.
Mean--variance optimization is a cornerstone of modern portfolio theory, originating from the seminal work of \citet{markowitz1952}.
The objective is to allocate capital across assets by jointly accounting for expected returns and their dependence structure, which captures the portfolio risk.
Indeed, the more the holdings behave similarly, the higher/lower the risk/diversification of the portfolio.
Let $\x\coloneqq[\x_1^\top,\ldots,\x_5^\top]^\top$ denote the portfolio weights of the AI agents, and let $\widehat{\V}_{55}=\identity_{d_5}$.
We restrict attention to \emph{(i)} \enquote{long-only} and \emph{(ii)} \enquote{fully-invested} strategies, i.e.,
$\x_i\geq\zeros_{d_i}$ and $\ones_{d_i}^\top \x_i=1$ for all $i \in [5]$, which means $\x_i \in \Delta_{d_i}$.
Let $\gamma \in \reall_+$ denote the risk-aversion parameter \citep{markowitz1952}.
The CAN mean--variance objective is then given by
\begin{equation}\label{eq:mvtradeoff}
    \mathcal{L}(\x ; \{\widehat{\V}_{i5}\}) =  \sum_{i=1}^5 \sum_{s=1}^S w_s \left(\left(\widehat{\V}_{i5}\bm\mu_s^\myendogenous\right)^\top\x_i - \gamma\, \x_i^\top \widehat{\V}_{i5} \bm\Sigma_s^\myendogenous \widehat{\V}_{i5}^\top \x_i \right)\,.
\end{equation}
As $\gamma$ increases, agents place greater emphasis on risk reduction through the covariance term.
The embedding matrices $\widehat{\V}_{ij}$ play a key role, as they induce the expected returns and covariance matrices of each agent’s assets from those of the coarsest agent.
Furthermore, we recall that $\{w_s, \bm\mu_s^\myendogenous, \bm\Sigma_s^\myendogenous\}_s$ are known as they represent the forward looking market outlook of the coarsest agent, who drives the overall CAN CK at equilibrium.

To compute the optimal allocations, we solve
\begin{equation}\label{eq:mvprob}
    \begin{aligned}
        \max_{\x} & \quad \mathcal{L}(\x ; \{\widehat{\V}_{i5}\})\\
        \text{subject to} & \quad \x_i \in \Delta_{d_i}\,,\quad \text{for each } i \in [5]\,. 
    \end{aligned}
    \tag{P3}
\end{equation}
Prob.\nb\eqref{eq:mvprob} is separable along the agent dimension $i$, and the subproblem for each agent is a constrained quadratic program that can be solved using standard optimization solvers \citep{diamond2016cvxpy,osqp}.  

\begin{figure}[t]
    \centering
    \includegraphics[width=\linewidth]{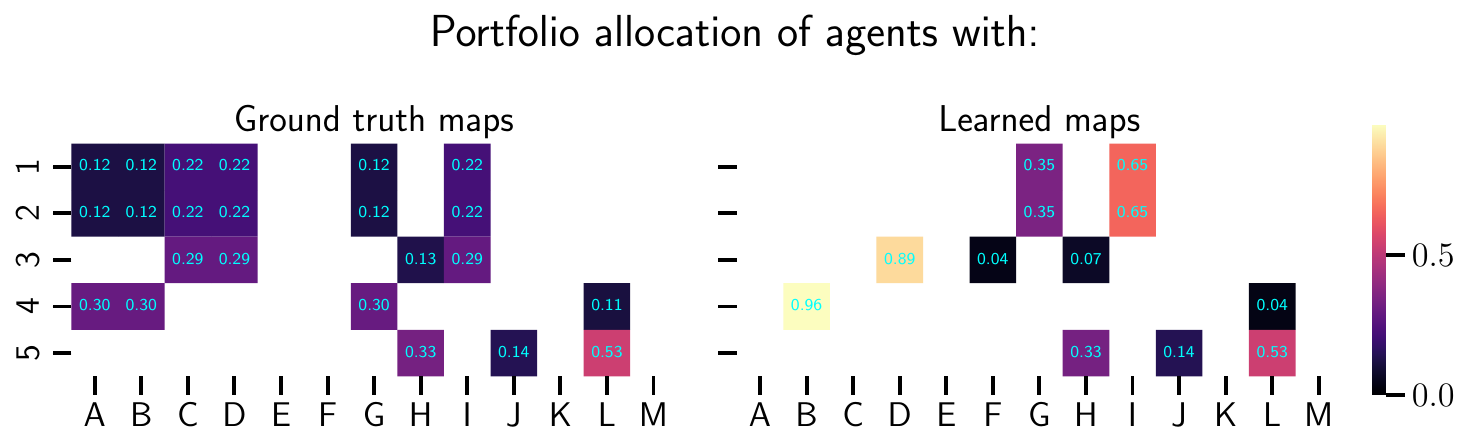}
    \caption{
    Optimized portfolio allocations obtained using (left) the ground-truth embedding matrices $\V_{i5}$ and (right) the learned $\widehat{\V}_{i5}$.
    In each matrix, rows correspond to AI agents, while columns correspond to industry portfolios. The rows are normalized, although rounding may cause the sum on some rows to appear greater than 1.}
    \label{fig:ptfalloc}
\end{figure}

\Cref{fig:ptfalloc} reports the optimized portfolio allocations obtained using in Prob.\nb\eqref{eq:mvprob} the ground-truth embeddings $\V_{i5}$ (left) and the learned embeddings $\widehat{\V}_{i5}$ (right), with $\gamma=0.5$.
For agent \num{5}, corresponding to the coarsest agent, the allocations coincide in both cases, since $\V_{55}$ and $\widehat{\V}_{55}$ are both set to the identity matrix.
For the remaining agents, the allocations appear markedly different at first glance.
However, when interpreted through the CAN structure and the aggregation map in \Cref{subfig:ind_level_map}, the differences are largely superficial.

Consider, for instance, agent \num{2}.
Under the ground-truth embeddings, the allocation splits into two clusters: \emph{(i)} industry portfolios $\{\mathsf{A}, \mathsf{B}, \mathsf{G}\}$, abstracted to $\mathsf{K}$ and accounting for $35\%$ of the portfolio; and \emph{(ii)} $\{\mathsf{C}, \mathsf{D}, \mathsf{I}\}$, abstracted to $\mathsf{L}$ and accounting for the remaining $65\%$.
Within each cluster, assets are equally weighted due to the degeneracy of the pushforward distribution, which induces identical expected returns and variances.

Instead, using the learned embeddings, a single industry portfolio is selected within each of the above mentioned clusters, namely $\mathsf{G}$ and $\mathsf{I}$, with weights close to those of the corresponding clusters.
Given the chosen risk-aversion level $\gamma$ and the slightly reduced values in the learned $\widehat{\V}_{25}$ with respect to those in $\V_{25}$ (cf. \Cref{subfig:gtcan,fig:finappl-learned-ns}), these portfolios are selected because they exhibit the lowest variance within their respective clusters.
A similar behavior is observed for the other agents; for agent \num{3}, a small additional weight is assigned to portfolio $\mathsf{F}$.
Overall, from the causal AI agents perspective, the portfolio allocations obtained using the learned embeddings are not qualitatively different from those obtained using the ground-truth embeddings. 
However, under realistic backtesting on market data--beyond the scope of this illustrative application--the strategies in \Cref{fig:ptfalloc} may lead to different performance due to several factors, such as the rebalancing frequency, transaction costs, and market liquidity.

\subsection{Counterfactual reasoning}\label{subsec:counterfactual_reasoning_appl}

We now consider a reasoning task over the CAN, which we consider as fully-specified system.
Consequently, we use the embedding maps $\V_{ij}$ in \Cref{eq:gt-V-finappl}.
Assume $N=5$ investors whose investment universes coincide with those of the AI agents, i.e., the $i$-th investor trades the same industry portfolios as agent~$i$.
Let $\y_i$ denote the portfolio allocation of the $i$-th investor, with $i \in [5]$.
We address the following questions:
\begin{squishlist}
    \item \emph{Q1)} How can we determine a \emph{counterfactual global section}--that is, a hypothetical CK for agent $5$--such that the observed allocations $\y=[\y_1^\top, \ldots, \y_5^\top]^\top$ maximize the mean--variance trade-off in \cref{eq:mvtradeoff}?
    \item \emph{Q2)} How does the composition of the counterfactual global section vary as a function of the risk-aversion parameter $\gamma$?
    \item \emph{Q3)} How far is the optimal CAN allocation $\x$--computed by solving Prob.\nb\eqref{eq:mvprob}--from $\y$ in terms of mean--variance trade-off, under the counterfactual global section, as $\gamma$ varies?
\end{squishlist}
These questions are intended as illustrative examples of the reasoning capabilities enabled by our framework.

\spara{Question Q1.}
Here, the portfolio allocations in \Cref{eq:mvtradeoff} are fixed to \y, and we optimize the CK--forward looking market outlook--of the coarsest agent in \Cref{eq:coarsestCK}.
Considering the MCM in \Cref{subfig:causalstructures} as fixed, the CK is determined by the parameters $\{w_s, \bm\mu_s^\myexogenous, \bm\Sigma_s^\myexogenous\}$ for $s \in [3]$, which we aim to learn.
For concreteness, we set $\y_i$ to be the equally weighted portfolio, commonly referred to as the \enquote{$1/n$ portfolio}.
To ensure a bounded optimization problem, we impose box constraints on $\mu_{s,i}^\myexogenous$ and $\sigma_{s,i}^\myexogenous$ for all $(s,i) \in [3]\times[5]$.
Additionally, we include an entropic regularization term on $\mathbf{w}$, leading to the following optimization problem:
\begin{equation}\label{eq:reasoning-prob}
    \begin{aligned}
        \max_{\mathbf{w} \in \Delta_3, \{\bm\mu_s^\myexogenous, \bm\Sigma_s^\myexogenous\}} & \quad \sum_{i=1}^5 \sum_{s=1}^3 w_s \left(\left(\V_{i5} \M_s \bm\mu_s^\myexogenous\right)^\top\y_i - \gamma\, \y_i^\top \V_{i5} \M_s \bm\Sigma_s^\myexogenous \M_s^\top \V_{i5}^\top \y_i \right) - \lambda_w \sum_s w_s \log w_s \, \\
        \text{subject to} & \quad \mu_{s,i}^\myexogenous \in [(1-\varphi) \mu_{s,i}, (1+\varphi)\mu_{s,i}],\quad \forall\,(s,i) \in [3]\times[5]\,,\\
        & \quad \sigma_{s,i}^\myexogenous \in [(1-\varphi) \sigma_{s,i}, (1+\varphi)\sigma_{s,i}],\quad \forall\,(s,i) \in [3]\times[5]\,. 
    \end{aligned}
    \tag{P4}
\end{equation}
In our experiments, we set $\varphi=0.1$.
Prob. \eqref{eq:reasoning-prob} is nonconvex, but admits a closed-form solution detailed in App.~\ref{app:sol-reasoning-prob} and denoted by $\{\widehat{w}_s, \widehat{\bm\mu}_s^\myexogenous, \widehat{\bm\Sigma}_s^\myexogenous\}_s$, thus yielding the counterfactual global section for a fixed value of $\gamma$.

\begin{figure}[t]
    \centering
    \includegraphics[width=\linewidth]{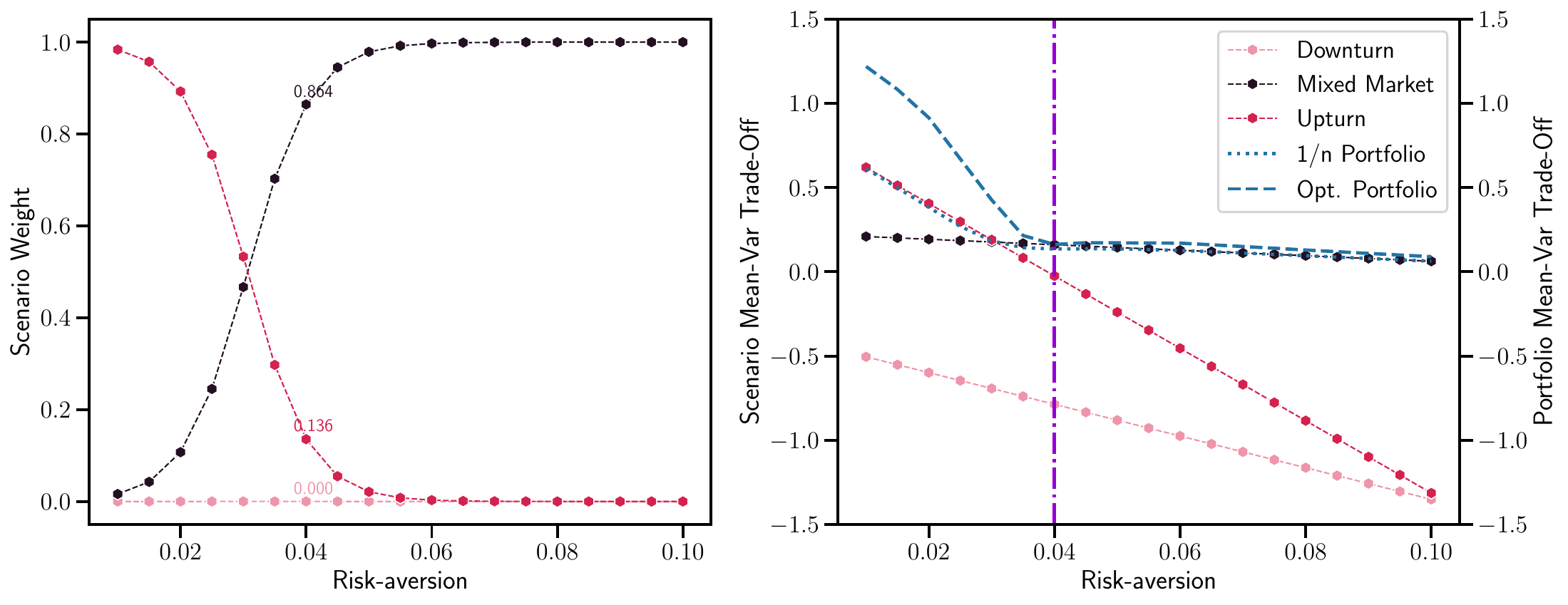}
    \caption{
    Optimized scenario weights $\mathbf{w}$ (left) and mean--variance trade-offs (right) as functions of the risk-aversion parameter $\gamma$.
    The right plot reports scenario-level trade-offs (left y-axis) and portfolio-level trade-offs (right y-axis).}
    \label{fig:reasoning}
\end{figure}

\spara{Question Q2.}
\Cref{fig:reasoning} illustrates \emph{(i)} the optimized scenario weights $\mathbf{w}$ of the counterfactual global section (left), and \emph{(ii)} the corresponding mean--variance trade-offs (right), as functions of $\gamma$.
In the right panel, scenario-level trade-offs are reported on the left y-axis, while portfolio-level trade-offs are shown on the right y-axis.
For a given $\gamma$, the portfolio trade-off corresponds to the first term of the objective in \eqref{eq:reasoning-prob}, evaluated at the optimized counterfactual global section and at the considered portfolio.
Instead, for each value of $\gamma$ and scenario $s \in [S]$, the scenario-level trade-off is defined as
\begin{equation}\label{eq:scenario-trade-off}
    R_{\gamma,s}=\sum_{i=1}^5\left(\V_{i5} \M_s \widehat{\bm\mu}_s^\myexogenous\right)^\top\y_i - \gamma\, \y_i^\top \V_{i5} \M_s \widehat{\bm\Sigma}_s^\myexogenous \M_s^\top \V_{i5}^\top \y_i \,.
\end{equation}

The trade-off of the $1/n$ portfolio initially tracks that of the upturn scenario up to $\gamma=0.03$, after which it aligns with the mixed market scenario.
This behavior is consistent with the evolution of the scenario weights in the counterfactual global section (cf. left panel in \cref{fig:reasoning}).
Indeed, for small values of $\gamma$--that is, when we disregard the volatility and focus mainly on returns,--the upturn component dominates, as it yields the highest trade-off.
At $\gamma=0.03$, the upturn and mixed market components have comparable trade-offs and thus weights.
For larger values of $\gamma$, the mixed market scenario outperforms and thus becomes dominant with a weight rapidly converging to \num{1}.

\spara{Question Q3.}
To answer this question, for each learned counterfactual global section and corresponding $\gamma$, we solve Prob.\nb\ref{eq:mvprob} to retrieve the optimal CAN allocation.
Similarly to the case of $1/n$ portfolio, the mean-variance trade-off of the optimal CAN allocation is the first term in \cref{eq:reasoning-prob}, where we replace each $\y_i$ with $\x_i$.
The right panel of \Cref{fig:reasoning} shows that, for $\gamma<0.035$, the mean--variance trade-off of the optimal CAN allocation consistently exceeds that of the $1/n$ portfolio.
Indeed, when risk aversion is low, agents concentrate their investments in the industry portfolios with the highest expected returns, largely disregarding risk, that is, the variance.
Conversely, as $\gamma$ increases, agents place greater emphasis on risk reduction and diversify their investments across multiple industry portfolios.
Consequently, the trade-off of the optimized portfolio approaches that of the $1/n$ portfolio, with the smallest gap occurring at $\gamma=0.04$.
This indicates that the counterfactual global section at $\gamma=0.04$ best approximates, from the CAN perspective, the financial beliefs underlying the $1/n$ investment strategy.
In other words, according to the constructed CAN, it provides the best causal representation of the five investors’ beliefs.
\section{Discussion}\label{sec:discussion}

A first salient feature of CAN framework is the explicit separation between causal structure and probabilistic semantics. 
Structural information is encoded at the level of abstraction relations and network topology, while CK is represented as probability distributions and their transformations. 
This separation allows CANs to support reasoning and learning directly at the level of CK, without committing to a specific parametric or graphical representation of the underlying causal mechanisms.

Second, modeling CK through MCMs yields a highly expressive class of distributions. 
MCMs induce distributions in the set of all finite Gaussian mixtures, endowing the framework with universal approximation capabilities while preserving a tractable categorical description. 
This expressivity provides a theoretical justification for the use of MCMs as building blocks for abstraction and alignment across models, and supports both inductive uses of the framework--by designing CANs with desired structural properties--and deductive ones, in which abstraction relations and network structure are inferred from data, as shown in the financial application in \Cref{sec:finappl}.

Third, the notion of smoothness with respect to a CAN is deliberately defined in a flexible manner. 
Rather than fixing a specific discrepancy measure, smoothness can be instantiated using information-theoretic metrics or more general $\phi$-divergences, depending on the application context. 
This design choice makes the learning procedures adaptable to different domains and data modalities, while leaving the underlying categorical and sheaf-theoretic structure unchanged.
This is in stark contrast with sheaf-theoretic frameworks working on deterministic signals \citep{hansen2019learning}, where the notion of smoothness coincides with that of global section and is tightly coupled to Dirichlet-like energy. 

A further salient feature of the framework is the compositionality of causal abstractions, which arises naturally. 
Constructive linear causal abstractions compose along the edges of a CAN in a principled way, allowing global alignment and consistency properties to be characterized in terms of local relations. 
This compositional structure reflects the categorical perspective underlying the framework and enables learning procedures that operate locally while supporting coherent global reasoning.

Alongside these strengths, this work has several limitations that we plan to address in future works. 
The empirical results are intended to be illustrative rather than fully optimized, as no extensive hyper-parameter tuning was performed. 
This choice likely leads to suboptimal quantitative performance.

Scalability also remains a challenge. 
Our \mcalsep learning algorithm is not designed for high-dimensional settings, either in terms of the number of causal variables within each MCM or the size of the CAN. 
Addressing these issues will require additional algorithmic developments.

Moreover, the learning procedures focus on linear causal abstractions, which are well suited to Gaussian and Gaussian mixture settings. 
Although the categorical framework and most theoretical results are not tied to Gaussianity, extending the learning algorithms to non-Gaussian distributions would require more general abstraction maps and corresponding optimization methods.

Finally, assumptions concerning consistency of the CAN, and the availability of global sections as well as full structural priors, primarily affect the learning component rather than the theoretical framework. 
While relaxing these assumptions would increase the generality of the algorithms, in many application domains, including neuroscience \citep{d2025causal}, mechanistic interpretability of deep learning models \citep{geiger2025causal}, and finance (cf. \Cref{sec:finappl}), meaningful structural prior information is often already available in the data and naturally incorporated into the modeling process.
\section{Conclusions and future works}\label{sec:conclusions}

This paper takes a step toward a principled foundation for collaborative causal AI systems. 
We introduced CAN as a general framework for representing, learning, and reasoning across collections of MCMs. 
By combining a categorical semantics with sheaf-theoretic constructions, CAN provides a principled way to formalize CLCA relations among subjective causal models operating at different levels of granularity, while remaining agnostic to explicit causal graphs, functional forms, interventional data, or jointly sampled observations.

Starting from the categorical formulation of MCM, we provide a series of theoretical results, ranging from the consistency of a CAN--closely related to SEP--to the spectral properties of the connection Laplacian, which characterize the existence of global sections, that is, subjective CK inducing perfect alignment across the network.  
Notably, we establish conditions for the convergence of the diffusion of CK over the CAN to the space of global sections, and characterize the means and covariances of the Gaussian mixtures composing the global section in terms of the eigenvectors of the normalized connection Laplacian.
We then addressed the problem of learning consistent CANs from data, proposing a principled solution that show promising results on synthetic data across different settings.
Finally, we showcase an application of the framework to real-world financial data.

In addition to the extensions discussed in \Cref{sec:discussion}, several directions for future work naturally emerge. 
A first direction involves extending the framework to higher-order topological structures, where the study of associated cohomology may shed light on obstructions to consistency and the existence of global sections.

Beyond the causal setting, both the theoretical apparatus and the proposed method for the CLCA local problems developed in this work are already naturally suited to handle non-causal network sheaves and cosheaves of probabilistic models, a setting that remains understudied in the literature. 
Investigating this class of probabilistic network models represents a promising avenue for future research.

Finally, a deeper categorical analysis of \MCMcat, potentially leveraging more advanced tools from category theory, may further clarify its structural properties and its role as a foundation for abstraction, composition, and learning.

More broadly, this work points to a set of research directions that are still largely underexplored and that may be of interest not only to the causality community, but also to a wider audience in theoretical and applied machine learning.
\medskip

\acks{The work was supported by the SNS JU project 6G-GOALS \citep{strinati2024goal} under the EU’s Horizon program Grant Agreement No 101139232, and by Huawei Technology France SASU under Grant N. Tg20250616041. 
Gabriele D'Acunto and Paolo Di Lorenzo are also with the National Inter-University Consortium for Telecommunications (CNIT), Parma, Italy.}

\appendix
\section{Proofs}\label{app:proofs}

\begin{restated}{Lemma}{lemma:composedCA}
Given two CLCAs from $\mcm{}_i$ to $\mcm{}_j$ and from $\mcm{}_j$ to $\mcm{}_k$, their composition is a valid CLCA.
\end{restated}
\begin{proof}
    Let $\abst_{j i} = \langle \B_{j i}, \V_{j i} \rangle$ be a CLCA from $\mcm{}_i$ to $\mcm{}_j$, and $\abst_{k j} = \langle \B_{k j}, \V_{k j} \rangle$ a CLCA from $\mcm{}_j$ to $\mcm{}_k$.  
    Then,  
    \begin{equation}
    \abst_{k i} \coloneq \langle \B_{k i}, \V_{k i} \rangle = \langle \B_{k j} \B_{j i}, \, \V_{k j} \V_{j i} \rangle \,,
    \end{equation}
    is a valid CLCA.
\end{proof}

\begin{restated}{Theorem}{th:consistencyCAN}
    Let \CAN be a connected CAN.
    Then, \CAN is consistent $\iff$ for every oriented edge $e_{ij} \in \edgeset$, $\V_{ij}$ is the right-inverse of a CLCA adhering to SEP.
\end{restated}
\begin{proof}
    \spara{ $(\implies)$}
    Consistency implies that every oriented loop of arbitrary length $p$ commutes. 
    Take an undirected edge $e_{ij}: i \sim j$, with $d_j < d_i$.
    By CAN construction (cf. Def.\nb\ref{def:CAN}) $\V_{ji}=\V_{ij}^\top$.
    Thus, consistency implies that $\V_{ij}^\top \V_{ij}=\identity_{d_j}$, which corresponds to SEP.
    This holds for all oriented edges $e_{ij} \in \edgeset$.

    \spara{$(\impliedby)$}
    Consider every $\V_{ij} \in \stiefel{d_i}{d_j}$, thus being the right-inverse of a CLCA adhering to SEP.
    Consider an oriented path of arbitrary length, viz. $\mcm{}_j \rightarrow \mcm{}_{j-1} \rightarrow \dots \rightarrow \mcm{}_{i+1}\rightarrow \mcm{}_i$.
    Since the composition of the embeddings along the path amounts to a Stiefel matrix $\V_{ij}$, it follows that closing the loop yields consistency, that is, $\V_{ij}^\top \V_{ij}=\identity_{d_j}$.
    The same holds even when $j$ can reach $i$ by more than one path. 
    Let $\V_{ij}^\prime$ and $\V_{ij}^{\prime\prime}$ be the Stiefel embeddings obtained by composition along two paths from $j$ to $i$, namely $p^\prime$ and $p^{\prime\prime}$. 
    Indeed,
    \begin{equation}
        \begin{aligned}
            \V_{ij}^{\prime^\top} &= \V_{ij}^{{\prime\prime}^\top}\,, \quad \text{(by Lem.\nb\ref{lemma:composedCA})}\\
            \V_{ij}^{\prime^\top}\V_{ij}^{{\prime\prime}} &= \V_{ij}^{{\prime\prime}^\top}\V_{ij}^{{\prime}} = \eye{d_j}\,; \quad \text{(by SEP)}
        \end{aligned}
    \end{equation}
    That is, we can reach $i$ via $p^\prime$ and then go back via $p^{\prime\prime}$, and vice versa. 
\end{proof}

\begin{lemma}\label{lemma:intersection_maps}
    Consider $\V_1 \in \stiefel{i}{h}$ and $\V_2 \in \stiefel{i}{j}$, with $h<j$.
    Let $\Pi_1=\V_1\V_1^\top$ and $\Pi_2=\V_2\V_2^\top$ be the projectors induced by $\V_1$ and $\V_2$, respectively.
    Then, 
    \begin{equation}
        \dim\left(\im{\myP_1}\cap\im{\myP_2}\right)=h \iff \exists \, \V \in \stiefel{j}{h} \mid \V_1 = \V_2 \V\,.
    \end{equation}
\end{lemma}
\begin{proof}
    \spara{$(\implies)$}
    Let $\dim\left(\im{\myP_1}\cap\im{\myP_2}\right)=h$.
    Then, since $h<j$, $\im{\myP_1}\subseteq\im{\myP_2}$. 
    Hence, the columns of $\V_1$ are linear combinations of the columns of $\V_2$, meaning that it exists $\V \in \stiefel{j}{h}$ such that $\V_1 = \V_2 \V$. 
    Specifically, orthonormality of the columns of \V is implied by $\V_1^\top \V_1=\identity_h$.
    
    \spara{$(\impliedby)$}
    Consider it exists $\V \in \stiefel{j}{h}$ such that $\V_1 = \V_2 \V$.
    Then, $\im{\myP_1}\cap\im{\myP_2}=\im{\myP_1}$, and thus $\dim\left(\im{\myP_1}\cap\im{\myP_2}\right)=\dim\left(\im{\myP_1}\right)=h$.
\end{proof}

\begin{restated}{Theorem}{th:kernelCAN}
    Consider a consistent and connected \CAN whose coarsest $\mcm{}_N$ has dimension $h$.
    Then, \CANL has null eigenvalue $\mu_0$ with multiplicity $m(\mu_0)=h$ $\iff$ every $\mcm{}_i$ is reachable from $\mcm{}_{N}$ via an oriented path.
\end{restated}
\begin{proof}
    \spara{ $(\implies)$}
    Assume $m(\mu_0)=h$.
    Recall $\ker{\CANL}=\ker{\CANB^\top}$.
    A vector $\x \in \reall^{\sum_{i\in [N]} d_i}$ in $\ker{\CANB^\top}$ is a concatenation of samples $\x_i\in \reall^{d_i}$ from all the MCMs $\mcm{}_i \in \mcmcollection$ 
    such that for every oriented edge $e_{ij}$,
    \begin{equation}\label{eq:edgesection}
        \CANB(i, e_{ij})^\top \x_i + \CANB(j, e_{ij})^\top \x_j = \zeros_{d_i}\; \implies \;\x_i=\V_{ij}\x_j\,.
    \end{equation}
    Suppose there exists $\mcm{}_u$ not reachable from $\mcm{}_N$ following the \CAN orientation.
    By connectivity, this means that in \CAN there exist at least one $\mcm{}_m$ on an undirected path between $\mcm{}_N$ and $\mcm{}_u$ such that $d_u < d_m$.
    W.l.o.g., consider the path $\mcm{}_u \sim \mcm{}_m \sim \mcm{}_N$.
    By \Cref{eq:edgesection},
    \begin{equation}\label{eq:reachable}
        (i) \; \x_m = \V_{mN}\x_N \quad \text{and} \quad (ii)\; \x_m =\V_{mu}\x_u\,.      
    \end{equation} 
    However, this contradicts the structure of \CAN, implying that $u$ is reachable from $N$ as well.
    Indeed, let $\Pi_u\coloneqq \V_{mu}\V_{mu}^\top$ and $\Pi_N\coloneqq \V_{mN}\V_{mN}^\top$ be the projectors induced by $\V_{mu}$ and $\V_{mN}$, respectively.
    Thus, by Lem.\nb\ref{lemma:intersection_maps} it exists $\V_{uN} \in \stiefel{d_u}{h}$ such that $\V_{mN}=\V_{mu}\V{uN}$.

    \spara{$(\impliedby)$}
    Conversely, if every $\mcm{}_i$, $i \in [N-1]$, is reachable from $\mcm{}_N$ with an oriented path, compositionality of Stiefel matrices allows writing $\x_i = \V_{i N} \x_N$, for all $i \in [N-1]$.
    Hence, \CANL has a nontrivial kernel with $m(\mu_0)=h$ corresponding to the $h$ arbitrary dimensions. 
\end{proof}

\begin{restated}{Lemma}{lemma:fixequiv}
    Let $\chi  \in \mathrm{Fix}\left(\cc{\bm\lambda}{\catidentity{}}{\mathcal{L}}\right)$ with $\bm\lambda \in [0,1]^N$. 
    Then, $\chi=\mathcal{L}(\chi)$.
\end{restated}
\begin{proof}
    Immediate from the definition $\chi_i = \cc{\lambda_i}{\chi_i}{\mathcal{L}(\chi)\at{i}}$, for each $i \in [N]$. 
\end{proof}

\begin{restated}{Theorem}{th:globalsectionCAN}
    Let \CAN be a consistent CAN and $\mathcal{L}: \cochainspace{0}{\graph}{E}\rightarrow \cochainspace{0}{\graph}{E}$ the corresponding Laplacian operator.
    Then,
    \begin{equation}
        \mathrm{Fix}\left(\cc{\bm\lambda}{\catidentity{}}{\mathcal{L}}\right) = \gsspace{\graph}{E} \,, \quad \text{with } \bm\lambda \in [0,1]^N\,.
    \end{equation}    
\end{restated}
\begin{proof}
    \spara{$(\implies)$}
    Let $\chi \in \mathrm{Fix}\left(\cc{\bm\lambda}{\catidentity{}}{\mathcal{L}}\right)$.
    By Lem.\nb\ref{lemma:fixequiv}, $\chi_i=\mathcal{L}(\chi)\at{i}$, for all $i \in [N]$.
    Hence, starting from \Cref{eq:Laplacian_op_local}, for all $i \in [N]$ we have
    \begin{equation}
        \begin{aligned}
            \chi_i &= cc_{\w_j}\left(P_{\V_{ij}(X_j)}^{[J]}\right)\; \text{(embedding edges)} \\
            \text{and} \; \chi_k &= cc_{\w_i}\left(P_{\V_{ki}(X_i)}^{[I]}\right) \;\text{(abstraction edges)}\,.  
        \end{aligned}
    \end{equation}
    These are precisely the conditions in \Cref{eq:global_sec_conditions}, hence
    \begin{equation}\label{eq:fixglobal}
        \chi \in \mathrm{Fix}\left(\cc{\bm\lambda}{\catidentity{}}{\mathcal{L}}\right) \implies \chi \in \gsspace{\graph}{E}  \,. 
    \end{equation}

    \spara{$(\impliedby)$}
    Conversely, let $\chi \in \gsspace{\graph}{E}$.
    By substituting \Cref{eq:global_sec_conditions} in \Cref{eq:Laplacian_op_local}, for an arbitrary node $i$ of \CAN we have 
    \begin{equation}\label{eq:global_to_fix}
        \begin{aligned}
            \mathcal{L}(\chi)\at{i} &= \sum_{e\in \starr{i}^-}\bm\beta_{i,e} cc_{\alpha_e}(P_{X_i},P_{X_i}) + 
            \sum_{e\in \starr{i}^+} \bm\beta_{i,e} cc_{\alpha_e}\left(P_{(\V_{ji}^\top \circ \V_{ji})(X_i)},P_{(\V_{ji}^\top \circ \V_{ji})(X_i)}\right) \\
            &\stackrel{(a)}{=} \sum_{e \in \starr{i}} \bm\beta_{i,e} P_{X_i} = \chi_i\,;
        \end{aligned}
    \end{equation}
    where $(a)$ follows by consistency. 
    Then, by Lem.\nb\ref{lemma:fixequiv}
    \begin{equation}\label{eq:globalfix}
        \chi \in \gsspace{\graph}{E} \implies \chi \in \mathrm{Fix}\left(\cc{\bm\lambda}{\catidentity{}}{\mathcal{L}}\right)\,. 
    \end{equation}
\end{proof}

\begin{restated}{Theorem}{th:globalsec_existence}
    Consider a consistent and connected \CAN, and let $h$ be the dimension of the coarsest $\mcm{}_N$.
    Then,
    \begin{equation}
        \mathrm{Fix}\left(\cc{\bm\lambda}{\catidentity{}}{\mathcal{L}}\right) \neq \emptyset \iff 0< \dim \left( \ker{\CANL} \right)\leq h\,.
    \end{equation}    
\end{restated}
\begin{proof}
    \spara{$(\implies)$}
    Assume $\mathrm{Fix}\left(\cc{\bm\lambda}{\catidentity{}}{\mathcal{L}}\right) \neq \emptyset$.
    Then, by \Cref{th:globalsectionCAN} there exist a $0$-cochain \measure such that \Cref{eq:global_sec_conditions} holds.
    Consider the component of \measure at node $i$, viz. $\measure_i$, and denote the smallest linear subspace of $\reall^{d_i}$ containing the support of $\measure_i$ by 
    \begin{equation}
        \Psi_i = \spn {\supp(\measure_i)}\,.
    \end{equation}
    By \Cref{th:globalsectionCAN}, for each edge $e_{ij} \in \edgeset$, $\Psi_i=\V_{ij} \Psi_j$. 

    Define 
    \begin{equation}
        \mathcal{K} \coloneqq \Bigl\{[\x_1^\top,\dots,\x_N^\top]^\top\in \prod_i \Psi_i \;\mid\; \x_i = \V_{ij}\x_j \,,\, \forall e_{ij}\in\edgeset \Bigr\}\,.    
    \end{equation}
    Since \measure is a non-trivial global section, there exists at least a point
    $[\x_1^\top,\dots,\x_N^\top]^\top$ with $\x_i\in\supp(\measure_i)$ satisfying the above relations, hence $\mathcal{K}\neq\{\zeros\}$.
    By definition, $\mathcal{K}\subseteq \ker(\CANB^\top)$, and therefore $\ker(\CANB^\top)\neq\{\zeros\}$ too.
    Hence, $\dim\left(\ker \CANB^\top \right)=\dim\left(\ker \CANL \right)=k\leq h$ by Lem.\nb\ref{lemma:intersection_maps}.

    \spara{$(\impliedby)$}
    Assume $0<\dim\left(\ker{\CANL} \right)=k\leq h$.
    Then, there exist a nontrivial basis 
    \begin{equation}
        \mathcal{B} \coloneqq \{\vecb_1, \ldots\, ,\vecb_k\}, \quad \vecb_b \in \reall^{\sum_i^N d_i}\,. 
    \end{equation}
    For each node $i$ let
    \begin{equation}
        \Psi_i \coloneqq \spn{\vecb_{1,i}, \ldots, \vecb_{k,i}}, \quad \vecb_{b,i} \in \reall^{d_i}\,,
    \end{equation}
    be the kernel-induced subspace having dimension $\dim \left(\Psi_i\right)=k$.
    Since $\vecb_b \in \ker{\CANL}$, for each $e_{ij} \in \edgeset$, $\vecb_{b,i} = \V_{ij} \vecb_{b,j}$, thus implying $\Psi_i = \V_{ij} \Psi_j$ for all $b \in [k]$.
    Thus, the collection $\{\Psi_i\}$ represents local realizations of the same $k$-dimensional latent vector space, coherently embedded in each node’s ambient space.

    For each node $i$, let $\B_i \in \reall^{d_i \times k}$ be the matrix whose columns are $\vecb_{1,i}, \ldots, \vecb_{k,i}$.
    Then $\im{\B_i}=\Psi_i$ and, for each $e_{ij} \in \edgeset$ it holds $\B_i = \V_{ij}\B_j$.  
    
    Let $\nu$ be any probability distribution over a random vector $X_k \in \reall^k$--e.g., a Gaussian mixture--and define for each node 
    $P_{\B_i(X_k)}$.
    Since $\im{\B_i}=\Psi_i$ and $\Psi_i$ is a linear subspace, the support of $\measure_i$ satisfies $\supp(\measure_i) \subseteq \Psi_i$.
    Additionally, by construction, for each $e_{ij} \in \edgeset$
    \begin{equation}
        P_{\V_{ij}(X_j)}=P_{(\V_{ij}\circ \B_j)(X_k)}=P_{\B_i (X_k)}=\measure_i
    \end{equation}
    where we used $\B_i = \V_{ij}\B_j$.
    Hence, the collection $\measure=\{\measure_i\}$ satisfies the global section conditions in \Cref{eq:global_sec_conditions}, and then $\mathrm{Fix}\left(\cc{\bm\lambda}{\catidentity{}}{\mathcal{L}}\right) \neq \emptyset$.
\end{proof}

\begin{restated}{Lemma}{lemma:sameeigenvalues}
    Consider $\covM_i$ and $\covM_j$ having size $d_i \!\times\! d_i$ and $d_j \!\times\! d_j$, respectively, with $\covM_i\!=\!\V_{ij}\covM_j \V_{ij}^\top$ and $\V_{ij} \!\in\! \stiefel{d_i}{d_j}$.
    Then, $\covM_i$ and $\covM_j$ have the same nonzero eigenvalues.
\end{restated}
\begin{proof}
    Using the eigendecomposition,
    \begin{equation}
        \covM_i\!=\!\V_{ij}\covM_j \V_{ij}^\top\!=\!\left(\V_{ij}\U_j \Lambda_j^{\frac{1}{2}}\right)\!\left(\V_{ij}\U_j \Lambda_j^{\frac{1}{2}}\right)^\top \!=\! \A \A^\top\,.
    \end{equation}
    Then, the proof follows by noticing that $\A^\top \A = \Lambda_j$, and recalling that $\A \A^\top$ and $\A^\top \A$ share the same nonzero eigenvalues. 
\end{proof}

\begin{restated}{Corollary}{cor:same_spectrum}
    Consider $\chi \in \mathrm{Fix}\left(\cc{\bm\lambda}{\catidentity{}}{\mathcal{L}}\right)$.
    Then, component-wise, the covariance matrices of its elements $\measure_i \in \GMM{I}{d_i}$, $i \in [N]$, have the same $k\leq \min_i d_i=h$ nonzero eigenvalues, with the equality holding in case $m(\mu_0)=h$.
\end{restated}
\begin{proof}
    The proof follows by imposing the global section conditions in \Cref{eq:global_sec_conditions} and then applying Lem.\nb\ref{lemma:sameeigenvalues} to each component.
\end{proof}

\begin{restated}{Corollary}{cor:spectral_interlacing_gmm}
    Let $\measurelow \in \GMM{S}{\ell}$, $\measurehigh \in \GMM{S}{h}$, and let \covlow and \covhigh be their mixture covariances (cf. \cref{eq:gmm_sigma}).
    Denote by $0\leq\lambda_1\leq \ldots \leq \lambda_\ell$ the eigenvalues of \covlow, and by $0\leq\kappa_1 \leq \ldots\leq \kappa_h$ those of \covhigh.
    If a CLCA complying with SEP from \measurelow to \measurehigh exists, then
    \begin{equation}\label{eq:spectralCAgmm}
        \lambda_i \leq \kappa_i \leq \lambda_{i + \ell -h}, \quad \forall \,i \in [h]\,.
    \end{equation}
\end{restated}
\begin{proof}
    If the CLCA exists, then $\measurehigh=P_{\V_{\ell h}^\top(X^\ell)}$ thus implying
    \begin{equation}\label{eq:push-mix-mom}
        \begin{aligned}
            \text{\emph{(a)}} \quad \bm\mu^h &= \sum_{s \in [S]} w_s^\ell \V_{\ell h}^\top \bm\mu_s^\ell = \V_{\ell h}^\top \bm\mu^\ell\,,\\
            \text{\emph{(b)}} \quad \covhigh&= \sum_{s \in [S]} w_s^\ell \left(\V_{\ell h}^\top \covlow_s \V_{\ell h} + \left( \V_{\ell h}^\top \bm\mu_s^\ell - \bm\mu^h\right)\left( \V_{\ell h}^\top \bm\mu_s^\ell - \bm\mu^h\right)^\top\right)\\
            \text{by \emph{(a)}}&= \V_{\ell h}^\top \left(\sum_{s \in [S]} w_s^\ell \left( \covlow_s + \left(\bm\mu_s^\ell - \bm\mu^\ell\right) \left(\bm\mu_s^\ell - \bm\mu^\ell\right)^\top\right) \right) \V_{\ell h}\\
            &= \V_{\ell h}^\top \covlow \V_{\ell h}\,.
        \end{aligned}
    \end{equation}
    Then, starting from \emph{(b)} in \cref{eq:push-mix-mom} and following the same steps in the proof of Thm.\nb4.3 in \cite{d2025causal}, the application of the Ostrowski's theorem for rectangular matrices \citep{higham1998modifying} gives \cref{eq:spectralCAgmm}.
\end{proof}

\begin{restated}{Theorem}{thm:convergence}
    Consider a consistent and connected \CAN and let $\mathrm{Fix}\left(\cc{\bm\lambda}{\catidentity{}}{\mathcal{L}}\right) \neq \emptyset$.
    Then, if $a_{ii}\neq0$ for each $i \in [N]$, \Cref{eq:rewritten-dynamics} converges to a global section $\measure^\star$ where:
    \begin{itemize}
        \item $\mathcal{M}^\star$ has elements 
        \begin{equation}
            \bm\mu_i^\star=\vecu_i^i\,; 
        \end{equation}
        where $\vecu^i=[\vecu_1^{i^\top},\ldots,\vecu_N^{i^\top}]^\top \in \reall^d$, $d =\sum_i d_i$, is an eigenvector in the $k$-dimensional kernel of the normalized Laplacian $\ker{\widetilde{\CANL}}$, viz.
        \begin{equation}
            \widetilde{\CANL}= \CAND^{-\frac{1}{2}}\CANL \CAND^{-\frac{1}{2}}\,;
        \end{equation}
        \item $\mathcal{S}^\star$ has elements 
        \begin{equation}
            \covM^\star_n \in \{\covM \in \reall^{d_n \times d_n} \mid \covM=\frac{1}{2}(\vecu^i_n\vecu_n^{j^\top}+\vecu_n^j\vecu_n^{i^\top});\, (i,j) \in [k]\times[k] \text{ and } i\leq j \}\,.
        \end{equation}
    \end{itemize}
\end{restated}
\begin{proof}
    To prove the result, we separately tackle the collection of means $\mathcal{M}(t)$ and that of covariances $\mathcal{S}(t)$ for the $0$-cochain $\measure(t)$ at time $t$.\\
    \spara{Mean.}
    From \cref{eq:rewritten-dynamics}, the dynamics for the $i$-th Gaussian mixture's mean vector is
    \begin{equation}\label{eq:local-mean-dynamics}
        \bm\mu_i(t+1) = a_{ii} \bm\mu_{i}(t)+\sum_{e \in \starr{i}^-}b_{ij}\V_{ij}\bm\mu_j(t) + \sum_{e \in \starr{i}^+}c_{ij}\V_{ji}^\top\bm\mu_j(t)\,.
    \end{equation}
    Define the block matrix $\blockM \in \reall^{d \times d}$ with blocks
    \begin{equation}\label{eq:blockMij}
        \blockM_{ij} = \begin{cases}
            a_{ii} \eye{d_i}, & \quad \text{if } i=j\,;\\
            b_{ij} \V_{ij}, & \quad \text{if } i\sim j \in \edgeset\,;\\
            c_{ij} \V_{ji}^\top, & \quad \text{if } j\sim i \in \edgeset\,;\\
            \zeros_{d_i\times d_j}, & \quad \text{otherwise.} 
        \end{cases}
    \end{equation}
    Then, denoting $\bm\mu(t)\coloneqq [\bm\mu_1^\top(t),\ldots,\bm\mu_N^\top(t)]^\top$ in $\reall^d$ and using \Cref{eq:local-mean-dynamics,eq:blockMij}, we have
    \begin{equation}\label{eq:global-mean-dynamics}
        \bm\mu(t+1) = \blockM \bm \mu(t)\,.
    \end{equation}
    We can bound the eigenvalues $\lambda^{\blockM}$ of \blockM by the Gerschgorin theorem for block matrices \citep{feingold1962block},
    \begin{equation}
        \lambda^{\blockM} \in \bigcup_{i \in [N]}\{\varphi : \norm{\varphi \eye{d_i} - a_{ii}\eye{d_i}}\leq r_i \}\,;
    \end{equation}
    where, exploiting $\norm{\V_{ij}} = 1$ as $\V_{ij} \in \stiefel{d_i}{d_j}$, 
    \begin{equation}\label{eq:ri}
        r_i = \sum_{j\neq i}\norm{\blockM_{ij}}= \sum_{j\neq i}\norm{(b_{ij}+c_{ij})\V_{ij}} = \sum_{j\neq i}(b_{ij}+c_{ij}) = 1-a_{ii}\,.
    \end{equation}
    Then, 
    \begin{equation}
        \abs{\lambda^\blockM - a_{ii}}\leq 1 - a_{ii} \implies -1+2a_{ii}\leq \lambda^\blockM\leq 1\,.
    \end{equation}
    Hence, considering all blocks $i \in [N]$, we obtain the bound
    \begin{equation}\label{eq:bound-eigvals-blockM}
        -1+2 \min_{i\in [N]} a_{ii}\leq \lambda^{\blockM}\leq 1\,.
    \end{equation}
    Specifically, $\lambda^{\blockM}=-1$ can only occur if $a_{ii}=0$ for some $i \in [N]$, which means no autoregressive contribution from $\measure_i(t)$.
    Then, supposing $a_{ii}\neq 0$ for all $i \in [N]$, the dynamics in \Cref{eq:global-mean-dynamics} is coercive on the eigenvectors corresponding to $\lambda^\blockM=1$.
    To characterize the latter eigenvectors, notice that
    \begin{equation}\label{eq:blockM-laplacian}
        \blockM = \eye{d} - \sum_{i,j} w_{ij} \mathbb{P}^i \widetilde{\CANL} \mathbb{P}^j\,;
    \end{equation}
    where 
    \begin{equation}
        w_{ij}=\begin{cases}
            1-a_{ii}, & \quad \text{if } i=j\,;\\
            b_{ij}, & \quad \text{if } i\sim j \in \edgeset\,;\\
            c_{ij}, & \quad \text{if } j\sim i \in \edgeset\,;\\
            0, & \quad \text{otherwise;} 
        \end{cases}
    \end{equation}
    and $\mathbb{P}^i$ being a selection matrix with block $\mathbb{P}_{ii}^i=\eye{d_i}$ and zero elsewhere.
    As by assumption \cref{th:globalsec_existence} holds, $\dim\ker{\CANL}=k$.
    Let $\vecv^i \in \reall^d$ be an eigenvectors of $\CANL$ corresponding to the zero eigenvalue, then $\vecu^i=\CAND^{\frac{1}{2}}\vecv^i$ is an eigenvector of $\widetilde{\CANL}$ in \Cref{eq:normalized-Laplacian} with null eigenvalue.
    As such, for each $i\sim j \in \edgeset$, it holds $\vecu_i^i=\V_{ij}\vecu_j^i$.
    Then, starting from \Cref{eq:blockM-laplacian} and denoting by $\vecu \in \reall^d$ an eigenvector in the kernel of $\widetilde{\CANL}$, for each $i \in [N]$,
    \begin{equation}
        \begin{aligned}
            \blockM \vecu\at{i}&=\vecu^i - \sum_{j \in [N]}w_{ij}\CANLl_{ij}\vecu^j \\
            &=\vecu^i - (1-a_{ii})\vecu^i + \sum_{j\neq i}(b_{ij}+c_{ij})\CAND^{-\frac{1}{2}}_{ii}\CANA_{ij} \CAND^{-\frac{1}{2}}_{jj}\vecu^{j}\\
            &=a_{ii}\vecu^i + \sum_{j\neq i}(b_{ij}+c_{ij})\CAND^{-\frac{1}{2}}_{ii}\V_{ij} \CAND^{-\frac{1}{2}}_{jj}\vecu^{j}= (a_{ii}+\sum_{j\neq i}(b_{ij}+c_{ij})) \vecu^i=\vecu^i\,.
        \end{aligned}
    \end{equation}
    Hence, \vecu is an eigenvector of \blockM corresponding to the eigenvalue 1, thus giving \Cref{eq:mu-i-star}.\\
    \spara{Covariance.}
    From \cref{eq:rewritten-dynamics}, the dynamics for the $i$-th Gaussian mixture's covariance is
    \begin{equation}\label{eq:local-cov-dynamics}
        \covM_i(t+1) = a_{ii} \covM_{i}(t)+\sum_{e \in \starr{i}^-}b_{ij}\V_{ij}\covM_j(t)\V_{ij}^\top + \sum_{e \in \starr{i}^+}c_{ij}\V_{ji}^\top\covM_j(t)\V_{ji}\,.
    \end{equation}
    For each $i \in [N]$, let $\bm\sigma_i(t)$ be equal to the column-wise vectorization of $\covM_i(t)$. 
    Then, exploiting $\myvec{\A\B\C}=(\C^\top \otimes \A)\myvec{\B}$, from \cref{eq:local-cov-dynamics} we get
    \begin{equation}\label{eq:local-cov-dynamics-vec}
        \bm\sigma_i(t+1) = a_{ii}\bm\sigma_i(t) + \sum_{e \in \starr{i}^-}b_{ij}(\V_{ij}\otimes\V_{ij})\bm\sigma_j(t) + \sum_{e \in \starr{i}^+}c_{ij}(\V_{ji}^\top\otimes \V_{ji}^\top)\bm\sigma_j(t)\,.
    \end{equation}
    Let $\bm\sigma(t)\coloneqq [\bm\sigma_1^\top(t),\ldots,\bm\sigma_N^\top(t)]^\top$ in $\reall^{d^s}$, where $d^s=d^2$, and define the block matrix $\blockMl \in \reall^{d^s\times d^s}$ with blocks
    \begin{equation}
        \blockMl_{ij}=\begin{cases}
            a_ii\eye{d^2}, &\quad\text{if $i=j$}\,;\\
            b_{ij}(\V_{ij}\otimes\V_{ij}), &\quad\text{if } i\sim j \in \edgeset\,;\\
            c_{ij}(\V_{ji}^\top\otimes\V_{ji}^\top), & \quad \text{if } j\sim i \in \edgeset\,;\\
            \zeros_{d_i\times d_j}, & \quad \text{otherwise.}
        \end{cases}
    \end{equation}
    Then, the overall dynamics for the covariances of the Gaussian mixtures is
    \begin{equation}\label{eq:global-cov-dynamics}
        \bm\sigma(t+1)=\blockMl\bm\sigma(t)\,.
    \end{equation}
    At this point define $\CANDl \in \reall^{d^s \times d^s}$ block-diagonal with blocks $\CANDl_{ii}=\mathrm{deg}(i)\eye{d_i^2}$, where $\mathrm{deg}(i)=|\starr{i}|$ is the degree of node $i$.
    Additionally, define the symmetric block matrix $\CANAl \in \reall^{d^s \times d^s}$ with blocks
    \begin{equation}
        \CANAl_{ij}=\begin{cases}
            \V_{ij}\otimes\V_{ij}, &\quad\text{if } i\sim j \in \edgeset\,;\\
            \V_{ji}^\top\otimes\V_{ji}^\top, & \quad \text{if } j\sim i \in \edgeset\,;\\
            \zeros_{d_i\times d_j}, & \quad \text{otherwise;}
            \end{cases}
    \end{equation}
    and let $\widetilde{\CANL}^{\ell} = \eye{d^s}-{\CANDl}^{-\frac{1}{2}}\CANAl{\CANDl}^{-\frac{1}{2}}$.
    Then, denoting by $\mathbb{P}^{i^\ell} \in \reall^{d^s\times d^s}$ the block-diagonal selection matrix with $\mathbb{P}^{i^\ell}=\eye{d_i^2}$ and zero elsewhere,
    we have
    \begin{equation}\label{eq:blockMlif-laplacian}
        \blockMl = \eye{d^s}-\sum_{i,j}w_{ij} \mathbb{P}^{i^\ell} \widetilde{\CANL}^{\ell}_{ij} \mathbb{P}^{j^\ell}\,.
    \end{equation}
    By exploiting that $\norm{\V_{ij}\otimes\V_{ij}}=\norm{\V_{ij}}=1$ and applying the Gerschgorin theorem as above, we get the following bounds for the eigenvalues:
    \begin{equation}\label{eq:bound-eigvals-blockMl}
        -1+2 \min_{i\in [N]} a_{ii}\leq \lambda^{\blockMl}\leq 1\,.
    \end{equation}
    Then, considering $a_{ii}\neq0$ for all $i \in [N]$, \Cref{eq:global-cov-dynamics} is coercive on the eigenspace corresponding to the eigenvalue $\lambda^{\blockMl}=1$.
    To characterize the latter, consider again the $d$-dimensional eigenvector $\vecu^i=[\vecu_1^{i^\top},\ldots,\vecu_N^{i^\top}]^\top$ belonging to the kernel of $\widetilde{\CANL}$.
    Since \Cref{th:globalsec_existence} holds, we have $k$ of these eigenvectors.
    Let us build $k(k+1)/2$ vectors  
    \begin{equation}
        \vecu^{ij} = \frac{1}{2} (\vecu^i \otimes \vecu^j + \vecu^j \otimes \vecu^i)\,;
    \end{equation}
    where $(i,j) \in [k]\times[k]$ and $i\leq j$.
    Then, it can be shown that those $\vecu^{ij}$ form a basis for the eigenspace of the eigenvalue 1 of \blockMl:
    \begin{equation}
        \begin{aligned}
            \blockMl \vecu^{ij}\at{n} &= \vecu^{ij}_n - (1-a_{nn}) \vecu^{ij}_n + \sum_{m\neq n}w_{nm} {\CANDl}^{-\frac{1}{2}}_{nn} \CANAl_{nm} {\CANDl}^{-\frac{1}{2}}_{mm} \vecu^{ij}_m \\
            &\stackrel{(a)}{=} a_{nn} \vecu^{ij}_n + \sum_{m\neq n}(b_{nm}+c_{nm})\vecu^{ij}_n = \vecu^{ij}_n, \quad \,\forall n \in [N]\,;
        \end{aligned}    
    \end{equation}
    where in $(a)$ we exploit the Kronecker product property $(\A \otimes \B)(\veca \otimes \vecb)=(\A \veca) \otimes (\B\vecb)$ and the fact that both $\vecu^i$ and $\vecu^j$ belong to $\ker{\widetilde{\CANL}}$.
    Recognizing that, for each $n$, 
    \begin{equation}
        \vecu^{ij}_n = \frac{1}{2}\left(\myvec{\vecu^i_n\vecu_n^{j^\top} + \vecu_n^j\vecu_n^{i^\top}}\right)=\myvec{\covM_n^\star}\,,
    \end{equation}
    gives \Cref{eq:sigma-i-star}.
\end{proof}

\section{Solution of Prob.~\texorpdfstring{\eqref{eq:reasoning-prob}}{(\ref{eq:reasoning-prob})}}
\label{app:sol-reasoning-prob}

Let $\veca_{s,i}\coloneqq\M_s^\top \V_{i5}^\top \y_i \in \reall^{d_5}$.
Starting from Prob.\nb\eqref{eq:reasoning-prob}, the first term in the objective can be rewritten as
\begin{equation}\label{eq:rewrite-first-term-reasoning}
    \begin{aligned}
        & \sum_{i=1}^5 \sum_{s=1}^3 w_s \left(\left(\V_{i5} \M_s \bm\mu_s^\myexogenous\right)^\top\y_i - \gamma\, \y_i^\top \V_{i5} \M_s \bm\Sigma_s^\myexogenous \M_s^\top \V_{i5}^\top \y_i \right) = \\
        & =\sum_s w_s \sum_i \left(\bm\mu_s^{\myexogenous^\top} \veca_{s,i} - \gamma \,\veca_{s,i}^\top \bm\Sigma_s^\myexogenous \veca_{s,i}\right)=\\
        & =\sum_s w_s \sum_i \left(\bm\mu_s^{\myexogenous^\top} \veca_{s,i} - \gamma \,\sum_{j=1}^{d_5} \sigma_{s,j}^\myexogenous [\veca_{s,i}]_j^2  \right)=\\
        & =\sum_s w_s \left(\bm\mu_s^{\myexogenous^\top} \underbrace{\sum_i \veca_{s,i}}_{\vecb_s \in \reall^{d_5}} - \gamma\, \sum_j \sigma_{s,j}^\myexogenous \underbrace{\sum_i [\veca_{s,i}]_j^2}_{\vecc_s \in \reall^{d_5}}  \right)=\\
        & = \sum_s w_s \left(\bm\mu_s^{\myexogenous^\top} \vecb_s - \gamma \,\bm\sigma_s^{\myexogenous^\top} \vecc_s \right)\,.
    \end{aligned}
\end{equation}
Exploiting \Cref{eq:rewrite-first-term-reasoning}, Prob. \eqref{eq:reasoning-prob} becomes
\begin{equation}\label{eq:reasoning-prob-equiv}
    \begin{aligned}
        \max_{\mathbf{w} \in \Delta_3, \{\bm\mu_s^\myexogenous, \bm\Sigma_s^\myexogenous\}} & \quad \sum_s w_s \left(\bm\mu_s^{\myexogenous^\top} \vecb_s - \gamma \,\bm\sigma_s^{\myexogenous^\top} \vecc_s \right) - \lambda_w \sum_s w_s \log w_s \, \\
        \text{subject to} & \quad \mu_{s,j}^\myexogenous \in [(1-\varphi) \mu_{s,j}, (1+\varphi)\mu_{s,j}],\quad \forall\,(s,j) \in [3]\times[d_5]\,,\\
        & \quad \sigma_{s,j}^\myexogenous \in [(1-\varphi) \sigma_{s,j}, (1+\varphi)\sigma_{s,j}],\quad \forall\,(s,j) \in [3]\times[d_5]\,.
    \end{aligned}
    \tag{$\widetilde{\text{P4}}$}
\end{equation}
Prob. \eqref{eq:reasoning-prob-equiv} is not jointly convex.
However, when $\mathbf{w}$ is fixed, the subproblem separates along $s$ and is linear and bounded in $\bm\mu_s^\myexogenous$ and $\bm\Sigma_s^\myexogenous$.
Notably, the optimal values for $\bm\mu_s^\myexogenous$ and $\bm\Sigma_s^\myexogenous$ do not depend on $w_s$ since it is nonnegative. Specifically, for $j \in [d_5]$, we have
\begin{equation}\label{eq:solution-mu}
    \widehat{\bm\mu}_{s,j}^{\myexogenous}=\begin{cases}
        (1+\varphi)\mu_{s,j} &\quad \text{if } \vecb_{s,j}>0\,,\\
        (1-\varphi)\mu_{s,j} &\quad \text{if } \vecb_{s,j}<0\,,\\
        [(1+\varphi)\mu_{s,j}, (1-\varphi)\mu_{s,j}] &\quad \text{if } \vecb_{s,j}=0\,;
    \end{cases}
\end{equation}
and 
\begin{equation}\label{eq:solution-sigma}
    \widehat{\bm\sigma}_{s,j}^{\myexogenous}=\begin{cases}
        [(1+\varphi)\sigma_{s,j}, (1-\varphi)\sigma_{s,j}] &\quad \text{if } \vecc_{s,j}=0\,,\\
        (1-\varphi)\sigma_{s,j} &\quad \text{otherwise}\,.
    \end{cases}
\end{equation}
Setting $\bm\mu_s^\myexogenous$ and $\bm\Sigma_s^\myexogenous$ to the optimal values above for each $s \in [3]$, and recalling \Cref{eq:scenario-trade-off}, the subproblem in $\mathbf{w}$ is 
\begin{equation}\label{eq:subproblem-w}
    \begin{aligned}
        \max_{\mathbf{w} \in \Delta_3} \quad \sum_s w_s R_{\gamma,s} - \lambda_w \sum_s w_s \log w_s\,.
    \end{aligned}
\end{equation}
Prob. \eqref{eq:subproblem-w} is $\lambda_w$-strongly concave, and admits a well-known closed-form solution \citep{cover1999elements}
\begin{equation}
    \widehat{w}_s = \frac{e^{R_{\gamma,s}/\lambda_w}}{\sum_s e^{R_{\gamma,s}/\lambda_w}}\,,\quad \text{for each } s \in [3]\,.
\end{equation}
\section{Hyperparameters}\label{app:hyperparams}


\begin{table}[h]
\caption{\mcalsep algorithm.}
\label{tab:mcalsep}
    \centering
    \small
    \begin{tabular}{lrrrrrrrrrrr}
         Experiment&  $\lambda_\omega$&  $\rho$&   $\eta$&$\tau_{a}$&  $\tau_{r}$&  $\tau_\mathrm{GD}$&  $\tau_{\bm \Omega}$&  $\tau_{\V}$&  $T_\mathrm{GD}$&  $T_\mathrm{SOC}$& $T_\mathrm{GL}$\\
 CLCA Learning \\(pd case, \Cref{fig:mvG_synth_data})& $0.05$& $1$& $10^{-2}$& $10^{-4}$& $10^{-4}$& $10^{-3}$& $10^{-3}$& $10^{-3}$& $10$& $1000$&$1000$\\
 CLCA Learning \\(GM case, \Cref{fig:gmm_synth_data})& $0.05$& $1$& $10^{-2}$& $10^{-4}$& $10^{-4}$& $10^{-3}$& $10^{-3}$& $10^{-3}$& $10$& $100$&$100$\\
 CLCA Learning \\(GM case, \Cref{fig:B_not_given})& $0.05$& $1$& $10^{-2}$& $10^{-4}$& $10^{-4}$& $10^{-3}$& $10^{-3}$& $10^{-3}$& $10$& $100$&$100$\\
 CAN Learning \\(GM case, \Cref{fig:CAN_learning})& $0.05$& $1$& $10^{-2}$& $10^{-3}$& $10^{-3}$& $10^{-3}$& $10^{-2}$& $10^{-2}$& $10$& $100$&$100$\\
 CAN Learning \\(financial data, \Cref{fig:finappl-learned-ns})& $0.05$& $1$& $10^{-2}$& $10^{-4}$& $10^{-4}$& $10^{-3}$& $10^{-3}$& $10^{-3}$& $10$& $100$&$100$\\
    \end{tabular}
\end{table}

\Cref{tab:mcalsep} provide the hyperparameters of  \mcalsep used in the experiments in \Cref{sec:empirical_assessment,sec:finappl}.
Additionally, the results in \Cref{fig:CAN_learning,fig:finappl-learned-ns} were obtained by setting, in \cref{alg:search_proc}, the loss tolerance $\tau_\mathrm{MPW} = 10^{-2}$, in accordance with the results in \Cref{fig:B_not_given}. 

\bibliography{bibliography}

\end{document}